\documentclass{article}

\usepackage[pdftex]{graphicx, color}
\usepackage{amsthm,amsmath,amsfonts,amssymb}
\usepackage{bm}
\usepackage[title]{appendix}
\usepackage[colorlinks=true, linkcolor=blue, filecolor=blue, urlcolor=blue, citecolor=red]{hyperref}
\usepackage{authblk}
\usepackage{indentfirst}
\usepackage{geometry}
\geometry{left=30mm,right=30mm,top=30mm,bottom=30mm}

\newtheorem{theorem}{Theorem}[section]
\newtheorem{lemma}{Lemma}[section]
\allowdisplaybreaks[2]

\makeatletter
\long\def\@makecaption#1#2{%
  \normalsize
  \vskip\abovecaptionskip
  \sbox\@tempboxa{#1: #2}%
  \ifdim \wd\@tempboxa >\hsize
    #1: #2\par
  \else
    \global \@minipagefalse
    \hb@xt@\hsize{\hfil\box\@tempboxa\hfil}%
  \fi
  \vskip\belowcaptionskip}
\makeatother

\title{Goodness-of-fit Test for Latent Block Models}
\author[1]{Chihiro Watanabe\thanks{watanabe-chihiro763@g.ecc.u-tokyo.ac.jp}} 
\author[1,2]{Taiji Suzuki\thanks{taiji@mist.i.u-tokyo.ac.jp}}
\affil[1]{{\normalsize Graduate School of Information Science Technology, The University of Tokyo, Tokyo, Japan}}
\affil[2]{{\normalsize Center for Advanced Intelligence Project (AIP), RIKEN, Tokyo, Japan}}
\date{}

\begin{document}
\maketitle

\begin{abstract}
Latent block models are used for probabilistic biclustering, which is shown to be an effective method for analyzing various relational data sets. However, there has been no statistical test method for determining the row and column cluster numbers of latent block models. Recent studies have constructed statistical-test-based methods for stochastic block models, which assume that the observed matrix is a square symmetric matrix and that the cluster assignments are the same for rows and columns. In this study, we developed a new goodness-of-fit test for latent block models to test whether an observed data matrix fits a given set of row and column cluster numbers, or it consists of more clusters in at least one direction of the row and the column. To construct the test method, we used a result from the random matrix theory for a sample covariance matrix. We experimentally demonstrated the effectiveness of the proposed method by showing the asymptotic behavior of the test statistic and measuring the test accuracy. 
\end{abstract}

\section{Introduction}

Block modeling \cite{Hartigan1972, Arabie1978} is known to be effective in representing various relational data sets, such as the data sets of movie ratings \cite{Shan2008}, customer-product transactions \cite{Shan2008}, congressional voting \cite{Keribin2015}, document-word relationships \cite{Dhillon2001}, and gene expressions \cite{Pontes2015}. 
Latent block models or LBMs \cite{Govaert2003} are used for probabilistic biclustering of such relational data matrices, where rows and columns represent different objects. 
For instance, suppose that a matrix $A = (A_{ij})_{1\leq i \leq n, 1 \leq j \leq p} \in \mathbb{R}^{n \times p}$ represents the relationship between users and movies, where entry $A_{ij}$ is the rating of the $j$th movie by the $i$th user. 
In LBMs, we assume a regular-grid block structure behind the observed matrix $A$; i.e., both rows (users) and columns (movies) of matrix $A$ are simultaneously decomposed into latent clusters. A block is defined as a combination of row and column clusters, and entries of the same block in matrix $A$ are supposed to be i.i.d. random variables. 

An open problem in using LBMs is that there has been no statistical criterion for determining the numbers of row and column clusters. Recently, statistical-test-based approaches \cite{Bickel2016, Lei2016, Hu2020} have been proposed for estimating the cluster number of stochastic block models (SBMs) \cite{Holland1983}. 
SBMs are similar to LBMs in the sense that they assume a block structure behind an observed matrix; however, they are based on different assumptions from LBMs that an observed matrix is a square symmetric matrix and that the cluster assignments are the same for rows and columns \cite{Mariadassou2015}. 
In regard to the LBM setting, no statistical method has been constructed to determine row and column cluster numbers. 

Aside from the test-based methods, several model selection approaches have been proposed based on cross-validation \cite{Chen2018} or an information criterion \cite{Keribin2012, Peixoto2013, Keribin2015}. However, these approaches have several limitations. (1) First, they cannot provide knowledge about the reliability of the result besides the finally estimated cluster numbers. Rather than minimizing the generalization error, in some cases, it is more appropriate to provide a probabilistic guarantee in reliability for the purpose of knowledge discovery. 
(2) Second, both the cross-validation-based and information-criterion-based methods depend on the clustering algorithm used. For instance, we can employ the Bayesian information criterion (BIC) for estimating the marginal likelihood only if the Fisher information matrix of the model is regular, which is not the case for block models. 
Constructing an information criterion that estimates the expectation of the generalization error for a wider class of models is generally difficult. 
(3) Finally, the above methods require relatively large computational complexity. Computation of an information criterion requires the process of approximating the posterior distribution by the Markov chain Monte Carlo (MCMC) method, and cross-validation requires the iterative calculation of the test error with different sets of partitions of the training and test data sets. 

In this study, we proposed a new statistical test method for LBMs. To construct a hypothesis test with a theoretical guarantee, we used a result from random matrix theory. Recent studies on random matrix theory have revealed the asymptotic behavior of singular values of an $n \times p$ random matrix \cite{Geman1980, Silverstein1985, Yin1988, Bai1993, Johansson2000, Johnstone2001, Soshnikov2002, Peche2009, Pillai2014, Bao2015, Ding2018}. Here, we assume that each entry $Z_{ij}$ of matrix $Z$, which is given by $Z_{ij} = (A_{ij} - P_{ij})/\sigma_{ij}$ (which is computed by the original matrix $A$, its block-wise mean $P$ and standard deviation $\sigma$) follows a distribution with a sub-exponential decay. From the result in \cite{Pillai2014}, the normalized maximum eigenvalue of $Z^{\top} Z$ converges in law to the Tracy-Widom distribution with index $1$, under the above sub-exponential condition. 
Based on this result, we constructed a goodness-of-fit test for a given set of row and column cluster numbers of an LBM, using the maximum singular value of matrix $\hat{Z}$, which is an estimator of the matrix $Z$. 
We proved that under the null hypothesis (i.e., observed matrix $A$ consists of a given set of row and column cluster numbers), the proposed test statistic $T$ converges in law to the Tracy-Widom distribution with index $1$ (Theorem \ref{th:realizable}). We also showed that under the alternative hypothesis, test statistic $T$ increases in proportion to $m^{\frac{5}{3}}$ with a high probability, where $m$ is a number proportional to the matrix size (Theorems \ref{th:unrealizable_lower} and \ref{th:unrealizable_upper}). 

The proposed method solves the limitations of other model selection approaches. (1) Our statistical test method enables us to obtain knowledge about the reliability of the test results. When testing a given set of row and column cluster numbers, we can explicitly set the probability of Type I error (or false positive) as a significance level $\alpha$. 
(2) Unlike the other model selection methods, the proposed method does not depend on the clustering algorithm as long as it satisfies the consistency condition (Section \ref{sec:method}). It only uses the output of a clustering algorithm to test a given set of cluster numbers; there is no need to modify the test method according to the clustering algorithm. 
(3) The proposed test method requires relatively small computational complexity. It does not require the MCMC procedure or partitioning into the training and test data sets. 
For these reasons, the proposed test-based method can be widely used for the purpose of knowledge discovery. 

The next sections consist of the detailed explanation of the proposed test method for LBMs. In Section \ref{sec:method}, we describe the proposed goodness-of-fit test and its theoretical guarantee with the assumptions required for the problem setting. 
Next, we briefly review the related works and their differences from the proposed method in Section \ref{sec:related}. 
The main results are presented in Section \ref{sec:statistic}, where we prove the asymptotic properties of the proposed test statistic. 
In Section \ref{sec:experiments}, we experimentally demonstrate the effectiveness of the proposed test method by showing the asymptotic behavior of the test statistic and calculating the test accuracy. We discuss the results and limitations of the proposed method in Section \ref{sec:discussion} and conclude the paper in Section \ref{sec:conclusion}. 


\section{Problem setting and statistical model for goodness-of-fit test for latent block models}
\label{sec:method}

Let $A \in \mathbb{R}^{n \times p}$ be an $n \times p$ observed matrix. We assume that each entry of matrix $A$ is independently generated, given its row and column clusters. Let $(K, H)$ be the null set of cluster numbers for rows and columns of an observed matrix $A$, which is unknown in advance. We denote the cluster indices of the $i$th row and the $j$th column of matrix $A$ as $g^{(1)}_i \in \{1, \dots, K\}$ and $g^{(2)}_j \in \{1, \dots, H\}$, respectively. We assume that each entry of matrix $A$ is independently subject to a distribution with block-wise mean $P$ and block-wise standard deviation $\sigma$: 
\begin{align}
\label{eq:LBM}
&P = (P_{ij})_{1\leq i \leq n, 1 \leq j \leq p}, \ \ \ \ \ 
P_{ij} = B_{g^{(1)}_i g^{(2)}_j}.  \nonumber \\
&\sigma = (\sigma_{ij})_{1\leq i \leq n, 1 \leq j \leq p}, \ \ \ \ \ 
\sigma_{ij} = S_{g^{(1)}_i g^{(2)}_j}.  \nonumber \\
&A = (A_{ij})_{1\leq i \leq n, 1 \leq j \leq p}, \ \ \ \ \ 
\mathbb{E} [A_{ij}] = P_{ij}, \ \ \ \ \ 
\mathbb{E} [(A_{ij} - P_{ij})^2] = \sigma_{ij}^2, 
\end{align}
where $B_{k h}$ and $S_{k h} > 0$, respectively, are the mean and the positive standard deviation of entries in the $(k, h)$th null block under the null hypothesis. 

In this paper, we propose a goodness-of-fit test for selecting the cluster numbers $(K, H)$ from observed matrix $A$. In such a test, we test whether $(K, H)$ is equal to a given set of cluster numbers $(K_0, H_0)$ or at least one of the given row and column cluster numbers $K_0$ or $H_0$ is smaller than the null cluster numbers $K$ or $H$. In other words, the null (N) and alternative (A) hypotheses are given by 
\begin{eqnarray}
\label{eq:hypothesis}
\mathrm{(N) :}\ (K, H) = (K_0, H_0), \ \ \ \ \ \ \ \ \ \ 
\mathrm{(A) :}\ K > K_0\ \mathrm{or}\ H > H_0. 
\end{eqnarray}
By sequentially testing the cluster numbers in the following order (Figure \ref{fig:order_test}), we can select the cluster numbers of a given observed matrix $A$. 
\begin{enumerate}
\item Test $(K_0, H_0) = (1, 1)$. 
\item Test $(K_0, H_0) = (1, 2), (2, 1)$. 
\item Test $(K_0, H_0) = (1, 3), (2, 2), (3, 1)$. 
\item $\cdots$
\item Test $(K_0, H_0) = (1, L), (2, L-1), \dots, (L, 1)$. 
Let $(\hat{K}, \hat{H})$ be the row and column cluster numbers where the null hypothesis is accepted and $\hat{K} + \hat{H} = L+1$ holds. The selected set of cluster numbers is $(\hat{K}, \hat{H})$. 
\end{enumerate}
Based on the above sequentially ordered test, selection of the cluster numbers requires $(K+H)(K+H-1)/2$ tests at most. 
\begin{figure}[t]
  \centering
  \includegraphics[width=0.5\hsize]{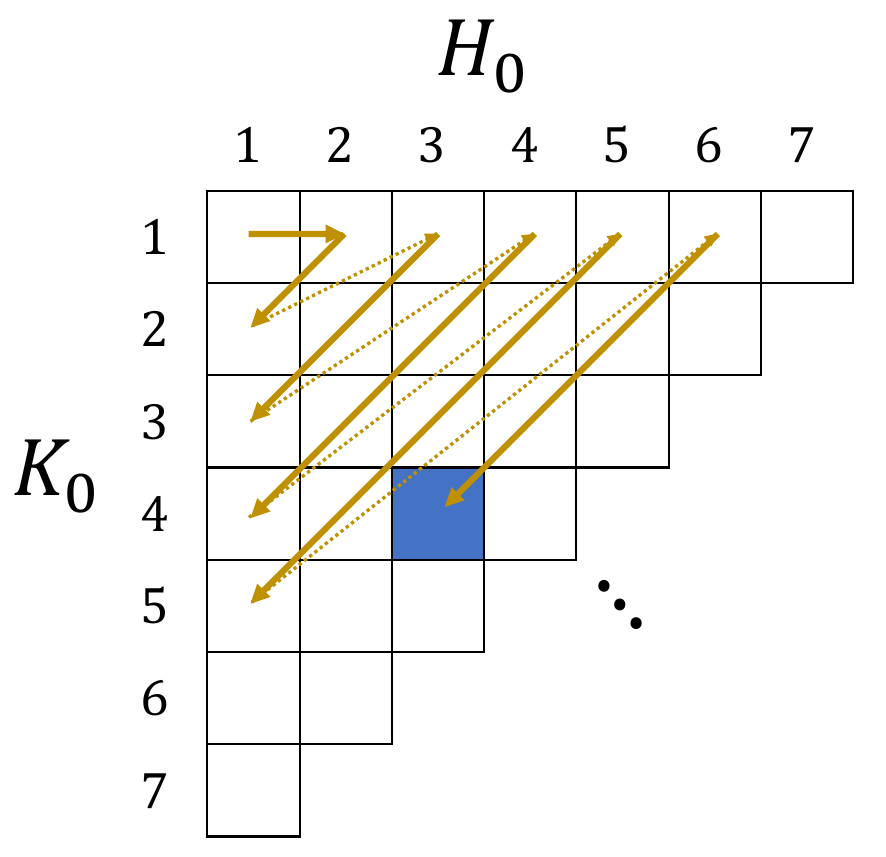}
  \caption{The sequential order for testing row and column cluster numbers. For example, let the blue entry $(4, 3)$ be the null cluster numbers $(K, H)$. Based on this sequentially ordered test, the given cluster numbers $(K_0, H_0)$ are always unrealizable (that is, at least one of $K>K_0$ or $H>H_0$ holds), until it reaches to $(K, H)$. } 
  \label{fig:order_test}
\end{figure}

\paragraph{Assumptions.} 
Throughout this paper, we make the following assumptions to derive the test statistics: 
\begin{enumerate}
\renewcommand{\theenumi}{(\roman{enumi})}
\item We assume that a distribution of $Z_{ij}$, which is given by $Z_{ij} = (A_{ij} - P_{ij})/\sigma_{ij}$ as in (\ref{eq:Z_true}) later, has a sub-exponential decay. That is, there exists some $\vartheta >0$ such that for $x>1$, $\mathrm{Pr} \left( \left| Z_{ij} \right| > x \right) \leq \vartheta^{-1}\exp (-x^{\vartheta})$. From this assumption, note that for any $n^{\mathrm{M}} \in \mathbb{N}$, the $n^{\mathrm{M}}$th moment of a random variable $Z_{ij}$ is finite (i.e., $\mathbb{E} [Z_{ij}^{n^{\mathrm{M}}}] < \infty$). 
\item We denote the number of rows and columns of matrix $A$ as $n$ and $p$, respectively. We assume that both $n$ and $p$ increase in proportion to some sufficiently large number $m$ (i.e., $n, p \propto m$). 
\item Let $K$ and $H$, respectively, be the minimum row and column cluster numbers to represent the block structure of observed matrix $A$ under the null hypothesis. We assume that both $K$ and $H$ are finite constants that do not increase with the matrix sizes $n$ and $p$. 
We also assume that the minimum row and column sizes of a block in the null block structure, which we denote as $n_{\mathrm{min}}$ and $p_{\mathrm{min}}$, respectively, satisfy $n_{\mathrm{min}} = \Omega_p (m)$ and $p_{\mathrm{min}} = \Omega_p (m)$, where we used the following notation: 
\begin{align}
&X = \Omega_p \left( f(m) \right). \nonumber \\
&\Leftrightarrow \forall \epsilon>0, \ \exists C>0, M>0, \ \forall m \geq M, \ \mathrm{Pr} \left(C f(m) \leq X \right) \geq 1-\epsilon. 
\end{align}
In other words, we assume that with high probability, there is no ``too small'' block in matrix $A$. 
\label{asmp:block_size}
\item If the given set of cluster numbers $(K_0, H_0)$ is equal to the null cluster numbers $(K, H)$, then we call it a \textit{realizable} case. Otherwise, we call it an \textit{unrealizable} case ($K>K_0$ or $H>H_0$). 
In Section \ref{sec:statistic}, we see that Theorems \ref{th:unrealizable_lower} and \ref{th:unrealizable_upper} guarantee the behavior of the test statistic $T$ in unrealizable cases. 
For now, there is no way to detect the cases where $(K < K_0) \cap (H \leq H_0)$ or $(K \leq K_0) \cap (H < H_0)$ holds, and to cope with such settings is beyond the scope of this paper. 
\item In the realizable case, we assume that a clustering algorithm is \textit{consistent}, that is, the probability that it outputs the correct block structure converges to $1$, in the limit of $m \to \infty$. By using this assumption, the proposed method does not depend on a specific clustering algorithm. Several clustering algorithms including \cite{Flynn2020, Ames2014, Brault2016} have been proven to be consistent. 
\label{asmp:consistency}
\end{enumerate}


\section{Relation to existing works}
\label{sec:related}

In this section, we briefly review the related works and explain the differences between them and the proposed method. 

\subsection{Model selection for block models}

\paragraph{Statistical-test-based methods (for SBM)} 
Recently, several methods have been proposed for testing the properties of a given observed matrix in relation to SBMs \cite{Bickel2016, Lei2016, Karwa2016, Hu2020, Yuan2018}. 
Particularly, the methods proposed in \cite{Bickel2016, Lei2016, Hu2020} have enabled us to estimate the number of blocks for SBMs. However, these methods differ from ours in the problem setting; they can be applied only to an SBM setting, where an observed matrix is a square symmetric matrix, and the cluster assignments are the same for rows and columns. There has been no method for estimating the block number for LBMs, where rows and columns (not necessarily square) of an observed matrix are simultaneously decomposed into clusters. 

\paragraph{Cross-validation-based methods} 
Cross-validation is a widely used method for model selection, where a data set is first split into training and test data sets, and then the best model with the minimum test error is determined. Recently, cross-validation methods for matrix data have been proposed \cite{Dabbs2016, Li2020, Kawamoto2017, Chen2018} to determine the number of clusters in network data. 
Although the purpose of these methods and our method is similar, these methods differ from ours in that their target is the network data, where the observed matrix is square and its rows and columns represent the same node sets. Thus, the block structure is symmetric regardless of whether the network itself is directed or undirected). 
Moreover, unlike a statistical test, these methods cannot provide quantitative knowledge about the reliability of the selected model. Furthermore, the computational cost of cross-validation is generally high because it requires the iterative calculation of the test error with different data set partitions. 

\paragraph{Information-criterion-based methods} 
Another approach for determining the number of blocks in a matrix is to estimate the generalization error or marginal likelihood by some information criterion for given sets of block numbers. By using such information criteria, we can select a model in a statistically meaningful (non-heuristic) way. In regard to block models, many variants of BIC have been proposed \cite{Keribin2012, Peixoto2013, Keribin2015, Hu2019, Saldana2017}. 
Unlike our test-based method, which only requires a clustering algorithm to satisfy the consistency condition (Section \ref{sec:method}), an information criterion for a theoretical guarantee should be carefully chosen according to the given clustering algorithm. 
For instance, BIC can be employed for estimating the marginal likelihood only if the Fisher information matrix of the model is regular, which is not the case for block models. 

To solve this problem, as an alternative criterion to BIC, the integrated completed likelihood (ICL) criterion has been used in many studies for estimating the number of blocks in LBMs \cite{Lomet2012, Wyse2017, Corneli2015}. In ICL, we first derive a marginal likelihood for a given set of an observed matrix and block assignments and then substitute the set of estimated block assignments to approximate the marginal likelihood. 
However, since ICL is computed based on a single estimator of block assignments, there is no guarantee for the goodness of the approximation of marginal likelihood. 

Similar to cross-validation-based methods, information-criterion-based methods cannot provide a probabilistic guarantee for the reliability of the selected model, which is a disadvantage for the purpose of knowledge discovery. The computational cost also becomes a problem because the computation of an information criterion requires the process of approximating the posterior distribution by MCMC. 

\paragraph{Other model selection methods}
Aside from the information criteria, several studies have proposed to determine the number of blocks in LBMs based on the co-clustering adjusted rand index \cite{Robert2017}, the extended modularity for biclustering \cite{Labiod2011}, or the expected posterior loss for a given loss function \cite{Rastelli2018}. 
Another approach is to define the posterior distribution not only on cluster assignments of rows and columns but also on row and column cluster numbers \cite{Wyse2012, Passino2020}. 
Unlike the model selection approaches, such nonparametric Bayesian methods can estimate the distribution of the block numbers. The best-fitted number of the blocks can be determined based on the posterior distribution (e.g., we can choose a MAP estimator \cite{Passino2020}). However, in this case, the computational cost of MCMC is higher than that of the information-criterion-based methods because it requires a large number of iterations to approximate the posterior distribution both on the block assignments and the number of blocks. 


\section{Test statistic for determining the set of cluster numbers}
\label{sec:statistic}

To derive the test statistic for the proposed goodness-of-fit test, we first normalize each entry $A_{ij}$ of an observed matrix $A$ by subtracting $P_{ij}$ and dividing it by $\sigma_{ij}$, where $P$ and $\sigma$, respectively, are the block-wise mean and standard deviation in (\ref{eq:LBM}): 
\begin{eqnarray}
\label{eq:Z_true}
Z = (Z_{ij})_{1\leq i \leq n, 1 \leq j \leq p}, \ \ \ \ \ \ \ \ \ \ 
Z_{ij} = \frac{A_{ij} - P_{ij}}{\sigma_{ij}}. 
\end{eqnarray}
By definition, each entry $Z_{ij}$ of matrix $Z$ in (\ref{eq:Z_true}) independently follows a distribution with zero mean and standard deviation of one. 
Therefore, according to the result in \cite{Pillai2014}, if $n = n(p)$ and $n/p \to \gamma \neq 0, \infty$ in the limit of $p \to \infty$, the scaled maximum eigenvalue $T^*$ of matrix $Z^\top Z$ converges in law to the Tracy-Widom distribution with index $1$ ($TW_1$) in the limit of $p \to \infty$: 
\begin{eqnarray}
\label{eq:T_true}
T^* = \frac{\lambda_1 - a}{b}, \ \ \ \ \ \ \ \ \ \ 
T^* \rightsquigarrow TW_1 \ \mathrm{(Convergence\ in\ law)}, 
\end{eqnarray}
where $\lambda_1$ is the maximum eigenvalue of matrix $Z^\top Z$ and 
\begin{eqnarray}
\label{eq:ab}
a = (\sqrt{n} + \sqrt{p})^2, \ \ \ \ \ \ \ \ \ \ 
b = (\sqrt{n} + \sqrt{p}) \left( \frac{1}{\sqrt{n}} + \frac{1}{\sqrt{p}} \right)^{\frac{1}{3}}. 
\end{eqnarray}

In most cases, the null cluster numbers $(K, H)$ and the null cluster assignments $g^{(1)}$ and $g^{(2)}$ are unknown in advance. Therefore, we can only estimate the block structure based on the observed matrix $A$ and the given cluster numbers. Let $(K_0, H_0)$ be the given set of row and column cluster numbers, and $\hat{g}^{(1)}$ and $\hat{g}^{(2)}$, respectively, be the estimated cluster assignments for rows and columns. Based on such an estimated block structure $(\hat{g}^{(1)}, \hat{g}^{(2)})$, we estimate the block-wise mean and standard deviation by
\begin{align}
\label{eq:BPS_hat}
&\hat{B} = (\hat{B}_{k h})_{1 \leq k \leq K_0, 1 \leq h \leq H_0}, \ \ \ \ \ \ \ \ \ \ 
\hat{B}_{k h} = \frac{1}{|I_{k}| |J_{h}|} \sum_{i \in I_{k}, j \in J_{h}} A_{ij}, \nonumber \\
&\hat{P} = (\hat{P}_{ij})_{1\leq i \leq n, 1 \leq j \leq p}, \ \ \ \ \ \ \ \ \ \ 
\hat{P}_{ij} = \hat{B}_{\hat{g}^{(1)}_i \hat{g}^{(2)}_j}, \nonumber \\
&\hat{S} = (\hat{S}_{k h})_{1 \leq k \leq K_0, 1 \leq h \leq H_0}, \ \ \ \ \ \ \ \ \ \ 
\hat{S}_{k h} = \sqrt{ \frac{1}{|I_{k}| |J_{h}|} \sum_{i \in I_{k}, j \in J_{h}} \left( A_{ij} - \hat{P}_{ij} \right)^2}, \nonumber \\
&\hat{\sigma} = (\hat{\sigma}_{ij})_{1\leq i \leq n, 1 \leq j \leq p}, \ \ \ \ \ \ \ \ \ \ 
\hat{\sigma}_{ij} = \hat{S}_{\hat{g}^{(1)}_i \hat{g}^{(2)}_j}, 
\end{align}
where $I_k$ is the set of row indices of matrix $A$ that are assigned to the $k$th cluster, and $J_h$ is the set of column indices of matrix $A$ that are assigned to the $h$th cluster: 
\begin{eqnarray}
\label{eq:I_k_J_h}
I_k = \left\{ i: \hat{g}^{(1)}_i = k \right\}, \ \ \ \ \ \ \ \ \ \ 
J_h = \left\{ j: \hat{g}^{(2)}_j = h \right\}. 
\end{eqnarray}
The consistency assumption \ref{asmp:consistency} guarantees that if $(K_0, H_0) = (K, H)$, the probability that the cluster assignments $(I_k)_{1 \leq k \leq K_0}$ and $(J_h)_{1 \leq h \leq H_0}$ are correct converges to $1$ in the limit of $m \to \infty$. 

We define an estimator of normalized matrix $Z$ in (\ref{eq:Z_true}) based on the estimated block-wise mean $\hat{P}$ and standard deviation $\hat{\sigma}$ in (\ref{eq:BPS_hat}): 
\begin{eqnarray}
\label{eq:Z_hat}
\hat{Z} = (\hat{Z}_{ij})_{1\leq i \leq n, 1 \leq j \leq p}, \ \ \ \ \ \ \ \ \ \ 
\hat{Z}_{ij} = \frac{A_{ij} - \hat{P}_{ij}}{\hat{\sigma}_{ij}}. 
\end{eqnarray}

The test statistic $T$ for the proposed goodness-of-fit test is given by the scaled maximum eigenvalue of matrix $\hat{Z}^\top \hat{Z}$: 
\begin{eqnarray}
\label{eq:T_statistic}
T = \frac{\hat{\lambda}_1 - a}{b}, 
\end{eqnarray}
where $\hat{\lambda}_1$ is the maximum eigenvalue of matrix $\hat{Z}^\top \hat{Z}$, and $a$ and $b$ are given by (\ref{eq:ab}). 

Based on the following results in Theorems \ref{th:realizable}, \ref{th:unrealizable_lower} and \ref{th:unrealizable_upper}, we propose a one-sided goodness-of-fit test for a given set of cluster numbers $(K_0, H_0)$ at the significance level of $\alpha$ by using the test statistic $T$: 
\begin{eqnarray}
\label{eq:rejection}
\mathrm{Reject\ null\ hypothesis}\ ((K, H) = (K_0, H_0)),\ \ \ \ \ \mathrm{if}\ T \geq t(\alpha), 
\end{eqnarray}
where $t(\alpha)$ is the $\alpha$ upper quantile of the Tracy-Widom distribution with index $1$. 
By applying the sequentially ordered test that we explained in Section \ref{sec:method} based on the above rejection rule (\ref{eq:rejection}), we can select a set of row and column cluster numbers $(\hat{K}, \hat{H})$ for a given observed matrix $A$. 

In the proof of Theorem \ref{th:realizable}, we also use the following notations. Let $\tilde{B}_{kh}$ and $\tilde{S}_{kh}$, respectively, be the \textbf{sample} mean and standard deviation of all the entries in the $(k, h)$th \textbf{null} block in matrix $A$. Based on such notations, we define the sample mean matrix $\tilde{P}$ and standard deviation matrix $\tilde{\sigma}$ for the correct block structure, and matrix $\tilde{Z}$ by: 
\begin{align}
\label{eq:tilde_Z}
&\tilde{P} = (\tilde{P}_{ij})_{1\leq i \leq n, 1 \leq j \leq p}, \ \ \ \ \ \ \ \ \ \ \tilde{P}_{ij} = \tilde{B}_{g^{(1)}_i g^{(2)}_j}, \nonumber \\
&\tilde{\sigma} = (\tilde{\sigma}_{ij})_{1\leq i \leq n, 1 \leq j \leq p}, \ \ \ \ \ \ \ \ \ \ \tilde{\sigma}_{ij} = \tilde{S}_{g^{(1)}_i g^{(2)}_j}, \nonumber \\
&\tilde{Z} = (\tilde{Z}_{ij})_{1\leq i \leq n, 1 \leq j \leq p}, \ \ \ \ \ \ \ \ \ \ \tilde{Z}_{ij} = \frac{A - \tilde{P}_{ij}}{\tilde{\sigma}_{ij}}. 
\end{align}

\begin{theorem}[Realizable case]
We assume that the following condition holds: $n = n(p)$, $n/p \to \gamma  \neq 0, \infty$ in the limit of $p \to \infty$. Under the consistency assumption \ref{asmp:consistency} for the clustering algorithm, if $(K_0, H_0) = (K, H)$, 
\begin{eqnarray}
T \rightsquigarrow TW_1 \ \mathrm{(Convergence\ in\ law)},
\end{eqnarray}
in the limit of $p \to \infty$, where $T$ is defined as in (\ref{eq:T_statistic}). 
\label{th:realizable}
\end{theorem}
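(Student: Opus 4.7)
The plan is to transfer the Tracy--Widom limit from $\lambda_1(Z^\top Z)$, which is covered by Pillai's theorem, to $\hat{\lambda}_1$ by controlling the discrepancy $\hat{Z} - Z$ at the edge scale $b = \Theta(m^{1/3})$. First I would invoke the consistency assumption (v): the event $\mathcal{E} = \{(\hat{g}^{(1)}, \hat{g}^{(2)}) = (g^{(1)}, g^{(2)})\}$ has probability tending to $1$, so it suffices to prove the claim conditional on $\mathcal{E}$. On $\mathcal{E}$ the estimated partitions equal the true ones, and substituting $A_{ij} = B_{kh} + S_{kh} Z_{ij}$ for $(i,j) \in I_k \times J_h$ yields
\begin{equation*}
\hat{Z}_{ij} = \frac{Z_{ij} - \bar{Z}_{kh}}{\hat{s}_{kh}},
\quad
\bar{Z}_{kh} = \tfrac{1}{|I_k||J_h|} \!\!\! \sum_{(i,j) \in I_k \times J_h} \!\!\! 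Z_{ij},
\quad
\hat{s}_{kh}^2 = \tfrac{1}{|I_k||J_h|} \!\!\! \sum_{(i,j) \in I_k \times J_h} \!\!\! (Z_{ij} - \bar{Z}_{kh})^2,
\end{equation*}
and assumption (iii) together with the sub-exponential tail (i) gives $\bar{Z}_{kh} = O_p(1/m)$ and $\hat{s}_{kh} - 1 = O_p(1/m)$ uniformly over the $KH = O(1)$ blocks.

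Next I would rewrite $\hat{Z}$ as a structured perturbation of $Z$. Let $P_r = \sum_k u_k u_k^\top$ and $P_c = \sum_h v_h v_h^\top$ be the orthogonal projections onto the spans of the normalized cluster indicators $u_k = \mathbf{1}_{I_k}/\sqrt{|I_k|}$ and $v_h = \mathbf{1}_{J_h}/\sqrt{|J_h|}$. A direct check gives $(P_r Z P_c)_{ij} = \bar{Z}_{g^{(1)}_i g^{(2)}_j}$, so that on $\mathcal{E}$
\begin{equation*}
\hat{Z} = D \odot (Z - P_r Z P_c),
\qquad
\max_{ij} |D_{ij} - 1| = O_p(1/m),
\end{equation*}
where $D$ is the Hadamard block-constant matrix with value $1/\hat{s}_{kh}$ on block $(k,h)$. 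A blockwise bound shows that the rescaling $D$ contributes only $O_p(1/\sqrt{m})$ in operator norm (negligible at the $b$-scale), and the essential perturbation $P_r Z P_c$ is rank-$KH = O(1)$ with $\|P_r Z P_c\|_F = O_p(1)$ by the CLT applied to each block mean.

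The main technical obstacle is converting these bounds into a comparison of top eigenvalues: a direct application of Weyl's inequality only yields $|\hat{\sigma}_1 - \sigma_1| \leq \|\hat{Z} - Z\|_{\mathrm{op}} = O_p(1)$, hence $|\hat{\lambda}_1 - \lambda_1| = O_p(\sqrt{m})$, which exceeds $b = \Theta(m^{1/3})$. To close the $\sqrt{m}$-vs-$m^{1/3}$ gap I would exploit the low-rank structure via the identity
\begin{equation*}
(Z - P_r Z P_c)^\top (Z - P_r Z P_c) = Z^\top Q_r Z + Q_c Z^\top P_r Z Q_c,
\qquad Q_r = I - P_r,\ Q_c = I - P_c,
\end{equation*}
and reduce the analysis to $\lambda_1(Z^\top Q_r Z)$, the top Gram eigenvalue of an effectively iid $(n - K) \times p$ submatrix of $Z$. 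By Pillai's universality this obeys $TW_1$ after centering by $(\sqrt{n-K} + \sqrt{p})^2$, and the deterministic shift $(\sqrt{n-K} + \sqrt{p})^2 - a = O(1) = o(b)$ is absorbed. The remaining correction $Q_c Z^\top P_r Z Q_c$ has rank $\leq K$ but operator norm up to $O_p(m)$, and must be shown to shift $\lambda_1$ only by $o_p(b)$; this is the crux of the argument and is expected to follow from delocalization of the top singular direction of $Z$ against the fixed $O(1)$-dimensional subspace spanned by the cluster indicators $u_k, v_h$. Combining these bounds with Pillai's theorem then yields $T \rightsquigarrow TW_1$.
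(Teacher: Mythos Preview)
Your opening reduction---condition on the consistency event $\mathcal{E}$, then use the blockwise rates $\bar{Z}_{kh} = O_p(1/m)$ and $\hat{s}_{kh} - 1 = O_p(1/m)$---mirrors the paper's approach closely. But from that point the paper's argument is far more elementary than yours: it simply bounds $\|Z^{(k,h)} - \tilde{Z}^{(k,h)}\|_{\mathrm{op}} = O_p(1)$ on each of the finitely many true blocks via the triangle inequality (the dominant contribution being $\|P^{(k,h)} - \tilde{P}^{(k,h)}\|_{\mathrm{F}}/\tilde{S}_{kh} = O_p(1)$), sums over blocks to obtain $\|Z - \tilde{Z}\|_{\mathrm{op}} = O_p(1)$, transfers this to $\hat{Z}$ by consistency, and then invokes Slutsky's theorem directly to conclude $(\hat{\lambda}_1 - a)/b \rightsquigarrow TW_1$. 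There is no projector decomposition, no interlacing, and no appeal to delocalization.

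You have in fact spotted a weakness the paper passes over: from $\|\hat{Z}\|_{\mathrm{op}} = \|Z\|_{\mathrm{op}} + O_p(1)$ one obtains only $\hat{\lambda}_1 - \lambda_1 = (\|\hat{Z}\|_{\mathrm{op}} + \|Z\|_{\mathrm{op}})(\|\hat{Z}\|_{\mathrm{op}} - \|Z\|_{\mathrm{op}}) = O_p(\sqrt{m})$, and since $b = \Theta(m^{1/3})$ the ratio $(\hat{\lambda}_1 - \lambda_1)/b = O_p(m^{1/6})$ does not obviously vanish. The paper nevertheless applies Slutsky at this step without further justification, so the ``main technical obstacle'' you flag is a genuine concern that the published proof does not address.

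Your attempted repair, however, carries its own gaps. First, the claim that $\lambda_1(Z^\top Q_r Z)$ is governed by Pillai's theorem because $Q_r Z$ is ``effectively an iid $(n-K) \times p$ submatrix'' is only correct in the Gaussian case; for general sub-exponential entries, projecting out the row-cluster indicators leaves the entries within each column uncorrelated but not independent, and Pillai's universality result does not apply to $Q_r Z$. Second, the rank-$K$ correction $Q_c Z^\top P_r Z Q_c$ has operator norm $\Theta_p(m)$ (each vector $u_k^\top Z$ has squared norm concentrating near $p$), and showing that it shifts $\lambda_1$ by only $o_p(b)$ is precisely the hard edge-perturbation/delocalization statement you have merely asserted. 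Your outline is more honest than the paper about where the difficulty lies, but as written it does not close the $m^{1/6}$ gap either.
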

\begin{proof}
We denote the operator norm by $\| \cdot \|_{\mathrm{op}}$, 
\begin{eqnarray}
\| A \|_{\mathrm{op}} = \sup_{\bm{u} \in \mathbb{R}^p} \frac{\| A \bm{u} \|}{\| \bm{u} \|}. 
\end{eqnarray}

First of all, we derive the difference between $B_{kh}$ ($S_{kh}$) and $\tilde{B}_{kh}$ ($\tilde{S}_{kh}$), which have been defined in (\ref{eq:LBM}) and (\ref{eq:tilde_Z}). Since the number of entries in the block is proportional to $m^2$ by the assumption \ref{asmp:block_size}, $\sqrt{m^2} \left( B_{kh} - \tilde{B}_{kh} \right)$ converges to $\mathcal{N} (0, S_{kh}^2)$ from the central limit theorem. Therefore, from Prokhorov's theorem \cite{Vaart1998}, we have
\begin{eqnarray}
\label{eq:BBtildediff}
\tilde{B}_{kh} = B_{kh} + O_p \left( \frac{1}{m} \right). 
\end{eqnarray}
Also, the following equation holds (The proof  is given in Appendix \ref{sec:ap_sigma_tilde}):  
\begin{eqnarray}
\label{eq:SStildediff}
\tilde{S}_{kh} = S_{kh} + O_p \left( \frac{1}{m} \right). 
\end{eqnarray}

From here, we derive the difference between the maximum eigenvalue $\tilde{\lambda}_1$ of matrix $\tilde{Z}^{\top} \tilde{Z}$ and the maximum eigenvalue $\lambda_1$ of matrix $Z^{\top} Z$, where the definitions of matrices $Z$ and $\tilde{Z}$ have been given in (\ref{eq:Z_true}) and (\ref{eq:tilde_Z}), respectively. 
From (\ref{eq:T_true}), we have $\lambda_1 = O_p (m)$. Therefore, the largest singular value of matrix $Z$, which is equal to $\| Z \|_{\mathrm{op}}$, is in the order of $O_p (\sqrt{m})$. 

By the subadditivity of the operator norm, we have 
\begin{eqnarray}
\label{eq:ZZtildediff}
\left| \| Z \|_{\mathrm{op}} - \| \tilde{Z} \|_{\mathrm{op}} \right| \leq \| Z -\tilde{Z} \|_{\mathrm{op}}. 
\end{eqnarray}

Let $A^{(k, h)}$, $P^{(k, h)}$, $\tilde{P}^{(k, h)}$, $Z^{(k, h)}$ and $\tilde{Z}^{(k, h)}$, respectively, be the $(k, h)$th \textbf{null} blocks of matrices $A$, $P$, $\tilde{P}$, $Z$ and $\tilde{Z}$. We also denote the row and column sizes of the $(k, h)$th \textbf{null} block as $n_k$ and $p_h$, respectively. From the definitions in (\ref{eq:Z_true}) and (\ref{eq:tilde_Z}), we have 
\begin{eqnarray}
Z^{(k, h)} = \frac{A^{(k, h)} - P^{(k, h)}}{S_{kh}}, \ \ \ \ \ \ \ \ \ \ 
\tilde{Z}^{(k, h)} = \frac{A^{(k, h)} - \tilde{P}^{(k, h)}}{\tilde{S}_{kh}}. 
\end{eqnarray}
Combining this with (\ref{eq:BBtildediff}), (\ref{eq:SStildediff}) and the fact that the Frobenius norm upper bounds the operator norm, we have 
\begin{align}
\label{eq:ZZtilde_F}
&\| Z^{(k, h)} - \tilde{Z}^{(k, h)} \|_{\mathrm{op}} 
= \left\| \frac{A^{(k, h)} - P^{(k, h)}}{S_{kh}} - \frac{A^{(k, h)} - \tilde{P}^{(k, h)}}{\tilde{S}_{kh}} \right\|_{\mathrm{op}} \nonumber \\
&= \left\| \frac{A^{(k, h)} - P^{(k, h)}}{S_{kh}} - \frac{A^{(k, h)} - P^{(k, h)}}{\tilde{S}_{kh}} + \frac{A^{(k, h)} - P^{(k, h)}}{\tilde{S}_{kh}} - \frac{A^{(k, h)} - \tilde{P}^{(k, h)}}{\tilde{S}_{kh}} \right\|_{\mathrm{op}} \nonumber \\
&\leq \left\| \frac{A^{(k, h)} - P^{(k, h)}}{S_{kh}} - \frac{A^{(k, h)} - P^{(k, h)}}{\tilde{S}_{kh}} \right\|_{\mathrm{op}} + \left\| \frac{A^{(k, h)} - P^{(k, h)}}{\tilde{S}_{kh}} - \frac{A^{(k, h)} - \tilde{P}^{(k, h)}}{\tilde{S}_{kh}} \right\|_{\mathrm{op}} \nonumber \\
&= \left| \frac{\tilde{S}_{kh} - S_{kh}}{S_{kh} \tilde{S}_{kh}} \right| \| A^{(k, h)} - P^{(k, h)} \|_{\mathrm{op}} + \frac{1}{\tilde{S}_{kh}} \| P^{(k, h)} - \tilde{P}^{(k, h)} \|_{\mathrm{op}} \nonumber \\
&\leq \left| \frac{\tilde{S}_{kh} - S_{kh}}{S_{kh} \tilde{S}_{kh}} \right| \| A^{(k, h)} - P^{(k, h)} \|_{\mathrm{op}} + \frac{1}{\tilde{S}_{kh}} \| P^{(k, h)} - \tilde{P}^{(k, h)} \|_{\mathrm{F}} \nonumber \\
&= \left| \frac{\tilde{S}_{kh} - S_{kh}}{\tilde{S}_{kh}} \right| \| Z^{(k, h)} \|_{\mathrm{op}} + \frac{1}{\tilde{S}_{kh}} \sqrt{n_k p_h} \left| B_{kh} - \tilde{B}_{kh} \right| \nonumber \\
&= \frac{O_p (1/m)}{S_{kh} + O_p (1/m)} \| Z^{(k, h)} \|_{\mathrm{op}} + \frac{O_p (1/m)}{S_{kh} + O_p (1/m)} \sqrt{n_k p_h} \ \ \ \left(\because (\ref{eq:BBtildediff}), (\ref{eq:SStildediff}) \right) \nonumber \\
&= \frac{O_p (1/m)}{S_{kh} + O_p (1/m)} O_p (\sqrt{m}) + \frac{O_p (1/m)}{S_{kh} + O_p (1/m)} \sqrt{n_k p_h} \ \ \ \left(\because (\ref{eq:T_true}) \right) \nonumber \\
&= O_p \left( \frac{1}{\sqrt{m}} \right) + O_p (1) = O_p (1). 
\end{align}

Therefore, since the operator norm of a matrix is not larger than the sum of the operator norms of all of its blocks and the number of blocks are finite constants, we have
\begin{eqnarray}
\| Z - \tilde{Z} \|_{\mathrm{op}} &\leq& \sum_{k=1}^K \sum_{h=1}^H \| Z^{(k, h)} - \tilde{Z}^{(k, h)} \|_{\mathrm{op}} = O_p (1). 
\end{eqnarray}
By combining this with (\ref{eq:ZZtildediff}), we obtain 
\begin{eqnarray}
\label{eq:Ztilde_op}
\left| \| Z \|_{\mathrm{op}} - \| \tilde{Z} \|_{\mathrm{op}} \right| = O_p (1). 
\end{eqnarray}

Next, we consider the joint probability of the event $\mathcal{F}_m$ that the clustering algorithm outputs the correct block structure (i.e., $\tilde{Z} = \hat{Z}$) and the event $\mathcal{G}_{m, C}$ that $\left| \| Z \|_{\mathrm{op}} - \| \tilde{Z} \|_{\mathrm{op}} \right| \leq C$ holds. Such a joint probability satisfies the following inequality: 
\begin{eqnarray}
\label{eq:jointp}
\mathrm{Pr} \left( \mathcal{F}_m \cap \mathcal{G}_{m, C} \right) \geq 1 - \mathrm{Pr} \left( \mathcal{F}^{\mathrm{C}}_m \right) - \mathrm{Pr} \left( \mathcal{G}^{\mathrm{C}}_{m, C} \right), 
\end{eqnarray}
where $\mathcal{A}^{\mathrm{C}}$ is the complement of event $\mathcal{A}$. 
The consistency assumption \ref{asmp:consistency} guarantees that if $(K_0, H_0) = (K, H)$, $\mathrm{Pr} \left( \mathcal{F}^{\mathrm{C}}_m \right)$ converges to $0$ in the limit of $m \to \infty$. By combining this fact with (\ref{eq:Ztilde_op}), we obtain 
\begin{eqnarray}
\label{eq:jointp2}
\forall \epsilon>0, \ \exists C>0, M>0, \ \forall m \geq M, \ 
\mathrm{Pr} \left( \mathcal{F}_m \cap \mathcal{G}_{m, C} \right) \geq 1 - \epsilon, 
\end{eqnarray}
which results in 
\begin{eqnarray}
\label{eq:Zhat_op}
\left| \| Z \|_{\mathrm{op}} - \| \hat{Z} \|_{\mathrm{op}} \right| = O_p (1). 
\end{eqnarray}
By using the above results, we can prove that the following equation holds for all $\epsilon \in \left( 0, \frac{2}{7} \right)$ (The proof is given in Appendix \ref{sec:ap_lambda_diff}): 
\begin{eqnarray}
\label{eq:Zhat2_op}
\frac{| \lambda_1 - \hat{\lambda}_1 |}{b} = O_p \left( m^{-\frac{1}{21} + \epsilon} \right), 
\end{eqnarray}

From (\ref{eq:T_true}), (\ref{eq:Zhat2_op}), and Slutsky's theorem, by setting $\epsilon < \frac{1}{21}$, 
\begin{eqnarray}
\frac{\hat{\lambda}_1 - a}{b} = T^* + \frac{\hat{\lambda}_1 - \lambda_1}{b} \rightsquigarrow TW_1 \ \mathrm{(Convergence\ in\ law)}. 
\end{eqnarray}
This is equivalent to the statement of Theorem \ref{th:realizable}. 
\end{proof}

\begin{theorem}[Unrealizable case, lower bound]
Suppose $K_0 < K$ or $H_0 < H$. 
\begin{eqnarray}
T = \Omega_p \left( \frac{m^{\frac{5}{3}}}{K^2 H^2} \right), 
\end{eqnarray}
where $T$ is defined as in (\ref{eq:T_statistic}). 
\label{th:unrealizable_lower}
\end{theorem}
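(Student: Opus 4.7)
The plan is to lower bound $\hat{\lambda}_1=\|\hat{Z}\|_{\mathrm{op}}^2$ by restricting $\hat{Z}$ to a carefully chosen rectangular submatrix on which the mismatch between true and estimated block means creates a deterministic signal of magnitude $\Omega_p(m/\sqrt{KH})$. Write $\hat{Z}=M+N$, where $N_{ij}=(A_{ij}-P_{ij})/\hat{\sigma}_{ij}$ is the centred noise and $M_{ij}=(P_{ij}-\hat{P}_{ij})/\hat{\sigma}_{ij}$ is the mean-mismatch signal. Since $\sigma/\hat{\sigma}$ is piecewise constant on intersections of true and estimated blocks and bounded above and below with high probability, $N$ decomposes into a finite collection of sub-block restrictions of $Z$, each of operator norm $O_p(\sqrt{m})$ by the Pillai-based argument used in Theorem \ref{th:realizable}, giving $\|N\|_{\mathrm{op}}=O_p(\sqrt{m})$.

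Assume $K_0<K$ (the case $H_0<H$ is symmetric). For each true row cluster $k$, its $n_k\geq n_{\mathrm{min}}$ rows are distributed among the $K_0$ estimated clusters, so by pigeonhole some estimated cluster $k'(k)$ receives at least $n_k/K_0$ of them. Since $K>K_0$, the map $k\mapsto k'(k)$ must coincide on two distinct true clusters $k_1\neq k_2$, giving a common estimated cluster $k^*$ with $R_\ell:=I_{k^*}\cap\{i:g^{(1)}_i=k_\ell\}$ satisfying $|R_\ell|\geq n_{\mathrm{min}}/K_0=\Omega_p(m/K)$ for $\ell=1,2$. The minimality of $K$ in assumption \ref{asmp:block_size} then forces a true column cluster $h'$ with $\delta:=|B_{k_1h'}-B_{k_2h'}|>0$, for otherwise clusters $k_1,k_2$ would have identical mean profiles and could be merged. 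A further pigeonhole step applied to the $\Omega_p(m)$ columns with $g^{(2)}=h'$ against the $H_0\leq H$ estimated column clusters yields $J_{h^*}$ such that $C^*:=J_{h^*}\cap\{j:g^{(2)}_j=h'\}$ has $|C^*|=\Omega_p(m/H)$.

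On the submatrix $S:=(R_1\cup R_2)\times C^*\subset I_{k^*}\times J_{h^*}$, both $\hat{P}$ and $\hat{\sigma}$ are constant, so $M|_S$ takes the value $(B_{k_\ell h'}-\hat{B}_{k^*h^*})/\hat{S}_{k^*h^*}$ on $R_\ell\times C^*$; in particular $M|_S$ has rank one. The elementary identity
\[
|R_1|(B_{k_1h'}-c)^2+|R_2|(B_{k_2h'}-c)^2\geq \frac{|R_1||R_2|}{|R_1|+|R_2|}\,\delta^2
\qquad(c\in\mathbb{R}),
\]
applied with $c=\hat{B}_{k^*h^*}$, yields
\[
\|M|_S\|_{\mathrm{op}}^2=\|M|_S\|_F^2\geq \frac{|R_1||R_2|}{|R_1|+|R_2|}\,|C^*|\,\frac{\delta^2}{\hat{S}_{k^*h^*}^2}=\Omega_p\!\left(\frac{m^2}{KH}\right),
\]
using that $\hat{S}_{k^*h^*}=O_p(1)$. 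Submatrix monotonicity of the operator norm together with the reverse triangle inequality then gives
\[
\sqrt{\hat{\lambda}_1}=\|\hat{Z}\|_{\mathrm{op}}\geq \|\hat{Z}|_S\|_{\mathrm{op}}\geq \|M|_S\|_{\mathrm{op}}-\|N|_S\|_{\mathrm{op}}=\Omega_p(m/\sqrt{KH})-O_p(\sqrt{m}),
\]
so $\hat{\lambda}_1=\Omega_p(m^2/(KH))$. Since $a=\Theta(m)$ is of strictly lower order and $b=\Theta(m^{1/3})$, we obtain $T=\Omega_p(m^{5/3}/(KH))$, which is stronger than the claimed $\Omega_p(m^{5/3}/(K^2H^2))$.

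The main obstacle is that in the unrealizable regime the consistency assumption \ref{asmp:consistency} does \emph{not} apply, so the estimators $\hat{B}_{k^*h^*}$ and $\hat{S}_{k^*h^*}$ have no clean limit, and one cannot simply invoke (\ref{eq:BBtildediff})--(\ref{eq:SStildediff}). The argument is rescued by two points: first, the variance bound above holds for \emph{every} centring constant $c$, so no control over $\hat{B}_{k^*h^*}$ is needed; second, $\hat{S}_{k^*h^*}^2$ concentrates on a weighted sum of true block variances plus the weighted variance of true means across the estimated block, a quantity bounded above and below by positive constants provided all $S_{kh}$ are bounded. Making these two concentration statements rigorous — alongside the uniform block-size lower bounds driving each pigeonhole step — is the technical content that remains.
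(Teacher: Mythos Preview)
Your approach is correct and structurally parallel to the paper's: both restrict to a pigeonholed sub-rectangle inside one estimated block, decompose $\hat Z$ there into a rank-one mean-mismatch signal plus centred noise of size $O_p(\sqrt m)$, and read off $\hat\lambda_1=\Omega_p(m^2/\text{poly}(K,H))$. The genuine difference is how the signal is lower-bounded. The paper introduces an intermediate matrix $\bar P$ (population means on the \emph{estimated} grid), keeps only \emph{one} of the two mismatched sub-rectangles, bounds the population gap $|q-\bar q|\ge\tfrac12\min_{(k,h)\ne(k',h')}|B_{kh}-B_{k'h'}|$, and then separately proves $|\hat q-\bar q|=O_p(KH/\sqrt m)$ via an operator-norm argument on $A-P$. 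Your variance identity $|R_1|(B_{k_1h'}-c)^2+|R_2|(B_{k_2h'}-c)^2\ge \frac{|R_1||R_2|}{|R_1|+|R_2|}\delta^2$, valid for every centring $c$, bypasses both steps at once; and your use of the minimality of $K$ to force $\delta>0$ is more careful than the paper's implicit assumption that the two pigeonholed true blocks have distinct means.

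The one place the paper does more work than you is exactly the point you flag: it does \emph{not} obtain $\hat S_{k^*h^*}=O_p(1)$. In Appendix~\ref{ap_sigma_star} it proves only $\hat\sigma^*=O_p(KH)$, via the crude chain $\hat S_{k_1h_1}\le \sqrt{K_0H_0/(n_{\min}p_{\min})}\,\|A-\hat P\|_F$ followed by $\|A-\hat P\|_F\le\|A-P\|_F+\|P-\hat P\|_F$, each term being $O_p(KH\,m)$. This extra $KH$ in the denominator of $\|\hat Z\|_{\mathrm{op}}$ is precisely the source of the $K^2H^2$ in the theorem statement. Your heuristic that $\hat S_{k^*h^*}^2$ equals a weighted mixture of true variances plus a between-means term is morally right, but because the estimated block $I_{k^*}\times J_{h^*}$ is data-dependent you cannot invoke the law of large numbers directly; a uniform-over-partitions argument or the paper's Frobenius route is what closes the gap. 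Since $K,H$ are fixed constants under assumption~(iii), this affects only the $K,H$ bookkeeping, and your sketch already delivers the stated $m^{5/3}$ rate.
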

\begin{proof}
Let $\bar{P}$ be a matrix that consists of the \textbf{estimated} block structure and whose entries are the population block-wise means, which can be calculated using $P$ (see also Figure \ref{fig:unrealizable}). 
\begin{figure}[t]
  \centering
  \includegraphics[width=\hsize]{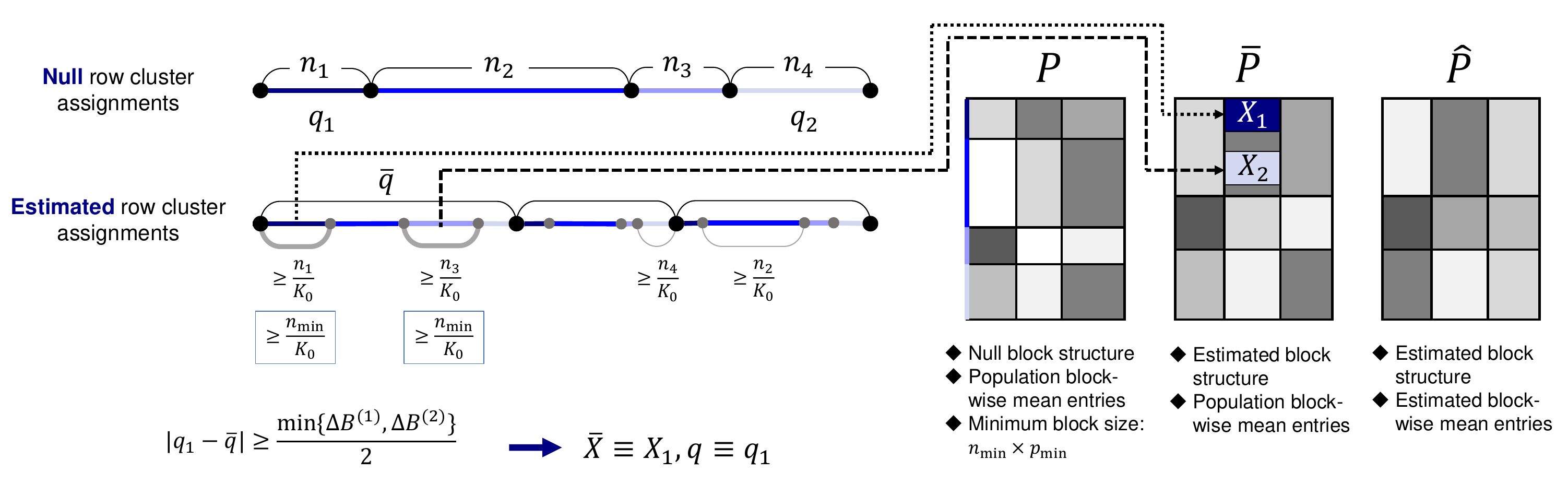} 
  \caption{Difference between matrices $P$, $\bar{P}$, and $\hat{P}$ in an unrealizable case. }
  \label{fig:unrealizable}
\end{figure}
To derive the difference between matrices $P$ and $\hat{P}$, we first  focus on the relationship between matrices $P$ and $\bar{P}$. In the unrealizable case (i.e., $K_0 < K$ or $H_0 < H$), we can assume $K_0 < K$ without loss of generality. 

Let $n_k$ be the number of rows in the $k$th \textbf{null} row cluster. For all the \textbf{null} row cluster indices $k \in \{1, \dots, K\}$, at least one \textbf{estimated} row cluster contains $n_k/K_0$ or more rows that are assigned to the $k$th row cluster in the \textbf{null} block structure (otherwise, the total number of rows in the $k$th \textbf{null} row cluster is smaller than $n_k$). Since $K_0 < K$, at least one estimated block contains two or more sets of rows whose \textbf{null} row clusters are mutually different, and both of which have the row sizes of at least $n_{\mathrm{min}}/K_0$, where $n_{\mathrm{min}}$ is the minimum row size of a block in the \textbf{null} block structure. By the same reasoning, for all the \textbf{null} column cluster indices $h \in \{1, \dots, H\}$, at least one \textbf{estimated} column cluster contains $p_h/H_0$ or more columns that are assigned to the $h$th column cluster in the \textbf{null} block structure, where $p_h$ is the number of rows in the $h$th \textbf{null} column cluster. By combining these facts, there exists at least one \textbf{estimated} block that contains two or more submatrices, both of which have the sizes of at least $(n_{\mathrm{min}}/K_0) \times (p_{\mathrm{min}}/H_0)$ and whose \textbf{null} blocks are mutually different. 

Let $X_1$ and $X_2$ be such submatrices, whose \textbf{null} block-wise mean are $q_1$ and $q_2$, respectively. We can assume $q_1 > q_2$ without the loss of generality. In matrix $\bar{P}$, which has the \textbf{estimated} block structure, both of $X_1$ and $X_2$ have the same values $\bar{q}$. 
Here, $|q_2 - \bar{q}| \geq \frac{|q_1 - q_2|}{2}$ holds if $\bar{q} \geq \frac{q_1 + q_2}{2}$, and $|q_1 - \bar{q}| \geq \frac{|q_1 - q_2|}{2}$ otherwise. 
Therefore, for any $\bar{q}$, there exists at least one submatrix $\bar{X}$ (which is either $X_1$ or $X_2$) with a size of at least $(n_{\mathrm{min}}/K_0) \times (p_{\mathrm{min}}/H_0)$, where all the entries are $q$ (which is either $q_1$ or $q_2$) in matrix $P$ and 
\begin{align}
\label{eq:q_bar_q_bound}
|q - \bar{q}| &\geq \frac{\min \{ \Delta B^{(1)}, \Delta B^{(2)} \}}{2}, \nonumber \\
\Delta B^{(1)} &\equiv \min_{k, k' \in \{ 1, \dots, K \}, h \in \{ 1, \dots, H \}} |B_{kh} - B_{k'h}|, \nonumber \\
\Delta B^{(2)} &\equiv \min_{k \in \{ 1, \dots, K \}, h, h' \in \{ 1, \dots, H \}} |B_{kh} - B_{kh'}|. 
\end{align}

Let $(k_1, h_1)$ be the row and column cluster indices of the \textbf{estimated} block which contains the above submatrix $\bar{X}$. We denote the row and column sizes of the $(k_1, h_1)$th \textbf{estimated} block as $\underline{n}_1$ and $\underline{p}_1$, respectively. 
Let $\underline{A}^{(k_1, h_1)}$, $\underline{P}^{(k_1, h_1)}$, $\underline{\bar{P}}^{(k_1, h_1)}$, and $\underline{\hat{P}}^{(k_1, h_1)}$, respectively, be the $(k_1, h_1)$th \textbf{estimated} block of $A$, $P$, $\bar{P}$, and $\hat{P}$. We define $\hat{q} \equiv \hat{B}_{k_1 h_1}$. 
In regard to the difference between matrices $\bar{P}$ and $\hat{P}$ (both of which have the \textbf{estimated} block structure), we have 
\begin{align}
\label{eq:o_block_un}
&|\hat{q} - \bar{q}| = \frac{1}{\underline{n}_1 \underline{p}_1} \left| \sum_{i=1}^{\underline{n}_1} \sum_{j=1}^{\underline{p}_1} \left( \underline{\hat{P}}^{(k_1, h_1)}_{ij} - \underline{\bar{P}}^{(k_1, h_1)}_{ij} \right) \right| \nonumber \\
&= \frac{1}{\underline{n}_1 \underline{p}_1} \left| \sum_{i=1}^{\underline{n}_1} \sum_{j=1}^{\underline{p}_1} \left( \underline{A}^{(k_1, h_1)}_{ij} - \underline{P}^{(k_1, h_1)}_{ij} \right) \right| 
= \frac{1}{\underline{n}_1 \underline{p}_1} \left| \left\langle \bm{u}_1, \left( \underline{A}^{(k_1, h_1)} - \underline{P}^{(k_1, h_1)} \right) \bm{u}_2 \right\rangle \right| \nonumber \\
&\leq \frac{1}{\underline{n}_1 \underline{p}_1} \| \bm{u}_1 \| \| \bm{u}_2 \| \| \underline{A}^{(k_1, h_1)} - \underline{P}^{(k_1, h_1)} \|_{\mathrm{op}} 
= \frac{1}{\sqrt{\underline{n}_1 \underline{p}_1}} \| \underline{A}^{(k_1, h_1)} - \underline{P}^{(k_1, h_1)} \|_{\mathrm{op}} \nonumber \\
&\leq \sqrt{\frac{K_0 H_0}{n_{\mathrm{min}} p_{\mathrm{min}}}} \| \underline{A}^{(k_1, h_1)} - \underline{P}^{(k_1, h_1)} \|_{\mathrm{op}} 
\leq \sqrt{\frac{K_0 H_0}{n_{\mathrm{min}} p_{\mathrm{min}}}} \| A - P \|_{\mathrm{op}} \nonumber \\
&\leq \sqrt{\frac{K_0 H_0}{n_{\mathrm{min}} p_{\mathrm{min}}}} \sum_{k=1}^K \sum_{h=1}^H \| A^{(k, h)} - P^{(k, h)} \|_{\mathrm{op}} 
= \sqrt{\frac{K_0 H_0}{n_{\mathrm{min}} p_{\mathrm{min}}}} \sum_{k=1}^K \sum_{h=1}^H S_{kh} \| Z^{(k, h)} \|_{\mathrm{op}} \nonumber \\
&\leq \sqrt{\frac{K_0 H_0}{n_{\mathrm{min}} p_{\mathrm{min}}}} KH \max_{k = 1, \dots, K, h = 1, \dots, H} S_{kh} \| Z \|_{\mathrm{op}} 
= O_p \left( \frac{KH}{\sqrt{m}} \right). 
\end{align}
where $A^{(k, h)}$, $P^{(k, h)}$, and $Z^{(k, h)}$, respectively, are the $(k, h)$th \textbf{null} blocks of matrices $A$, $P$, and $Z$, and $\bm{u}_1 = [1, 1, \dots, 1]^{\top} \in \mathbb{R}^{\underline{n}_1}$ and $\bm{u}_2 = [1, 1, \dots, 1]^{\top} \in \mathbb{R}^{\underline{p}_1}$. To derive the final equation in (\ref{eq:o_block_un}), we used the assumption that $n_{\mathrm{min}}, p_{\mathrm{min}} = \Omega_p (m)$ and the fact that $\| Z \|_{\mathrm{op}}$ is equal to the largest singular value of $Z$, which is $O_p (\sqrt{m})$ from (\ref{eq:T_true}). 

Let $\mathcal{E}_{m, C}$ be the event that $|q - \bar{q}| - CKH/\sqrt{m} \leq |q - \hat{q}|$ holds. For all $q$, $\bar{q}$, and $\hat{q}$, the following inequality holds: 
\begin{eqnarray}
\label{eq:q_qhat_qbar}
\Bigl| |q - \bar{q}| - |q - \hat{q}| \Bigr| \leq |\hat{q} - \bar{q}|. 
\end{eqnarray}
By combining (\ref{eq:o_block_un}) and (\ref{eq:q_qhat_qbar}), we obtain 
\begin{eqnarray}
\label{eq:q_qhat_qbar_prob}
\forall \epsilon>0, \ \exists C>0, M>0, \ \forall m \geq M, \ \mathrm{Pr} (\mathcal{E}_{m, C}) \geq 1-\epsilon. 
\end{eqnarray}

From now on, we denote the row and column sizes of submatrix $\bar{X}$, respectively, by $\bar{n}_1$ and $\bar{p}_1$. Let $A^*$, $P^*$, $\bar{P}^*$, $\hat{P}^*$, $Z^*$, and $\hat{Z}^*$, respectively, be the submatrices of matrices $A$, $P$, $\bar{P}$, $\hat{P}$, $Z$, and $\hat{Z}$ with the same row and column indices as submatrix $\bar{X}$. 
We also denote the constant entries of the submatrices of $\sigma$ and $\hat{\sigma}$ with the same row and column indices as submatrix $\bar{X}$, respectively, as $\sigma^*$ and $\hat{\sigma}^*$. 
From the definition (\ref{eq:Z_hat}) and since the operator norm of a submatrix is not larger than that of the original matrix, we have 
\begin{align}
\label{eq:z_hat_op_partial}
\| \hat{Z} \|_{\mathrm{op}} 
\geq&\ \| \hat{Z}^* \|_{\mathrm{op}} 
= \frac{1}{\hat{\sigma}^*} \| A^* - \hat{P}^* \|_{\mathrm{op}} 
= \frac{1}{\hat{\sigma}^*} \| ( A^* - P^* ) + ( P^* - \hat{P}^* ) \|_{\mathrm{op}} \nonumber \\
\geq&\ \frac{1}{\hat{\sigma}^*} \left| \| A^* - P^* \|_{\mathrm{op}} - \| P^* - \hat{P}^* \|_{\mathrm{op}} \right| \nonumber \\
=&\ \frac{1}{\hat{\sigma}^*} \left| \sigma^* \| Z^* \|_{\mathrm{op}} - \| P^* - \hat{P}^* \|_{\mathrm{op}} \right|. 
\end{align}

First, the order of the estimated standard deviation $\hat{\sigma}^*$ is given by 
\begin{eqnarray}
\label{eq:hat_sigma_star_o}
\hat{\sigma}^*  = O_p (KH). 
\end{eqnarray}
The proof of (\ref{eq:hat_sigma_star_o}) is in Appendix \ref{ap_sigma_star}. 

The only non-zero (and thus, the largest) singular value of matrix $\left( P^* - \hat{P}^* \right)$ is $\sqrt{\bar{n}_1 \bar{p}_1} \left| q - \hat{q} \right|$. Since the largest singular value of a matrix is equal to its operator norm, we have 
\begin{eqnarray}
\| P^* - \hat{P}^* \|_{\mathrm{op}} = \sqrt{\bar{n}_1 \bar{p}_1} \left| q - \hat{q} \right| 
\geq \sqrt{\frac{n_{\mathrm{min}}}{K_0} \frac{p_{\mathrm{min}}}{H_0}} \left| q - \hat{q} \right|. 
\end{eqnarray}
Therefore, by combining this fact with (\ref{eq:q_bar_q_bound}), if the statement of event $\mathcal{E}_{m, C}$ holds, the following inequality also holds: 
\begin{eqnarray}
\label{eq:p_phat_op}
\sqrt{\frac{n_{\mathrm{min}}}{K_0} \frac{p_{\mathrm{min}}}{H_0}} \left( \frac{\min \{ \Delta B^{(1)}, \Delta B^{(2)} \}}{2} - \frac{CKH}{\sqrt{m}} \right) \leq \| P^* - \hat{P}^* \|_{\mathrm{op}}, 
\end{eqnarray}
which results in that $\| P^* - \hat{P}^* \|_{\mathrm{op}} = \Omega_p \left( \delta m \right)$, where $\delta \equiv \min \{ \Delta B^{(1)}, \Delta B^{(2)} \}$. 

Also, from (\ref{eq:T_true}), we have $\| Z^* \|_{\mathrm{op}} \leq \| Z \|_{\mathrm{op}} =O_p (\sqrt{m})$. 
By substituting this fact, (\ref{eq:hat_sigma_star_o}), and (\ref{eq:p_phat_op}) into (\ref{eq:z_hat_op_partial}), we finally obtain
\begin{eqnarray}
\label{eq:apop_prob}
\| \hat{Z} \|_{\mathrm{op}}^2 = \Omega_p \left(\frac{\delta^2 m^2}{K^2 H^2} \right). 
\end{eqnarray}

Here, $\| \hat{Z} \|_{\mathrm{op}}^2$ is equal to the maximum eigenvalue $\hat{\lambda}_1$ of matrix $\hat{Z}^\top \hat{Z}$, and the test statistic is $T = \frac{\hat{\lambda}_1 - a}{b}$. Using the definition (\ref{eq:ab}), we obtain $a = O_p (m)$ and 
\begin{align}
\label{eq:bm}
b =&\ \left( \sqrt{n} + \sqrt{p} \right) \left( \frac{1}{\sqrt{n}} + \frac{1}{\sqrt{p}} \right)^{\frac{1}{3}} 
= \left( \sqrt{\beta_1 m} + \sqrt{\beta_2 m} \right) \left( \frac{1}{\sqrt{\beta_1 m}} + \frac{1}{\sqrt{\beta_2 m}} \right)^{\frac{1}{3}} \nonumber \\
=&\ m^{\frac{1}{3}} \left( \sqrt{\beta_1} + \sqrt{\beta_2} \right) \left( \frac{1}{\sqrt{\beta_1}} + \frac{1}{\sqrt{\beta_2}} \right)^{\frac{1}{3}}, 
\end{align}
where we used the definitions $\beta_1 \equiv n/m$ and $\beta_2 \equiv p/m$. 

By combining these results and (\ref{eq:apop_prob}), we obtain
\begin{eqnarray}
T m^{\frac{1}{3}} = \Omega_p \left(\frac{\delta^2 m^2}{K^2 H^2} \right) 
\iff T = \Omega_p \left( \frac{\delta^2 m^{\frac{5}{3}}}{K^2 H^2} \right), 
\end{eqnarray}
which concludes the proof. 
\end{proof}

\begin{theorem}[Unrealizable case, upper bound]
Suppose $K_0 < K$ or $H_0 < H$. Then, 
\begin{eqnarray}
T = O_p \left( m^{\frac{5}{3}} \right), 
\end{eqnarray}
where $T$ is defined as in (\ref{eq:T_statistic}). 
\label{th:unrealizable_upper}
\end{theorem}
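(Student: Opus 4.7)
The plan is to show $\|\hat{Z}\|_{\mathrm{op}} = O_p(m)$; combined with $a = O(m)$ and $b = \Theta(m^{1/3})$ from (\ref{eq:bm}), this yields $\hat{\lambda}_1 = \|\hat{Z}\|_{\mathrm{op}}^2 = O_p(m^2)$, and hence $T = (\hat{\lambda}_1 - a)/b = O_p(m^{5/3})$.

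To bound $\|\hat{Z}\|_{\mathrm{op}}$, I would split $\hat{Z}$ along the $K_0 H_0$ \textbf{estimated} blocks, on each of which $\hat{\sigma}_{ij}$ takes the constant value $\hat{S}_{kh}$. By padding each block with zeros and using subadditivity of the operator norm,
\begin{equation*}
\|\hat{Z}\|_{\mathrm{op}} \leq \sum_{(k,h)} \frac{1}{\hat{S}_{kh}} \left\| A^{(k,h)} - \hat{P}^{(k,h)} \right\|_{\mathrm{op}} \leq \frac{K_0 H_0}{\min_{k,h} \hat{S}_{kh}} \left\| A - \hat{P} \right\|_{\mathrm{op}},
\end{equation*}
so it suffices to prove (i) $\|A - \hat{P}\|_{\mathrm{op}} = O_p(m)$ and (ii) $\min_{k,h} \hat{S}_{kh} = \Omega_p(1)$.

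For (i), the triangle inequality gives $\|A - \hat{P}\|_{\mathrm{op}} \leq \|A - P\|_{\mathrm{op}} + \|P - \hat{P}\|_{\mathrm{op}}$. The first piece equals $O_p(\sqrt{m})$ by splitting over \textbf{true} blocks exactly as in the proof of Theorem \ref{th:realizable} together with $\|Z\|_{\mathrm{op}} = O_p(\sqrt{m})$. For the second piece, each $\hat{P}_{ij}$ is a sample average of $A_{ij}$'s over an estimated block of size $\Theta(m^2)$; hence $\hat{P}_{ij}$ is a convex combination of the bounded constants $B_{k'h'}$ up to an $O_p(1/m)$ fluctuation, which gives $|P_{ij} - \hat{P}_{ij}| = O_p(1)$ entrywise. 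The Frobenius bound then yields $\|P - \hat{P}\|_{\mathrm{op}} \leq \|P - \hat{P}\|_{\mathrm{F}} = O_p(\sqrt{np}) = O_p(m)$.

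The main obstacle is (ii): unlike in the realizable case, an estimated block may pool entries from several \textbf{true} blocks with different means $B_{k'h'}$, so $\hat{S}_{kh}^2$ is a mixture variance rather than converging to any single $S_{kh}^2$. My plan is, for each $(i,j)$ in the estimated $(k,h)$-block with true block index $(k',h')$, to expand $A_{ij} - \hat{B}_{kh} = (B_{k'h'} - \hat{B}_{kh}) + S_{k'h'} Z_{ij}$ and average the square. The squared mean-gap term is non-negative; the cross term vanishes in probability, because averaging mean-zero $Z_{ij}$ over each true-block subpopulation of size $\Omega(m^2)$ produces a fluctuation of order $O_p(1/m)$; and the remaining quadratic piece is at least $(\min_{k',h'} S_{k'h'})^2 \cdot \mathrm{avg}[Z_{ij}^2] = (\min_{k',h'} S_{k'h'})^2 (1 + o_p(1))$ by the law of large numbers (the finite fourth moment needed for LLN follows from the sub-exponential assumption). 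This gives $\hat{S}_{kh} \geq \tfrac{1}{2} \min_{k',h'} S_{k'h'} > 0$ with probability tending to one, uniformly in the finitely many $(k,h)$, which closes the argument.
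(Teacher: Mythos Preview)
Your step (ii) has a genuine gap. In the unrealizable case the paper imposes \emph{no} assumption whatsoever on the clustering algorithm's output (the consistency assumption \ref{asmp:consistency} is stated only for the realizable case). Hence an estimated block may have arbitrary size---possibly a single entry, in which case $\hat S_{kh}=0$---and, more subtly, the estimated partition $(\hat g^{(1)},\hat g^{(2)})$ is a data-dependent function of $A$, so you cannot invoke the law of large numbers on sums over ``true-block subpopulations of size $\Omega(m^2)$'': those subpopulations need not be large, and even when they are, the index set over which you average is correlated with the summands $Z_{ij}$. Your claim $\min_{k,h}\hat S_{kh}=\Omega_p(1)$ is therefore unsupported, and your preliminary bound $\|\hat Z\|_{\mathrm{op}}\le (K_0H_0/\min_{k,h}\hat S_{kh})\,\|A-\hat P\|_{\mathrm{op}}$ throws away precisely the cancellation that makes the result true. (The same ``$\Theta(m^2)$ block size'' assumption sneaks into your step (i) as well, though there it is less fatal.)

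The paper's argument sidesteps all of this with a one-line deterministic observation. By definition (\ref{eq:BPS_hat}), $\hat S_{kh}$ is itself the normalized Frobenius norm of the numerator:
\[
\hat S_{kh}=\frac{1}{\sqrt{\underline n^{(k,h)}\underline p^{(k,h)}}}\,\bigl\|\underline A^{(k,h)}-\underline{\hat P}^{(k,h)}\bigr\|_{\mathrm F}.
\]
Therefore, block by block,
\[
\bigl\|\underline{\hat Z}^{(k,h)}\bigr\|_{\mathrm{op}}
=\frac{\sqrt{\underline n^{(k,h)}\underline p^{(k,h)}}}{\bigl\|\underline A^{(k,h)}-\underline{\hat P}^{(k,h)}\bigr\|_{\mathrm F}}\,
\bigl\|\underline A^{(k,h)}-\underline{\hat P}^{(k,h)}\bigr\|_{\mathrm{op}}
\le \sqrt{\underline n^{(k,h)}\underline p^{(k,h)}},
\]
using only $\|\cdot\|_{\mathrm{op}}\le\|\cdot\|_{\mathrm F}$. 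Summing over the $K_0H_0$ estimated blocks gives $\|\hat Z\|_{\mathrm{op}}\le K_0H_0\sqrt{np}=O(m)$ with no probabilistic input at all---no lower bound on $\hat S_{kh}$, no block-size hypothesis, no LLN. The rest of your outline ($\hat\lambda_1=O_p(m^2)$, $a=O(m)$, $b=\Theta(m^{1/3})$, hence $T=O_p(m^{5/3})$) is correct and matches the paper.
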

\begin{proof}
We define $P$, $\bar{P}$, and $\hat{P}$ as in Theorem \ref{th:unrealizable_lower}. Let $\underline{\hat{Z}}^{(k, h)}$, $\underline{A}^{(k, h)}$, and $\underline{\hat{P}}^{(k, h)}$, respectively, be the $(k, h)$th \textbf{estimated} blocks of matrices $\hat{Z}$, $A$, and $\hat{P}$. We denote the row and column sizes of the $(k, h)$th \textbf{estimated} block as $\underline{n}_k$ and $\underline{p}_h$, respectively. Since the operator norm of a matrix is not larger than the sum of the operator norms of all its blocks, we have
\begin{align}
\label{eq:z_hat_op_upper}
\| \hat{Z} \|_{\mathrm{op}} 
&\leq \sum_{k=1}^{K_0} \sum_{h=1}^{H_0} \| \underline{\hat{Z}}^{(k, h)} \|_{\mathrm{op}} 
= \sum_{k=1}^{K_0} \sum_{h=1}^{H_0} \frac{1}{\hat{S}_{kh}} \| \underline{A}^{(k, h)} - \underline{\hat{P}}^{(k, h)} \|_{\mathrm{op}} \nonumber \\
&= \sum_{k=1}^{K_0} \sum_{h=1}^{H_0} \frac{\sqrt{\underline{n}_k \underline{p}_h}}{\| \underline{A}^{(k, h)} - \underline{\hat{P}}^{(k, h)} \|_{\mathrm{F}}} \| \underline{A}^{(k, h)} - \underline{\hat{P}}^{(k, h)} \|_{\mathrm{op}} \nonumber \\
&\leq \sum_{k=1}^{K_0} \sum_{h=1}^{H_0} \frac{\sqrt{\underline{n}_k \underline{p}_h}}{\| \underline{A}^{(k, h)} - \underline{\hat{P}}^{(k, h)} \|_{\mathrm{F}}} \| \underline{A}^{(k, h)} - \underline{\hat{P}}^{(k, h)} \|_{\mathrm{F}} \nonumber \\
&= \sum_{k=1}^{K_0} \sum_{h=1}^{H_0} \sqrt{\underline{n}_k \underline{p}_h} 
\leq K_0 H_0 \sqrt{np} = O_p (m).
\end{align}

The test statistic is $T = \frac{\hat{\lambda}_1 - a}{b}$, where $\hat{\lambda}_1 = \| \hat{Z} \|_{\mathrm{op}}^2 = O_p (m^2)$. Based on the same discussion as in Theorem \ref{th:unrealizable_lower}, $a = O_p (m)$ and (\ref{eq:bm}) hold. Consequently, we obtain $T = O_p (m^2 / m^{\frac{1}{3}}) = O_p (m^{\frac{5}{3}})$, which concludes the proof. 
\end{proof}


\section{Experiments}
\label{sec:experiments}

\subsection{Realizable case: convergence of test statistic $T$ in law to Tracy-Widom distribution}
\label{sec:exp_realizable}

First of all, we checked the convergence of the proposed test statistic $T$ in law to the Tracy-Widom distribution with index $1$, under the \textit{realizable} setting, which has been stated in Theorem \ref{th:realizable}, by using synthetic data that were generated based on three types of distributions: 

\begin{itemize}
\item \textbf{Gaussian Latent Block Model}: The observed matrices were generated whose entries in the $(k, h)$th block follows the normal distribution $\mathcal{N} (B_{kh}, S_{kh})$. In the Gaussian LBM setting, we used the following null model and parameters: 
\begin{eqnarray}
(K, H) = (4, 3), \ 
B = 
\begin{pmatrix}
0.9 & 0.1 & 0.4  \\
0.2 & 0.7 & 0.3  \\
0.3 & 0.2 & 0.8  \\
0.6 & 0.9 & 0.1  \\
\end{pmatrix}, \ 
S = 
\begin{pmatrix}
0.08 & 0.06 & 0.15  \\
0.14 & 0.12 & 0.07  \\
0.09 & 0.1 & 0.11  \\
0.16 & 0.13 & 0.05  \\
\end{pmatrix}. 
\end{eqnarray}
\item \textbf{Bernoulli Latent Block Model} The observed matrices were generated whose entries in the $(k, h)$th block follows the normal distribution $\mathrm{Bernoulli} (B_{kh})$. In the Bernoulli LBM setting, we used the following null model and parameters: 
\begin{eqnarray}
(K, H) = (4, 3), \ \ \ \ \ 
B = 
\begin{pmatrix}
0.9 & 0.1 & 0.4  \\
0.2 & 0.7 & 0.3  \\
0.3 & 0.2 & 0.8  \\
0.6 & 0.9 & 0.1  \\
\end{pmatrix}. 
\end{eqnarray}
\item \textbf{Poisson Latent Block Model} The observed matrices were generated whose entries in the $(k, h)$th block follows the normal distribution $\mathrm{Pois} (B_{kh})$. In the Poisson LBM setting, we used the following null model and parameters: 
\begin{eqnarray}
(K, H) = (4, 3), \ \ \ \ \ 
B = 
\begin{pmatrix}
9.0 & 1.0 & 4.0  \\
2.0 & 7.0 & 3.0  \\
3.0 & 2.0 & 8.0  \\
6.0 & 9.0 & 1.0  \\
\end{pmatrix}. 
\end{eqnarray}
\end{itemize}

Based on the above Latent Block Model, we randomly generated $1000$ observed matrices, estimated their block structures based on the Ward's hierarchical clustering algorithm \cite{Ward1963}, and computed the test statistic $T$. With respect to the matrix size, we tried the following $10$ settings: $(n, p) = (300 \times i, 225 \times i)$, $i = 1, \dots, 10$. When generating an observed matrix, the null cluster of each row was randomly chosen from the discrete uniform distribution on $\{1, 2, 3, 4\}$. Similarly, the null cluster of each column was randomly chosen from the discrete uniform distribution on $\{1, 2, 3\}$. 

Figures \ref{fig:QQ_normal}, \ref{fig:QQ_bernoulli}, and \ref{fig:QQ_poisson}, respectively, show the Q-Q plots of the test statistic $T$ and the $TW_1$ distribution in the settings of Gaussian, Bernoulli, and Poisson settings. Each plotted point corresponds to a sample of test statistic $T$, and the horizontal and vertical lines, respectively, show its theoretical and sample quantiles. These figures show that the test statistic converged in law to the $TW_1$ distribution. 

Figure \ref{fig:preliminaryT} shows the ratios of the trials where $T \geq t(0.01)$, $T \geq t(0.05)$, and $T \geq t(0.1)$ for the above three distributional settings, where $t(\alpha)$ is the $\alpha$ upper quantile of the $TW_1$ distribution. We used the approximated values $t(0.01) \approx 2.02345$, $t(0.05) \approx 0.97931$, and $t(0.1) \approx 0.45014$, according to Table $2$ in \cite{Tracy2009}. From Figure \ref{fig:preliminaryT}, we see that the tail probability of the test statistic $T$ also converged to those of the $TW_1$ distributions for all of the three distributional settings. 

\begin{figure}
  \centering
  \includegraphics[width=0.192\hsize]{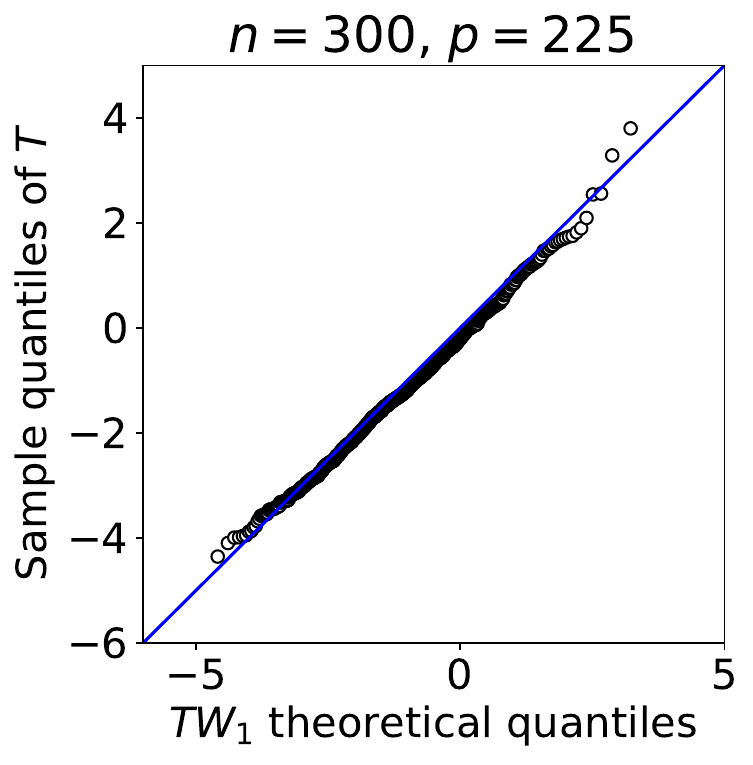}
  \includegraphics[width=0.192\hsize]{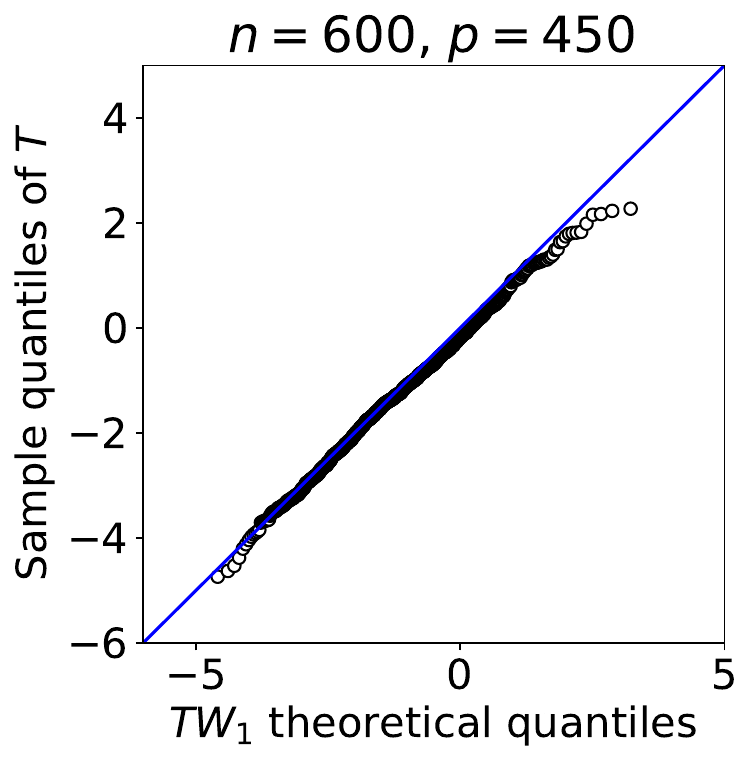}
  \includegraphics[width=0.192\hsize]{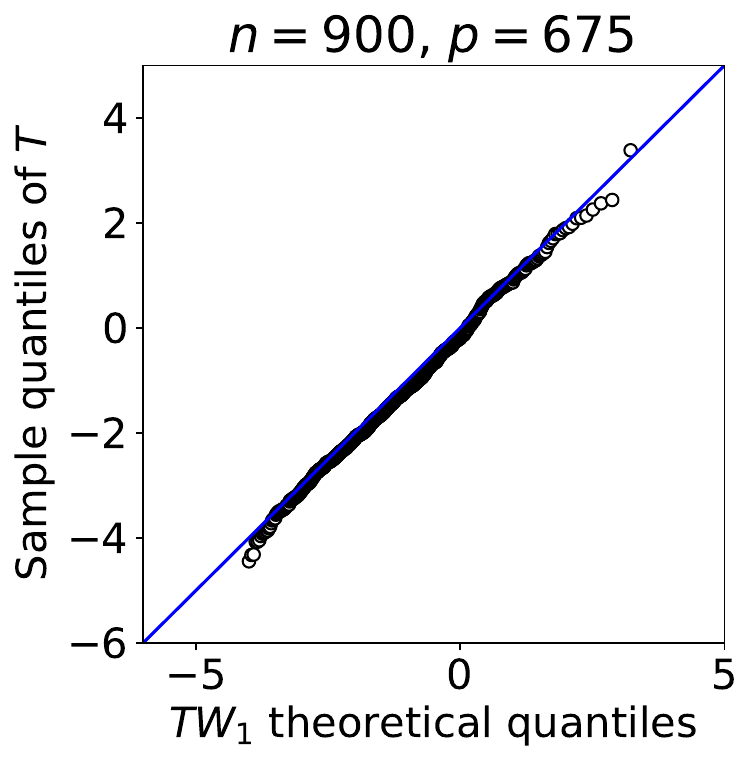}
  \includegraphics[width=0.192\hsize]{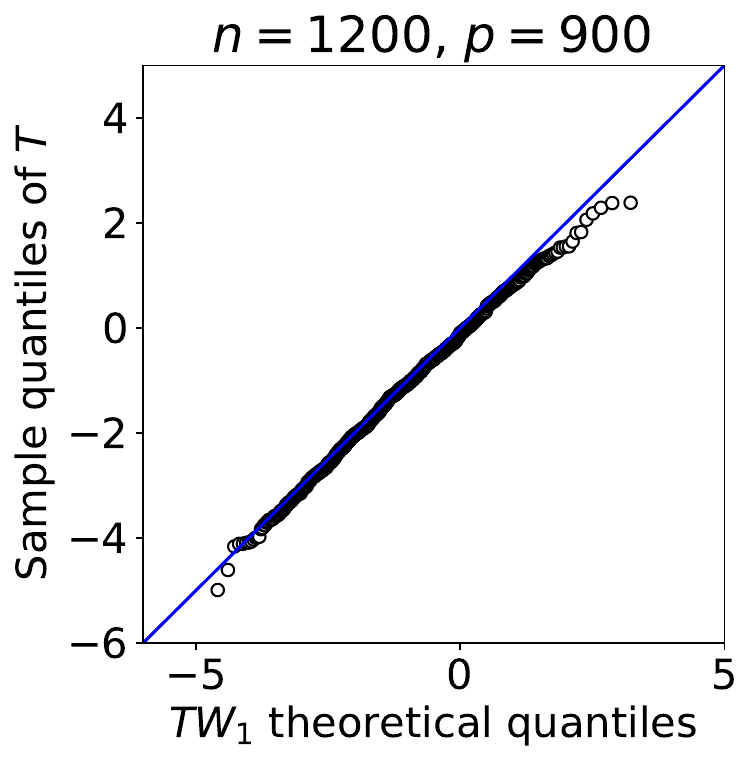}
  \includegraphics[width=0.192\hsize]{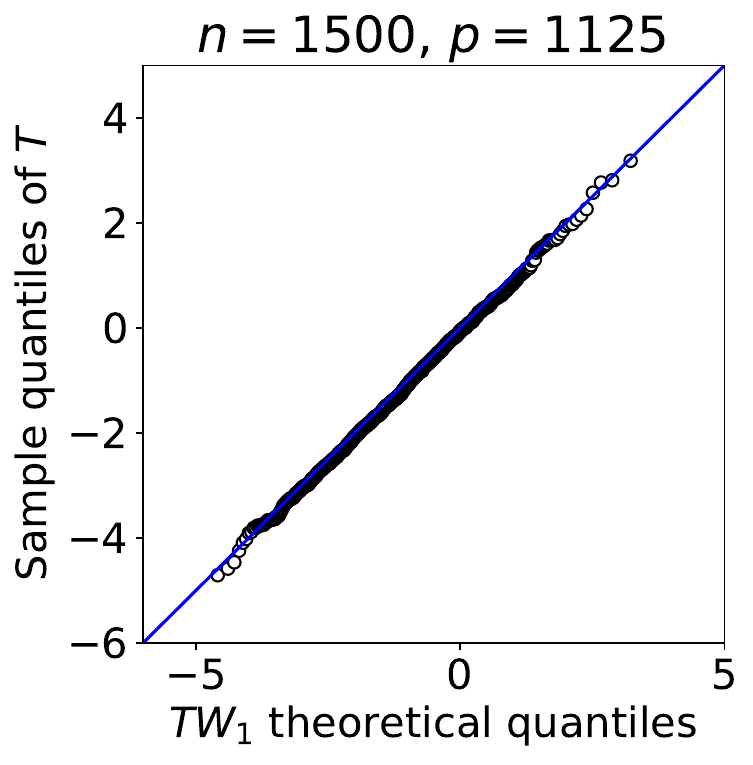}\\
  \includegraphics[width=0.192\hsize]{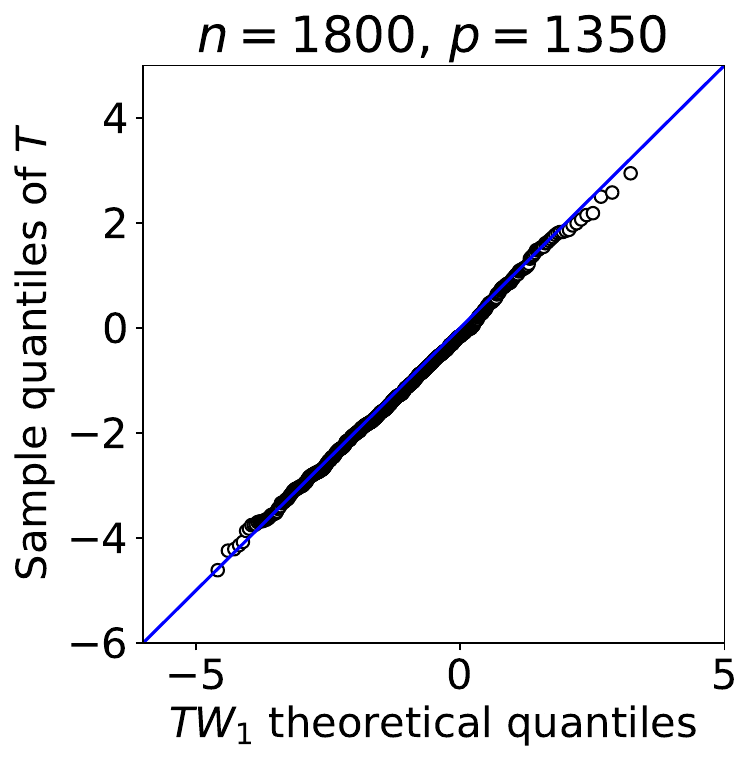}
  \includegraphics[width=0.192\hsize]{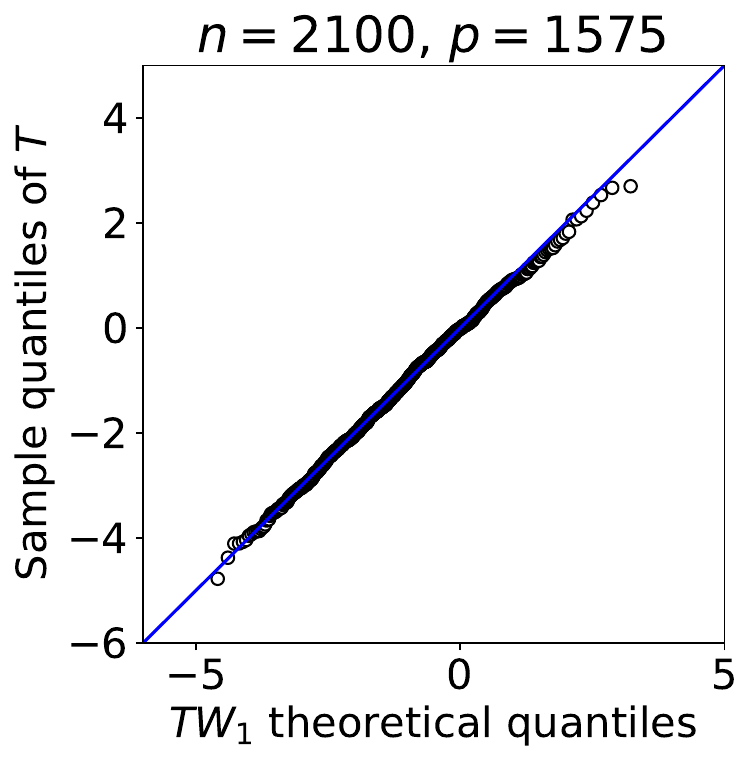}
  \includegraphics[width=0.192\hsize]{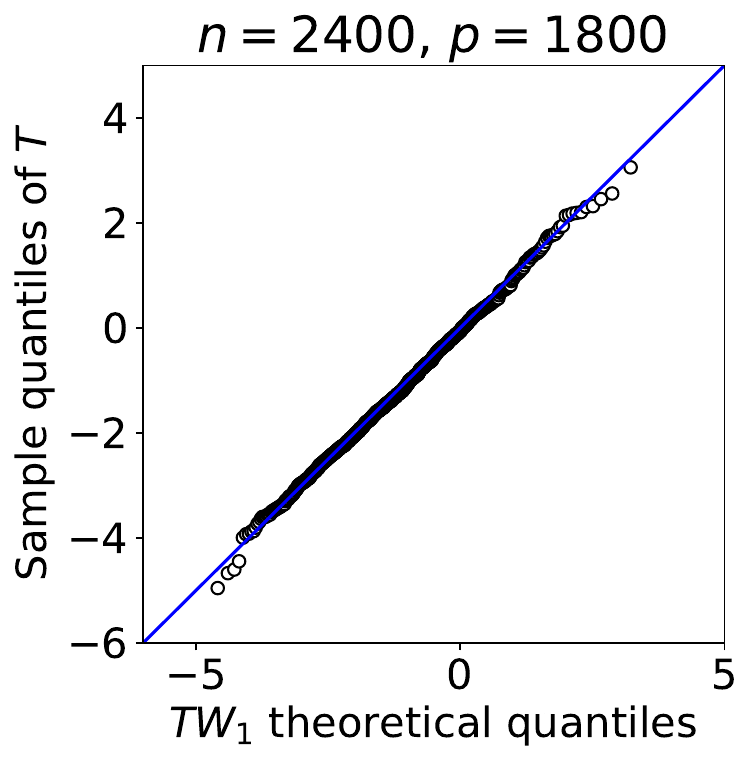}
  \includegraphics[width=0.192\hsize]{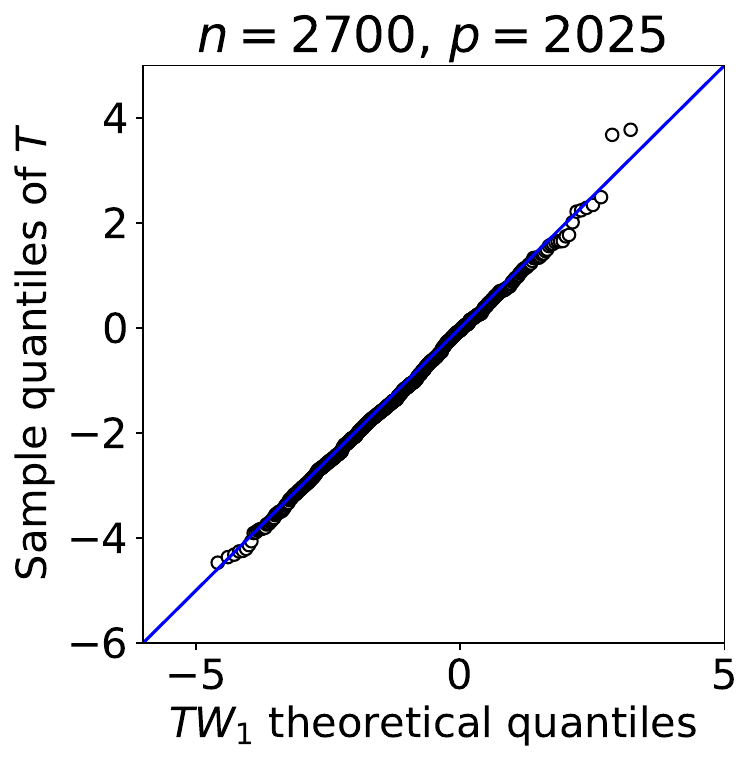}
  \includegraphics[width=0.192\hsize]{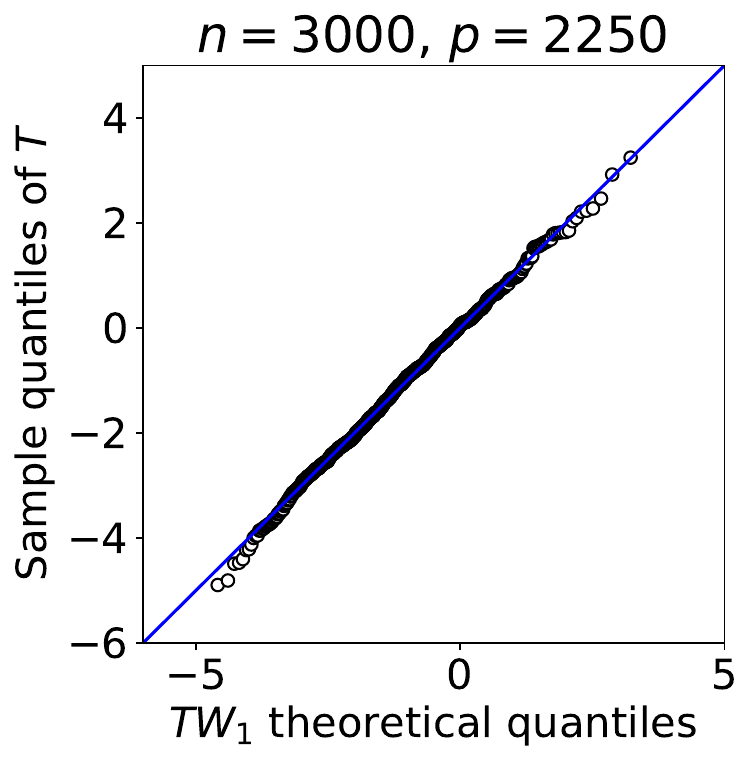}
  \caption{Q-Q plot of test statistic $T$ against the $TW_1$ distribution in the setting of \textbf{Gaussian case}. }\vspace{5mm}
  \label{fig:QQ_normal}
  \includegraphics[width=0.192\hsize]{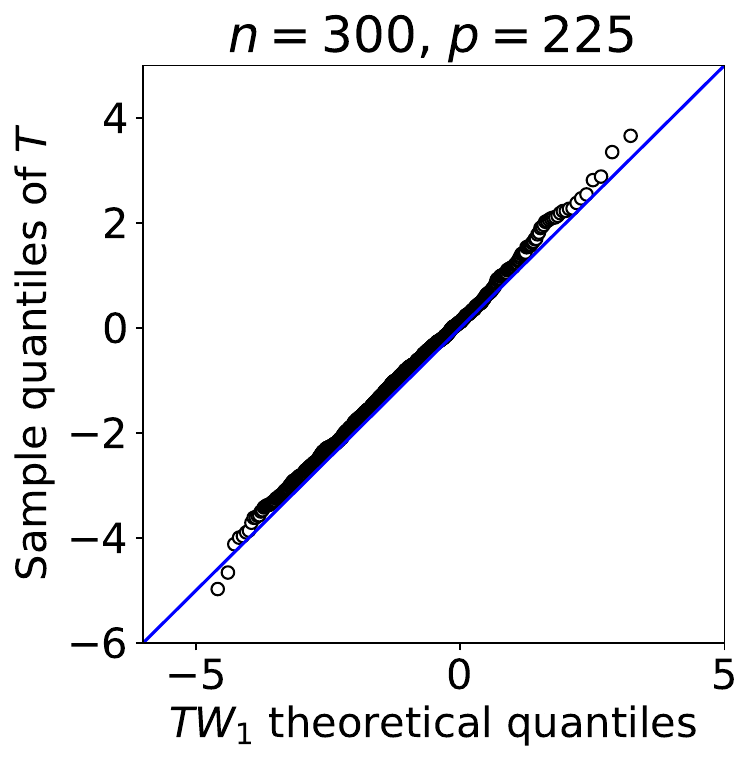}
  \includegraphics[width=0.192\hsize]{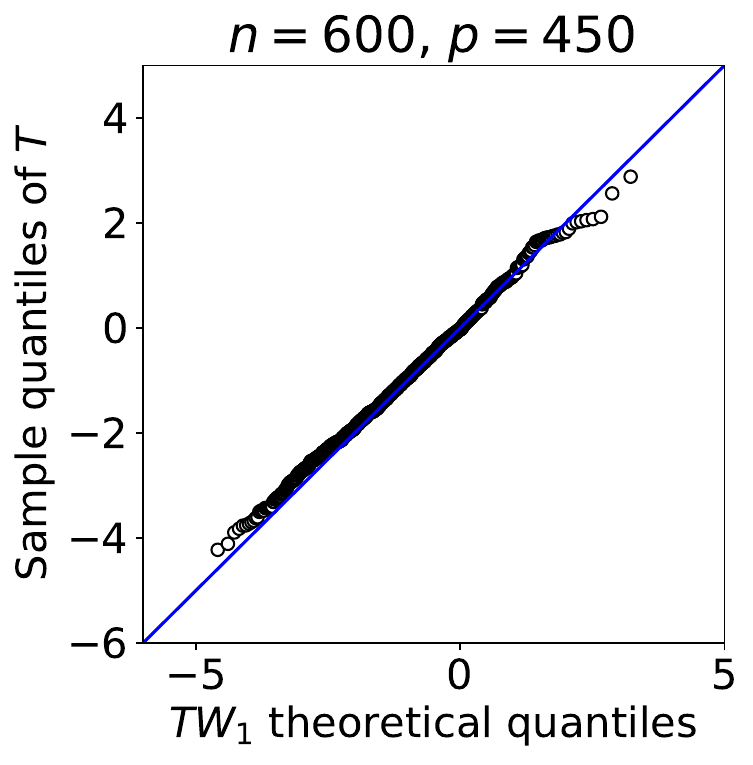}
  \includegraphics[width=0.192\hsize]{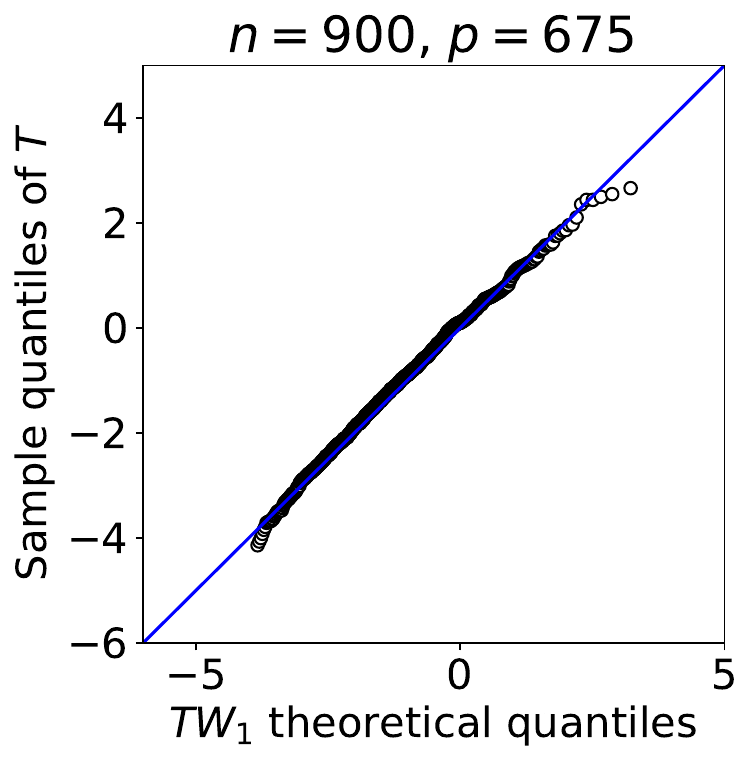}
  \includegraphics[width=0.192\hsize]{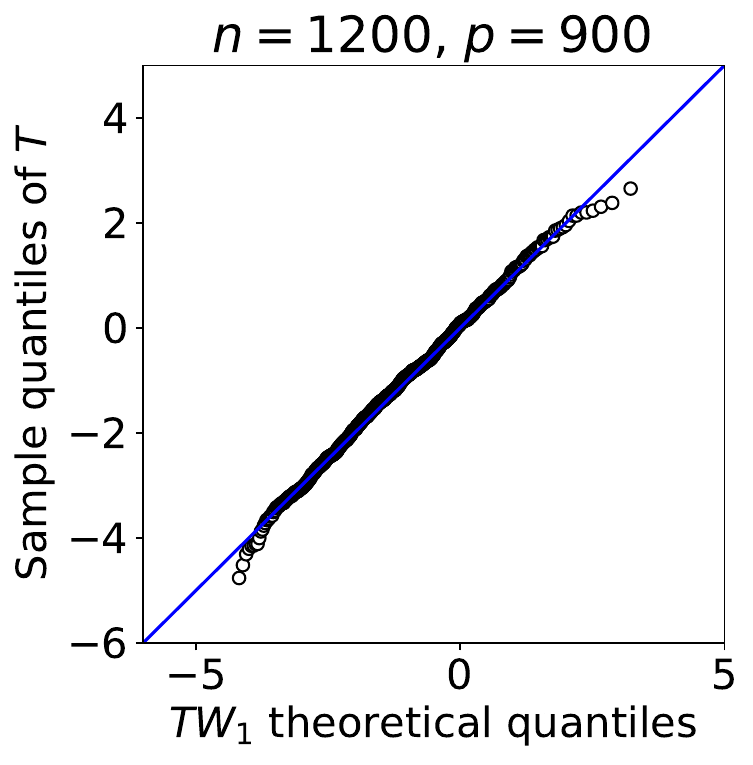}
  \includegraphics[width=0.192\hsize]{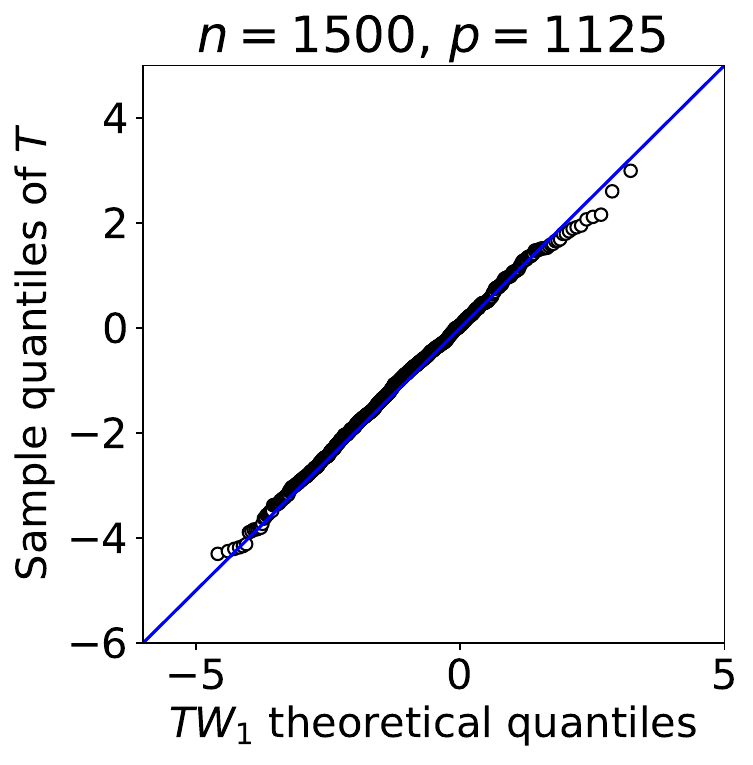}\\
  \includegraphics[width=0.192\hsize]{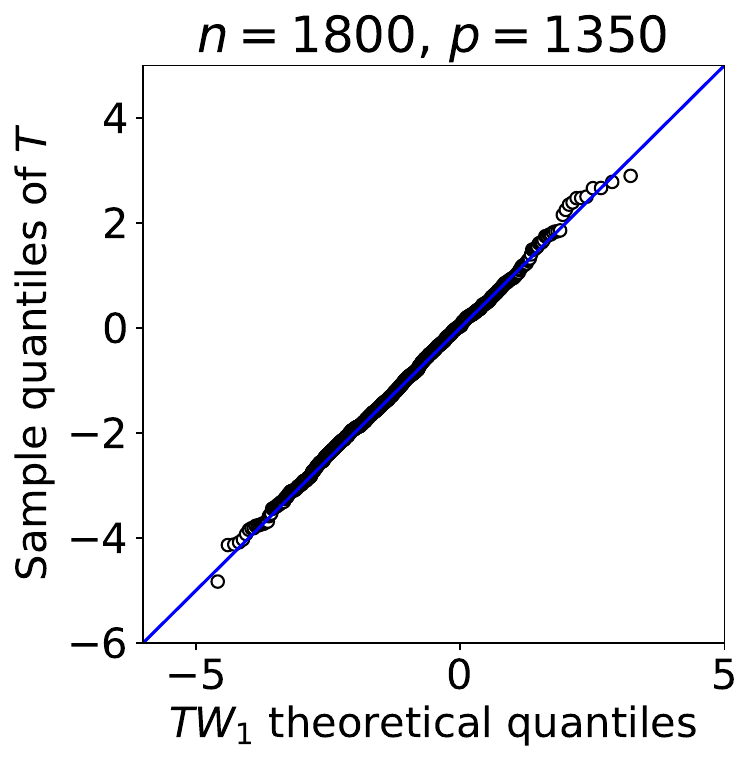}
  \includegraphics[width=0.192\hsize]{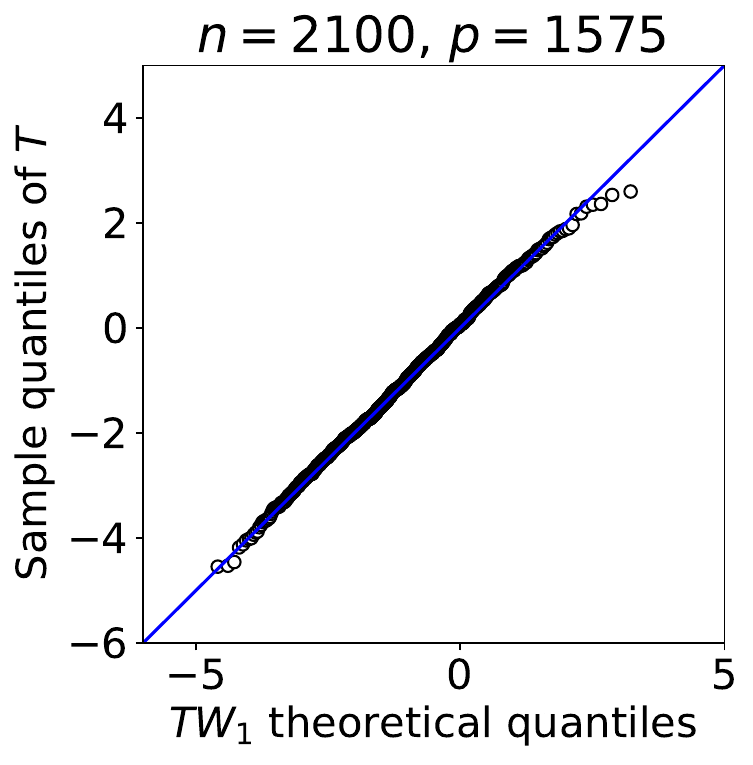}
  \includegraphics[width=0.192\hsize]{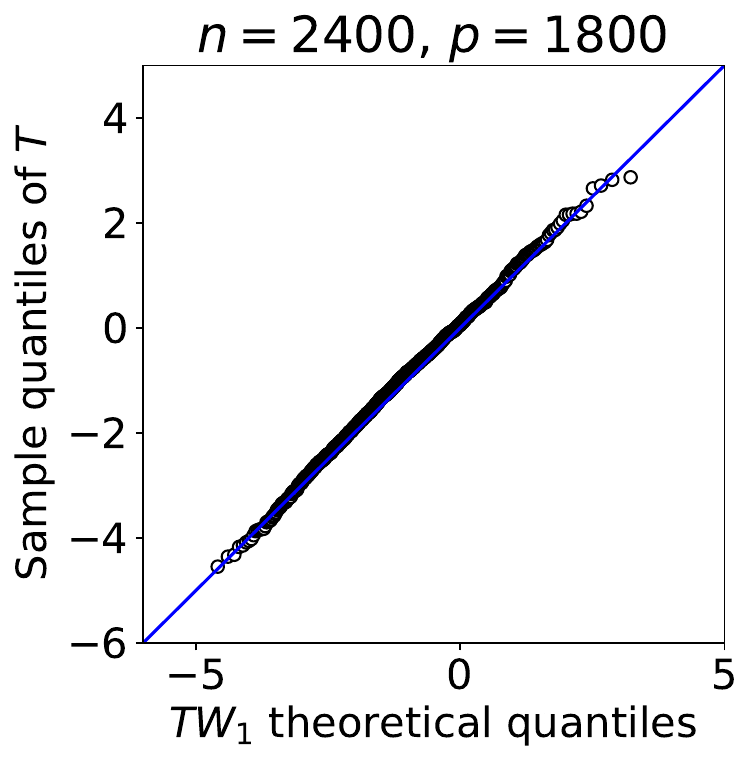}
  \includegraphics[width=0.192\hsize]{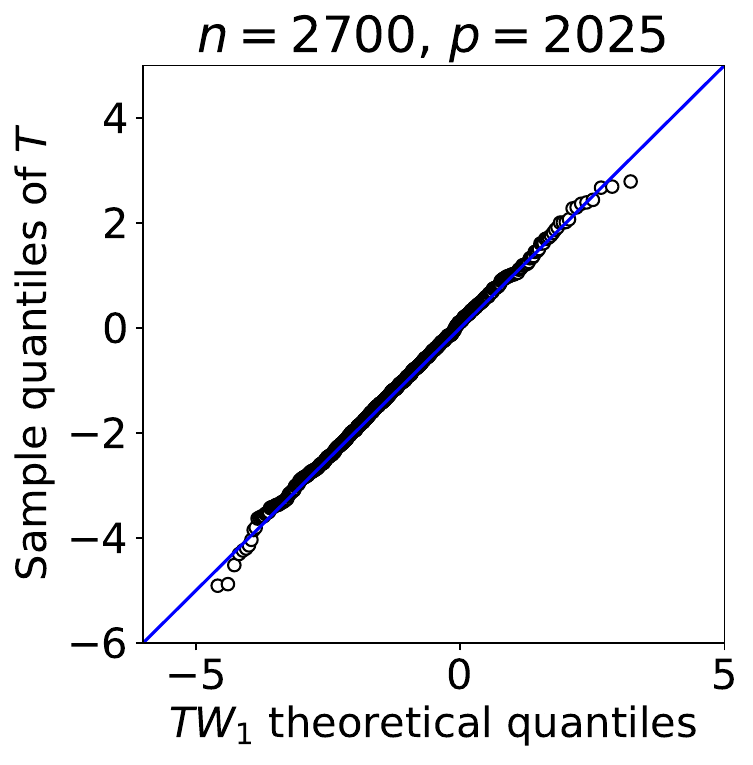}
  \includegraphics[width=0.192\hsize]{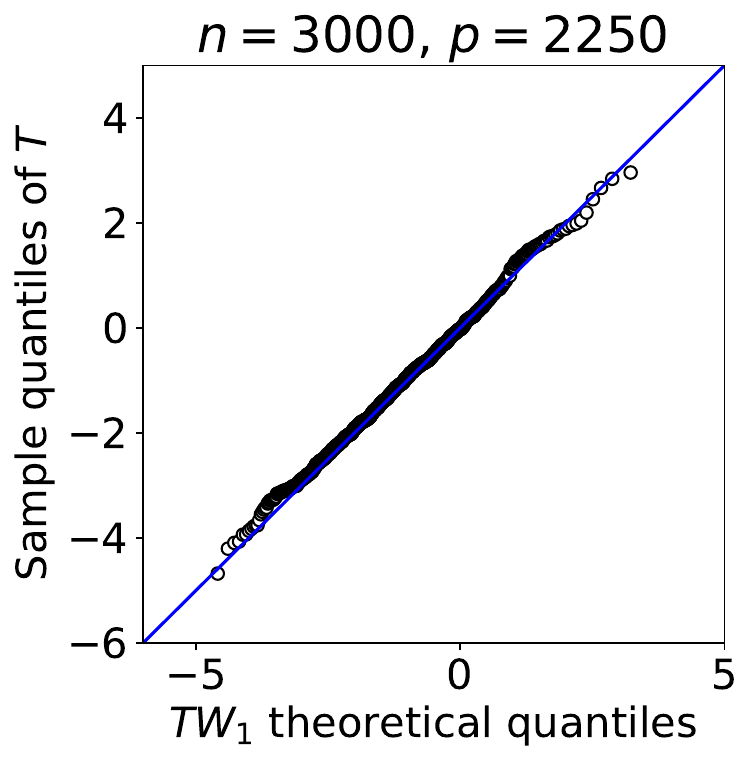}
  \caption{Q-Q plot of test statistic $T$ against the $TW_1$ distribution in the setting of \textbf{Bernoulli case}. }\vspace{5mm}
  \label{fig:QQ_bernoulli}
  \includegraphics[width=0.192\hsize]{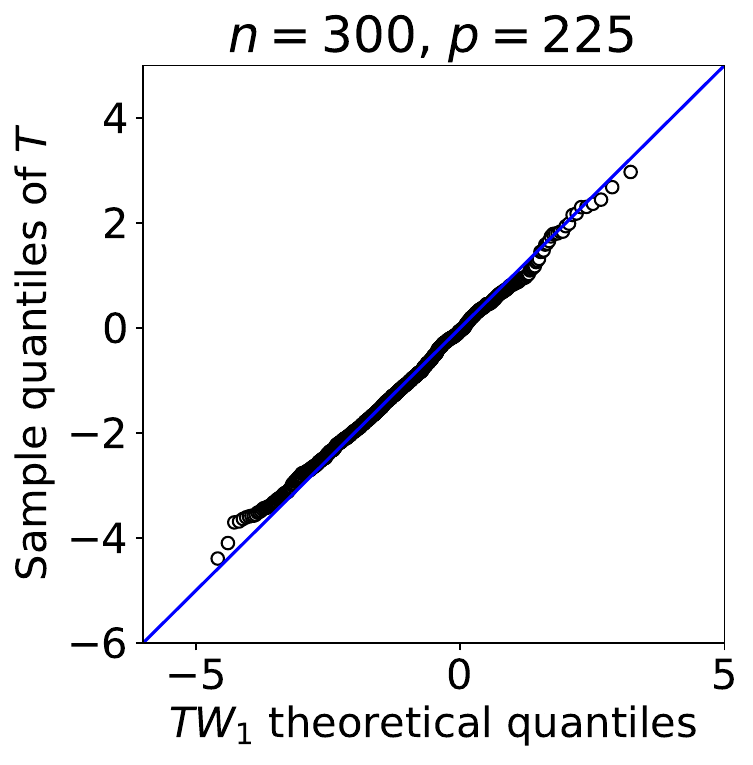}
  \includegraphics[width=0.192\hsize]{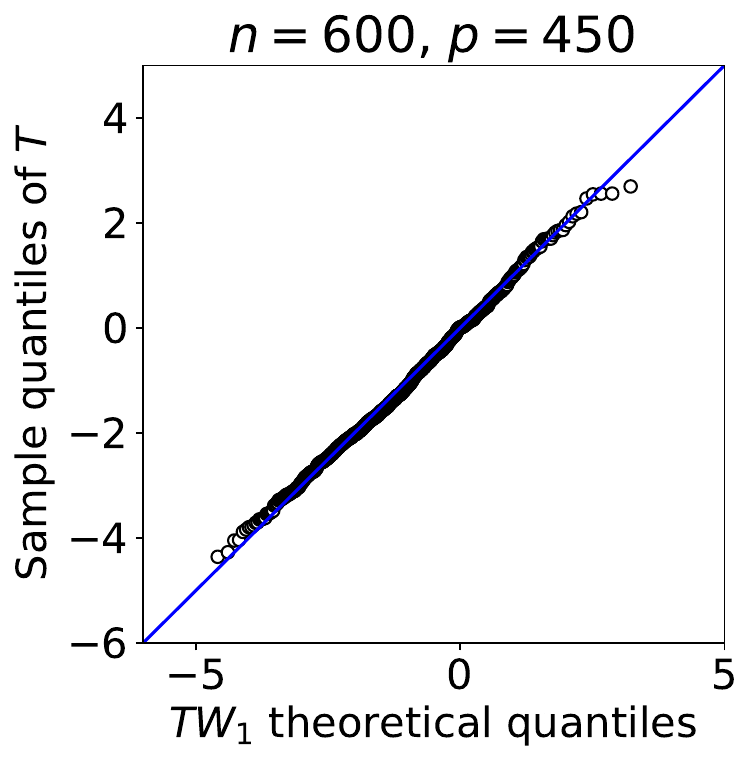}
  \includegraphics[width=0.192\hsize]{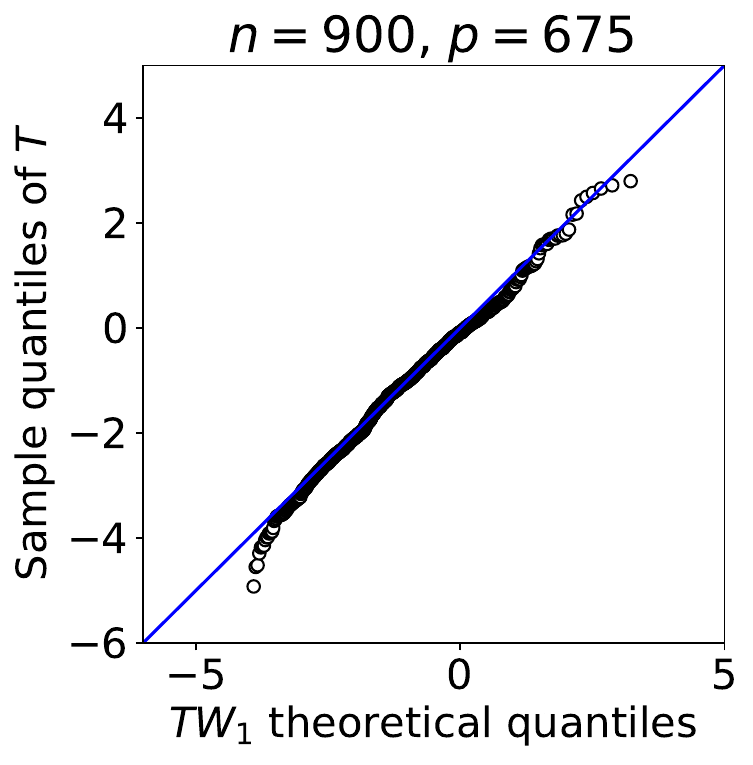}
  \includegraphics[width=0.192\hsize]{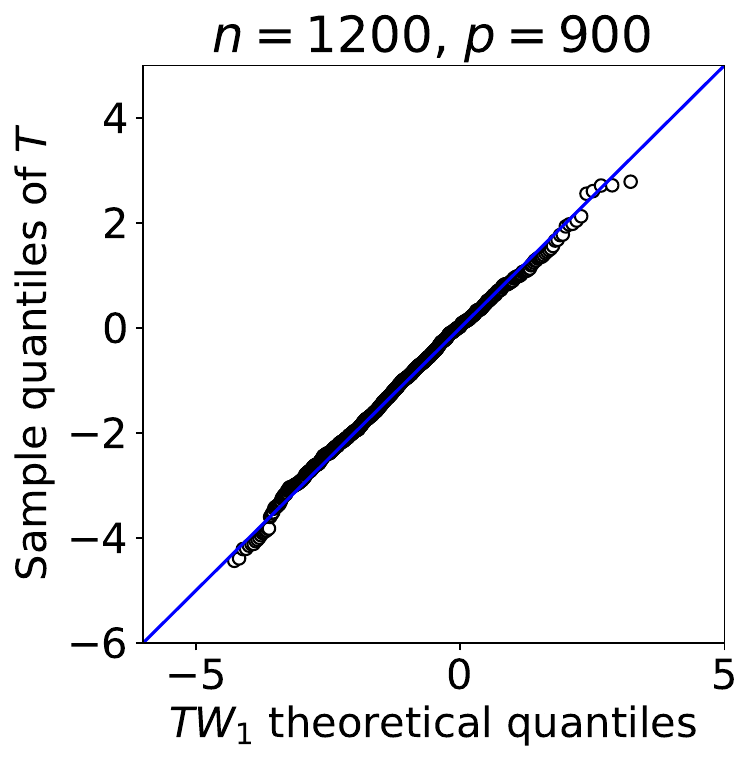}
  \includegraphics[width=0.192\hsize]{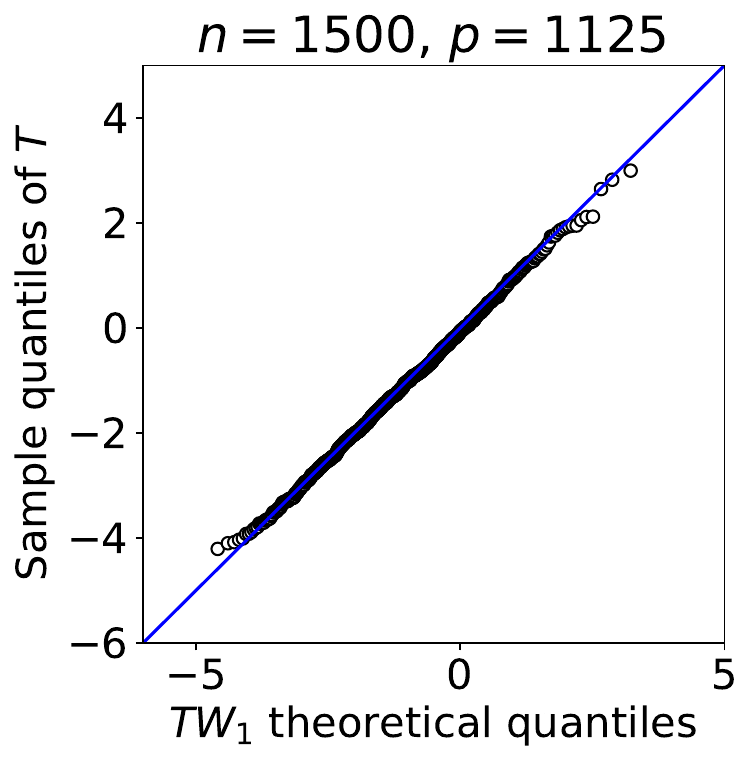}\\
  \includegraphics[width=0.192\hsize]{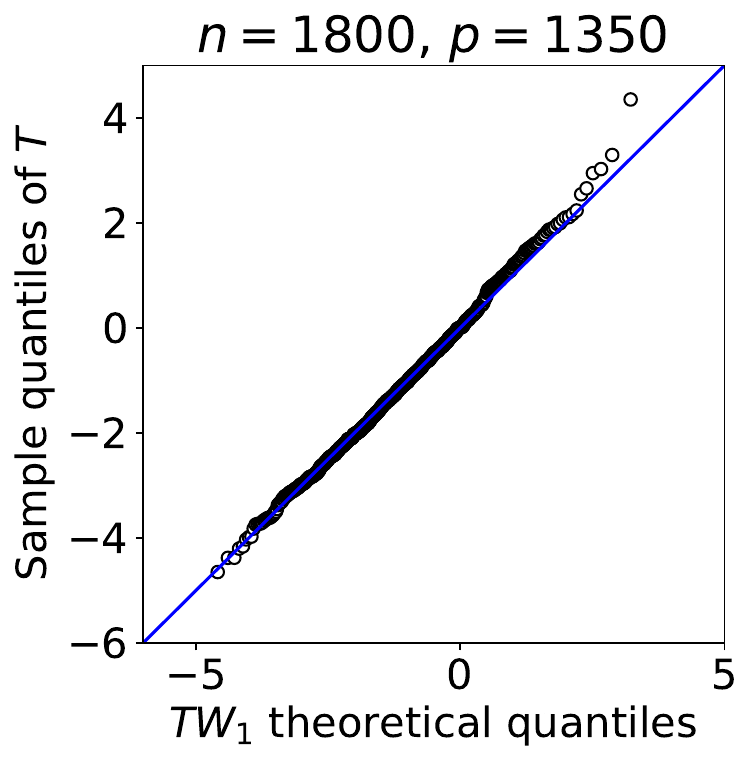}
  \includegraphics[width=0.192\hsize]{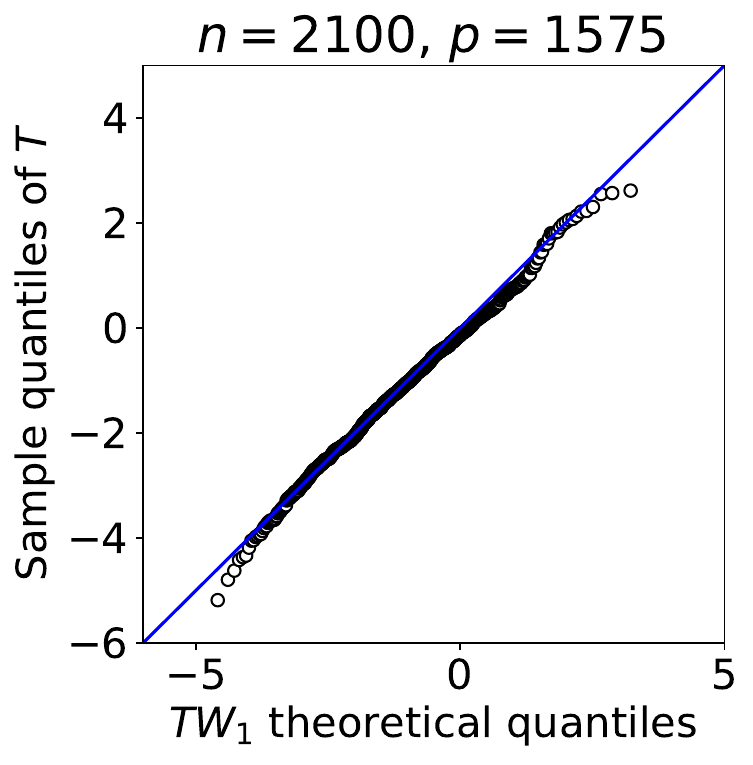}
  \includegraphics[width=0.192\hsize]{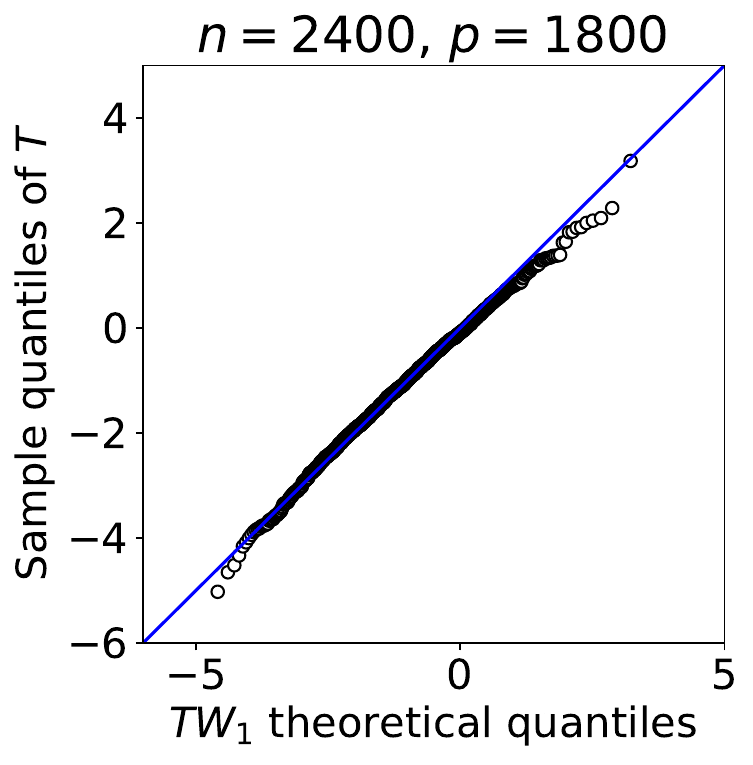}
  \includegraphics[width=0.192\hsize]{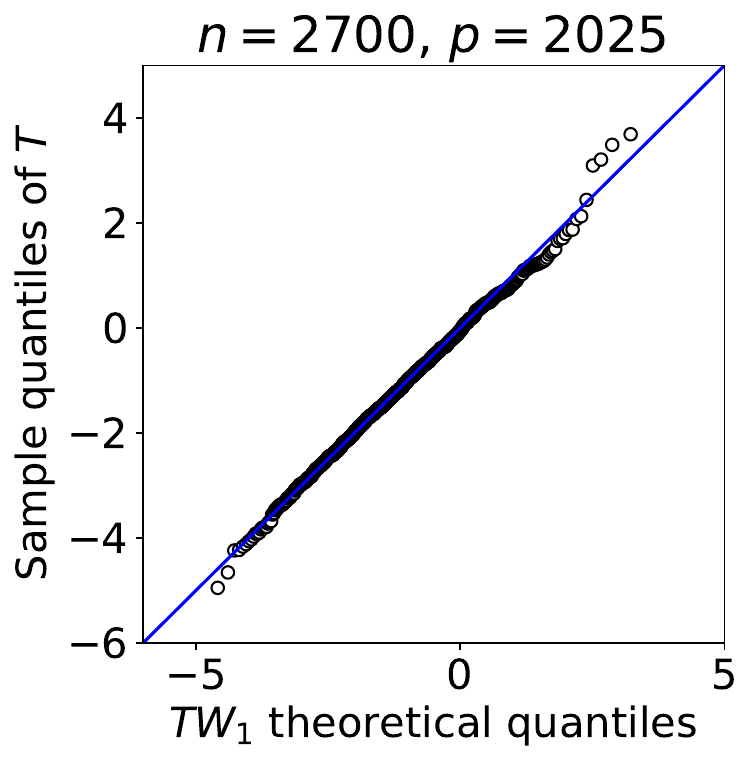}
  \includegraphics[width=0.192\hsize]{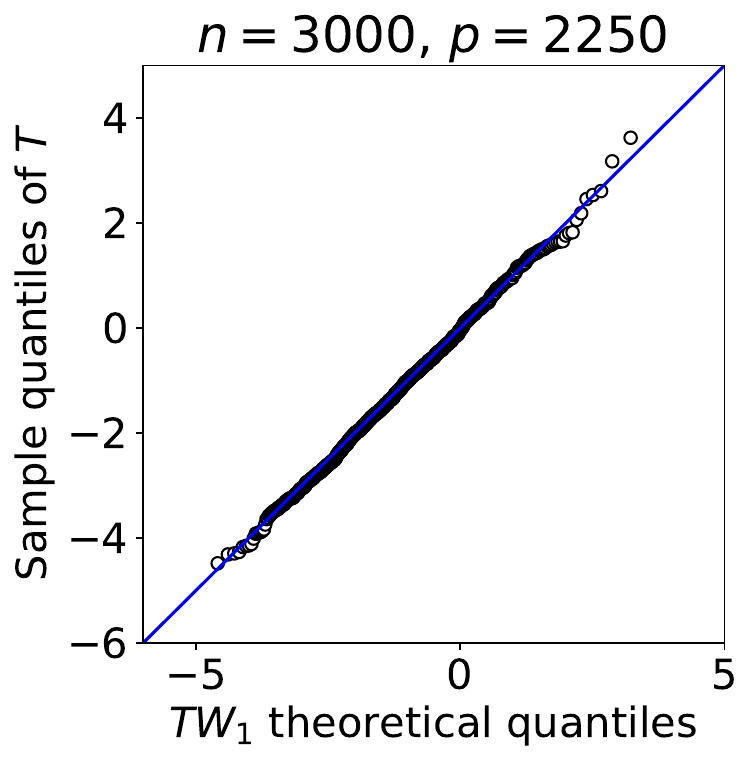}
  \caption{Q-Q plot of test statistic $T$ against the $TW_1$ distribution in the setting of \textbf{Poisson case}. }
  \label{fig:QQ_poisson}
\end{figure}
\begin{figure}[t]
  \centering
  \includegraphics[width=0.32\hsize]{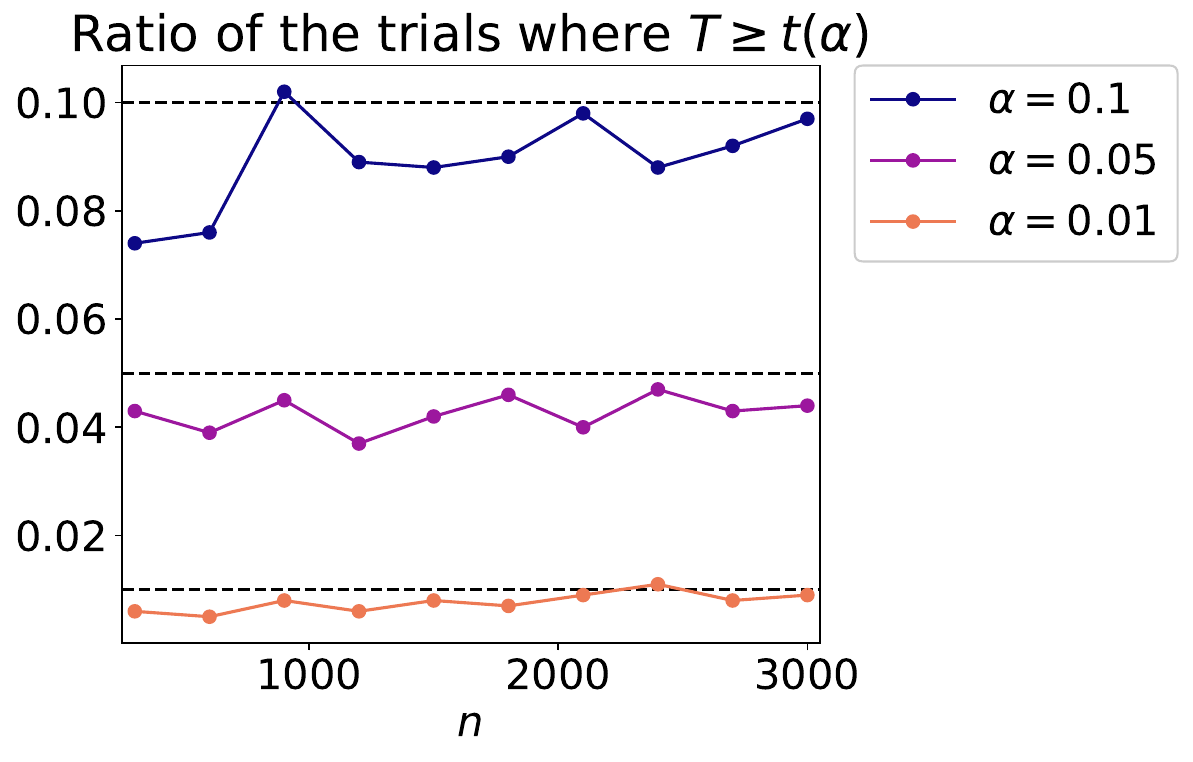}
  \includegraphics[width=0.32\hsize]{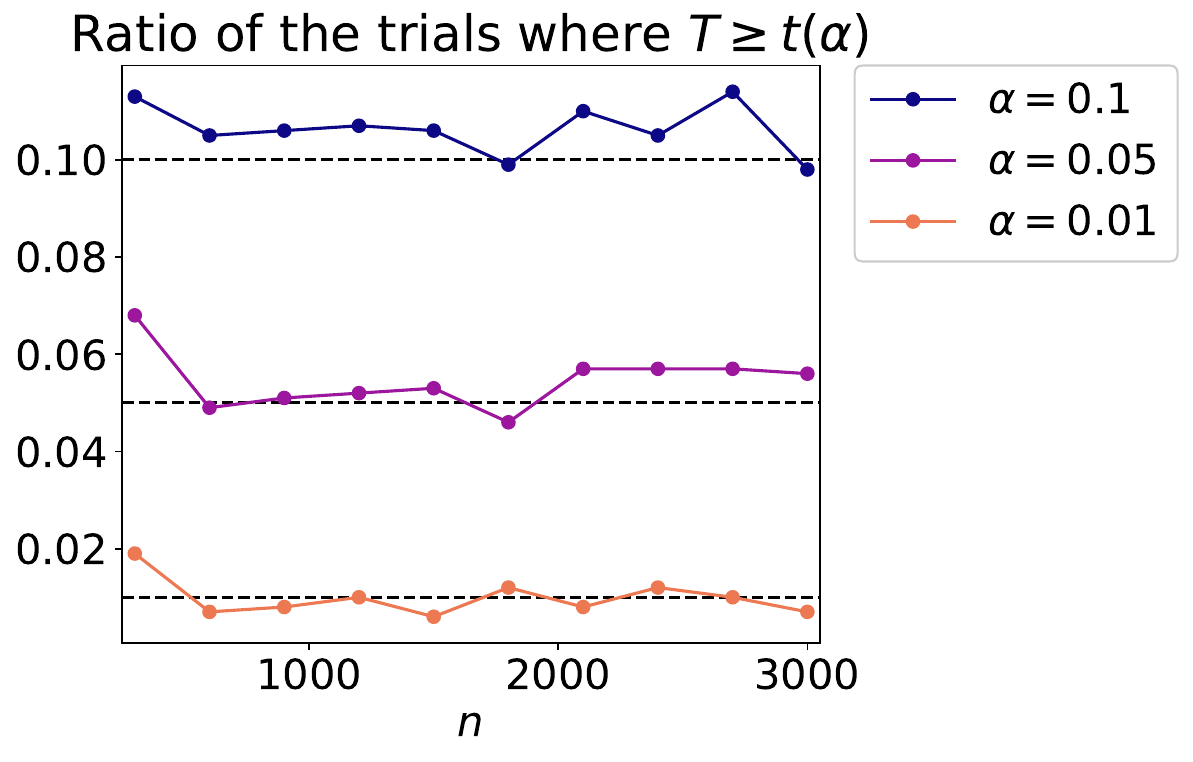}
  \includegraphics[width=0.32\hsize]{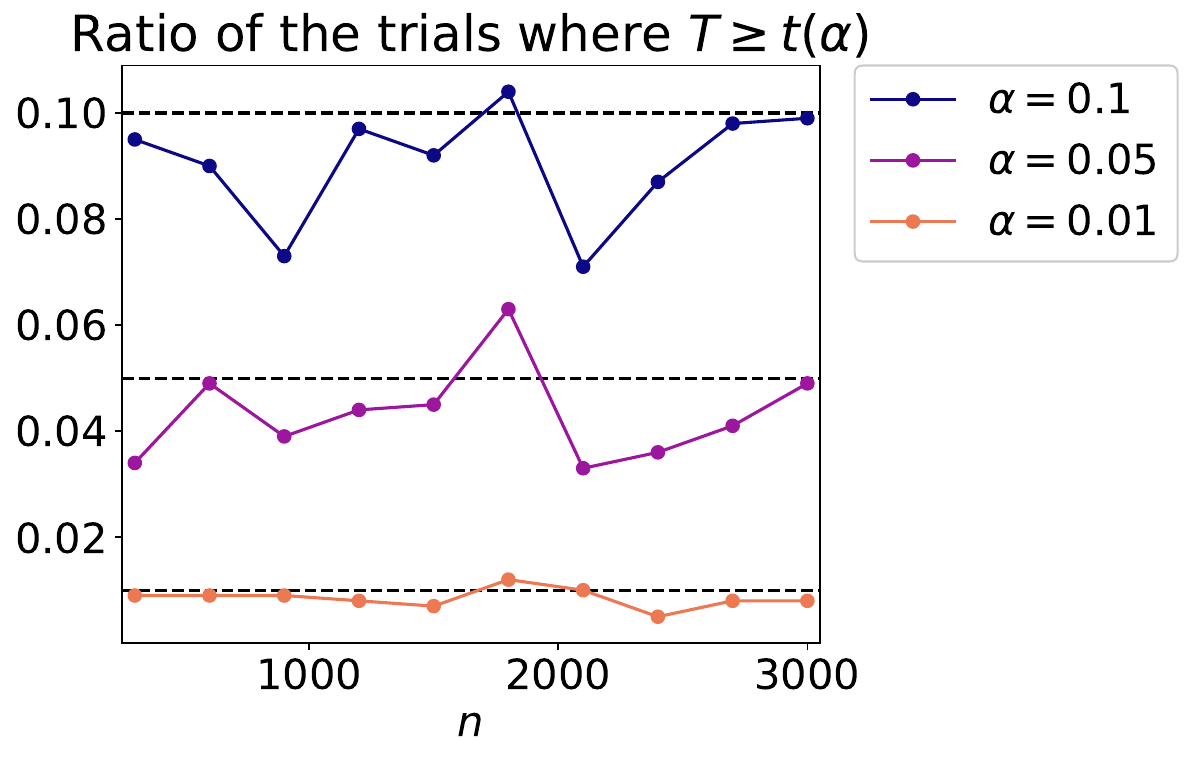}
  \caption{Ratio of the number of trials where test statistic $T \geq t(\alpha)$, where $t(\alpha)$ is the $\alpha$ upper quantile of the $TW_1$ distribution. The left, center, and right figures, respectively, show the results for the settings of Gaussian, Bernoulli, and Poisson distributions. The horizontal line shows the number of rows $n$ in the observed matrix. }\vspace{5mm}
  \label{fig:preliminaryT}
  \centering
  \includegraphics[width=0.32\hsize]{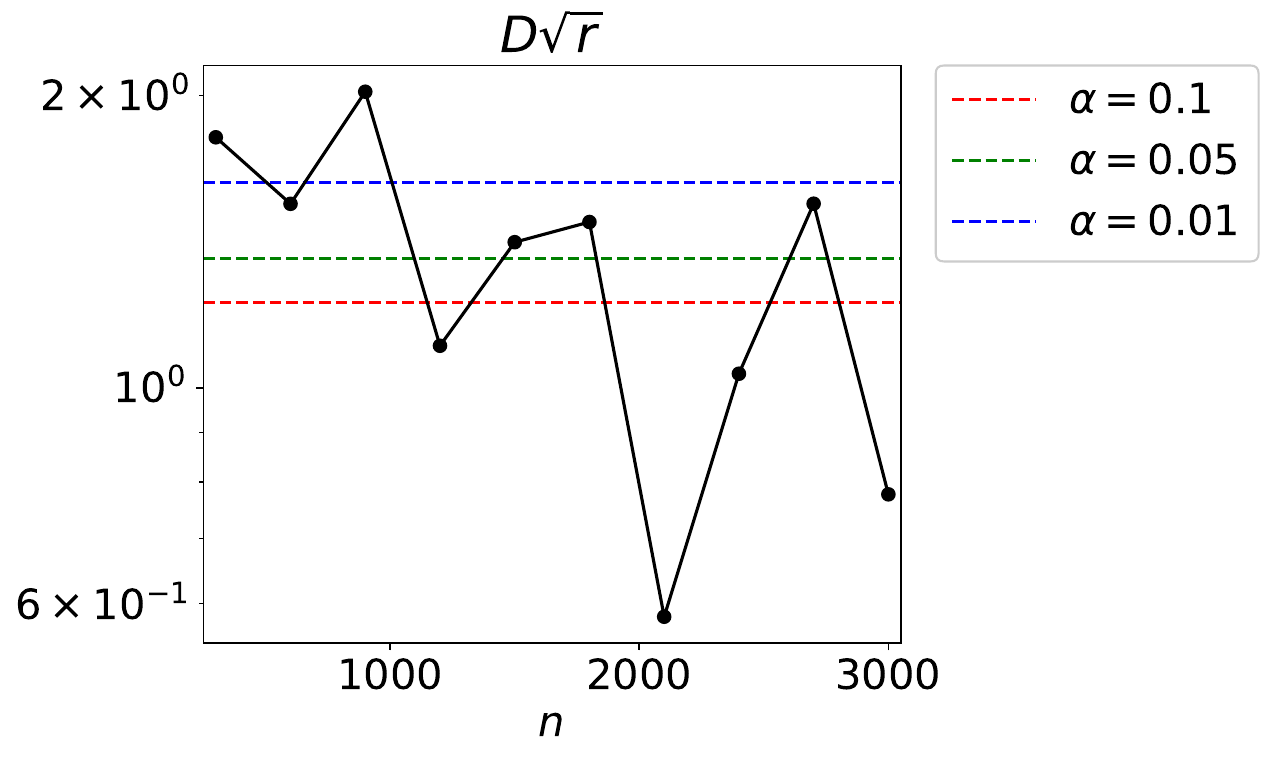}
  \includegraphics[width=0.32\hsize]{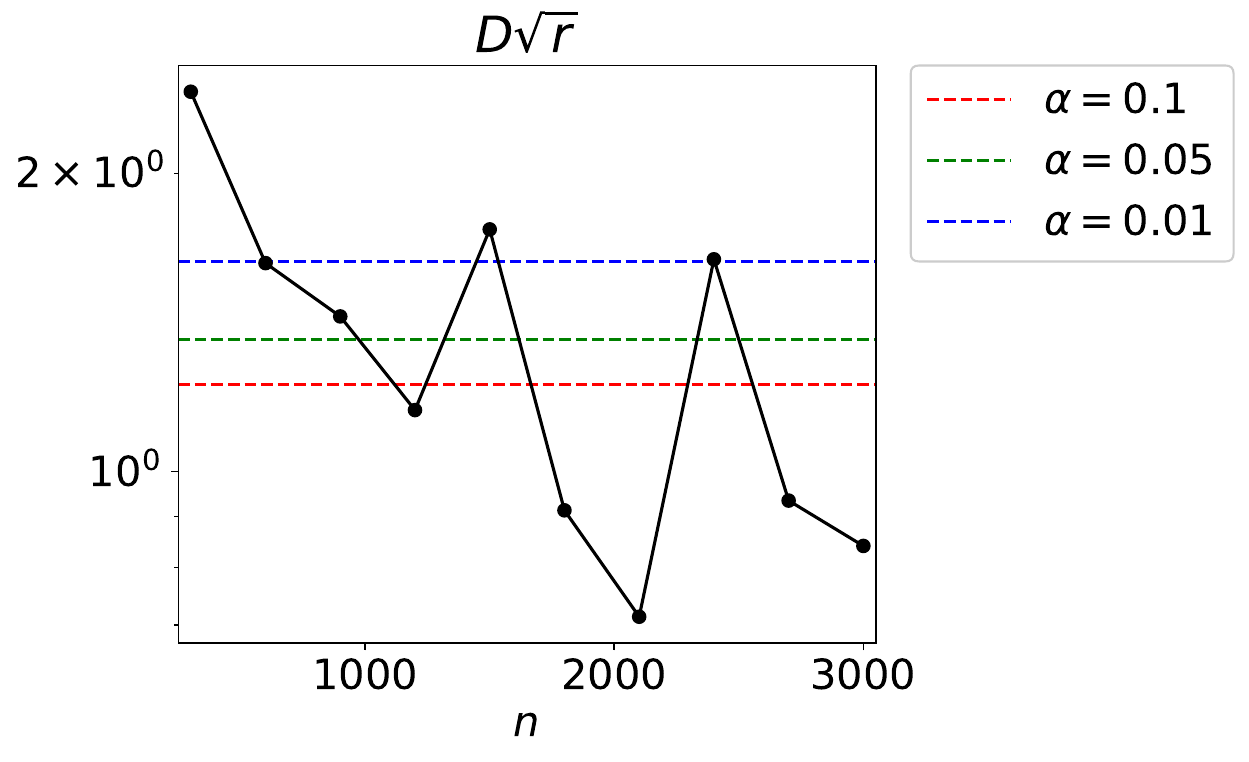}
  \includegraphics[width=0.32\hsize]{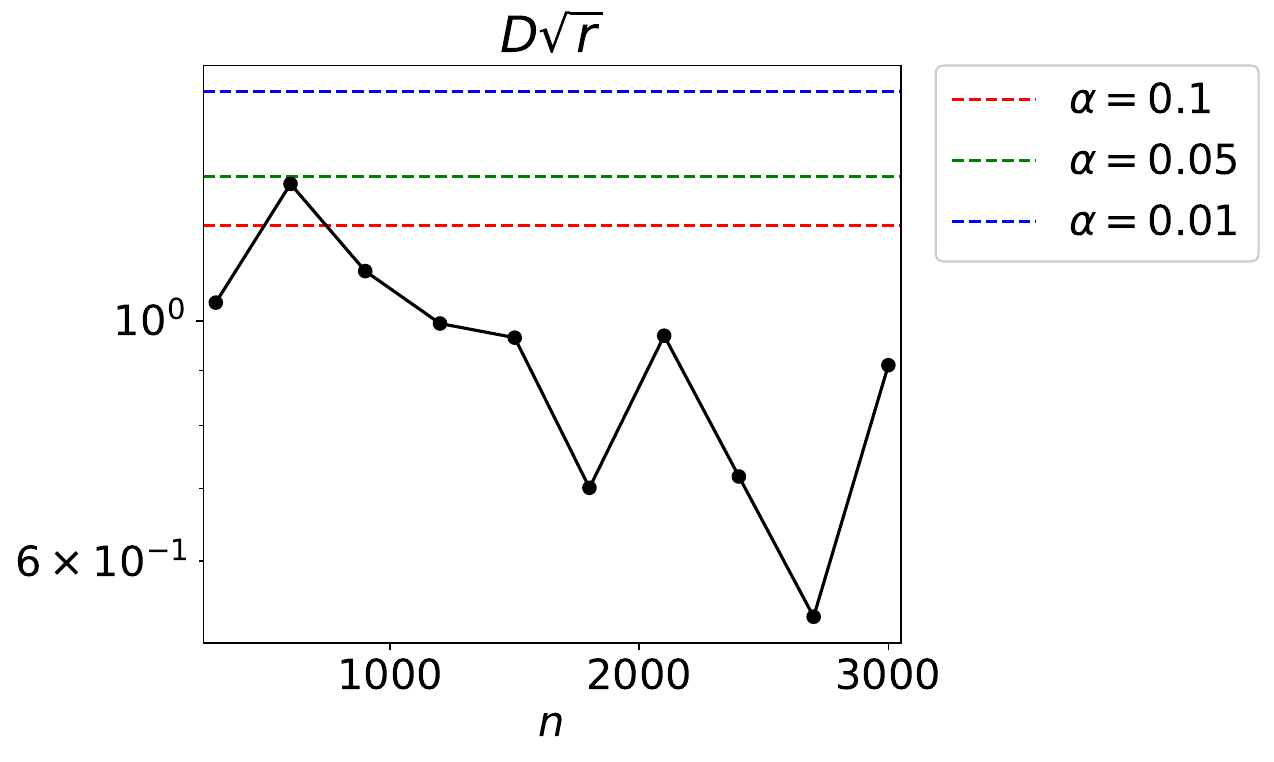}
  \caption{Test statistics $D\sqrt{r}$ of the Kolmogorov-Smirnov test \cite{Conover1999} for the test statistic $T$. The left, center, and right figures, respectively, show the results for the settings of Gaussian, Bernoulli, and Poisson distributions. If the test statistic $D\sqrt{r}$ is larger than the significance level $\alpha$, then the null hypothesis that $T$ follows the $TW_1$ distribution is rejected, and if otherwise, the null hypothesis is accepted. }
  \label{fig:KStest}
\end{figure}

We also plotted the results of the Kolmogorov-Smirnov test \cite{Conover1999} for the test statistic $T$ in Figure \ref{fig:KStest}. We tested whether the distribution of $T$ is the $TW_1$ distribution or not based on the test statistic $D\sqrt{r}$, where $D$ is the maximum absolute difference between the empirical distribution function of $T$ and the cumulative distribution function of the $TW_1$ distribution, and $r$ is the sample size, which is set at $1000$ in this experiment. 
Figure \ref{fig:KStest} shows the convergence of the proposed test statistic $T$ in law to the $TW_1$ distribution under the realizable setting. 

\subsection{Unrealizable case: asymptotic behavior of test statistic $T$}
\label{sec:exp_unrealizable}

Next, we checked the asymptotic behavior of the proposed test statistic $T$ under the \textit{unrealizable} setting, which has been stated in Theorems \ref{th:unrealizable_lower} and \ref{th:unrealizable_upper}, by using synthetic data that were generated based on the same three types of distributions as in Section \ref{sec:exp_realizable}. By combining Theorems \ref{th:unrealizable_lower} and \ref{th:unrealizable_upper}, we obtain the following theorem: 
\begin{theorem}[Unrealizable case, two-sided bound]
Suppose $K_0 < K$ or $H_0 < H$. Then, 
\begin{eqnarray}
\forall \epsilon>0, \ \exists C_1>0, C_2>0, M>0,\ \forall m \geq M, \nonumber \\
\mathrm{Pr} (C_1 m^{\frac{5}{3}} \leq T \leq C_2 m^{\frac{5}{3}}) \geq 1-\epsilon. 
\end{eqnarray}
\label{th:unrealizable_two_sided}
\end{theorem}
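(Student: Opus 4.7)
The plan is to obtain Theorem \ref{th:unrealizable_two_sided} as an essentially immediate corollary of Theorems \ref{th:unrealizable_lower} and \ref{th:unrealizable_upper}, by unpacking the $\Omega_p$ and $O_p$ notation and applying a union bound. Since $K$ and $H$ are finite constants by assumption \ref{asmp:block_size}, the factor $1/(K^2 H^2)$ appearing in the lower bound of Theorem \ref{th:unrealizable_lower} can be absorbed into the constant $C_1$, so the statement reduces to combining a probabilistic lower bound of order $m^{5/3}$ on $T$ with a probabilistic upper bound of the same order.

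Concretely, I would fix $\epsilon > 0$ and proceed as follows. First, applying Theorem \ref{th:unrealizable_lower} with tolerance $\epsilon/2$ and unfolding the definition of $\Omega_p$ gives constants $C_1 > 0$ and $M_1 > 0$ such that
\begin{equation*}
\Pr\bigl(C_1 m^{\tfrac{5}{3}} \leq T\bigr) \geq 1 - \tfrac{\epsilon}{2}, \quad \forall m \geq M_1.
\end{equation*}
Second, applying Theorem \ref{th:unrealizable_upper} with tolerance $\epsilon/2$ and the corresponding characterization of $O_p$ yields constants $C_2 > 0$ and $M_2 > 0$ such that
\begin{equation*}
\Pr\bigl(T \leq C_2 m^{\tfrac{5}{3}}\bigr) \geq 1 - \tfrac{\epsilon}{2}, \quad \forall m \geq M_2.
\end{equation*}
Setting $M \equiv \max(M_1, M_2)$ and using the elementary inequality $\Pr(E_1 \cap E_2) \geq 1 - \Pr(E_1^{\mathrm{C}}) - \Pr(E_2^{\mathrm{C}})$, as was already used in (\ref{eq:jointp}) in the proof of Theorem \ref{th:realizable}, would then yield
\begin{equation*}
\Pr\bigl(C_1 m^{\tfrac{5}{3}} \leq T \leq C_2 m^{\tfrac{5}{3}}\bigr) \geq 1 - \epsilon, \quad \forall m \geq M,
\end{equation*}
which is the desired conclusion.

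There is no genuine obstacle in this argument, the only mild subtlety is keeping the bookkeeping clean: one must verify that the two events from the lower and upper bounds are defined on the same probability space (which they are, since both statements concern the same $T$ constructed from the same observed matrix $A$), and that the constants produced by the two $\Omega_p$/$O_p$ statements can be combined with a single threshold $M$. Neither raises any new technical issue beyond what has already been handled in Theorems \ref{th:unrealizable_lower} and \ref{th:unrealizable_upper}, so the proof should be a short three- or four-line argument.
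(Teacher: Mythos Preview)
Your proposal is correct and matches the paper's approach: the paper simply states that Theorem \ref{th:unrealizable_two_sided} is obtained ``by combining Theorems \ref{th:unrealizable_lower} and \ref{th:unrealizable_upper}'' without further detail, and your argument is exactly the natural way to make that combination precise. Your observation that the $1/(K^2H^2)$ factor can be absorbed into $C_1$ since $K,H$ are fixed constants is also the intended reading.
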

In other words, with high probability, the proposed test statistic $T$ increases in proportion to $p^{\frac{5}{3}}$ in the limit of $p \to \infty$, since we have assumed that $p \propto m$. 

With respect to the null models and parameters, we used the same settings as in Section \ref{sec:exp_realizable} for all of the three distributional settings (i.e., Gaussian, Bernoulli, and Poisson LBMs). 
Based on such settings, we randomly generated $100$ observed matrices, estimated their block structures based on the Ward's hierarchical clustering algorithm \cite{Ward1963}, and computed the test statistic $T$. With respect to the matrix size, we tried the following $10$ settings: $(n, p) = (200 \times i, 150 \times i)$, $i = 1, \dots, 10$. When generating an observed matrix, the null cluster of each row was randomly chosen from the discrete uniform distribution on $\{1, 2, 3, 4\}$. Similarly, the null cluster of each column was randomly chosen from the discrete uniform distribution on $\{1, 2, 3\}$. 

Figures \ref{fig:unrealizableT} and \ref{fig:unrealizableT2} show the asymptotic behavior of the proposed test statistic $T$ under the unrealizable setting. As shown in Theorem \ref{th:unrealizable_two_sided}, we see that $T$ increases in proportion to $m^{\frac{5}{3}}$, where $n, p \propto m$. 

\begin{figure}[t]
  \centering
  \includegraphics[width=0.32\hsize]{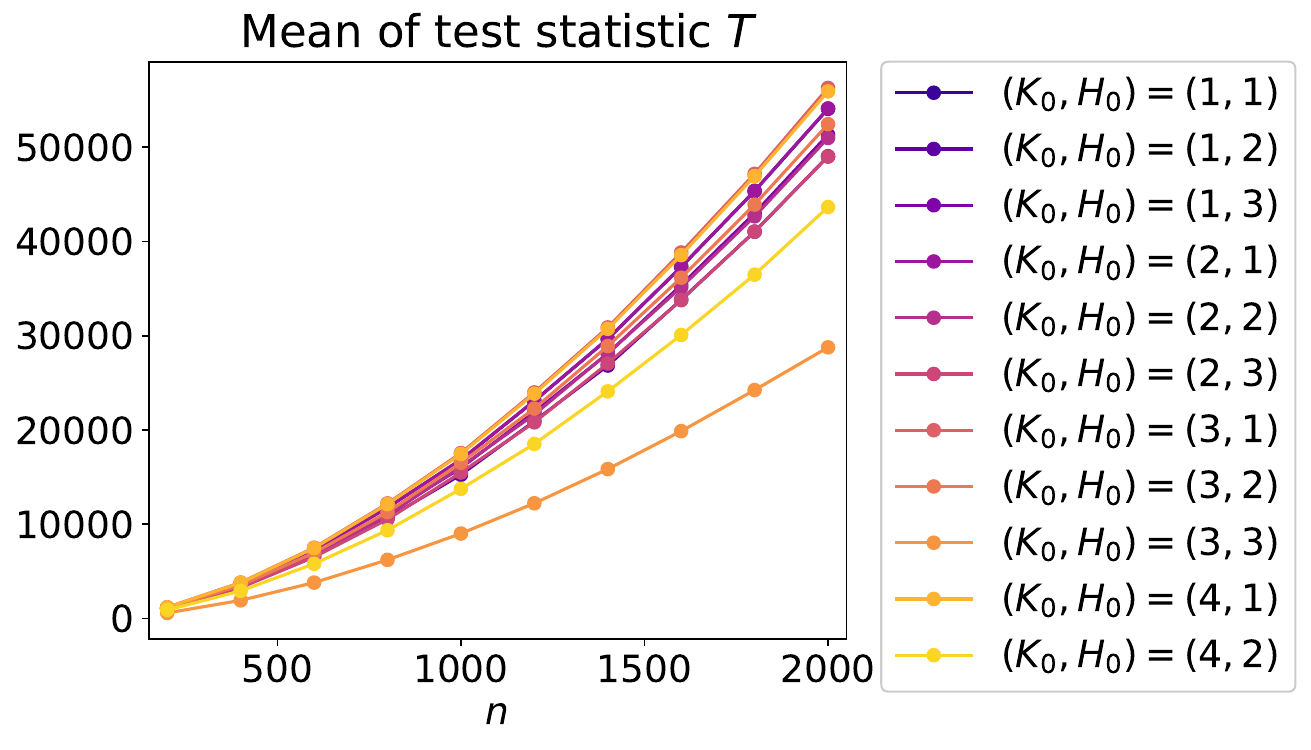}
  \includegraphics[width=0.32\hsize]{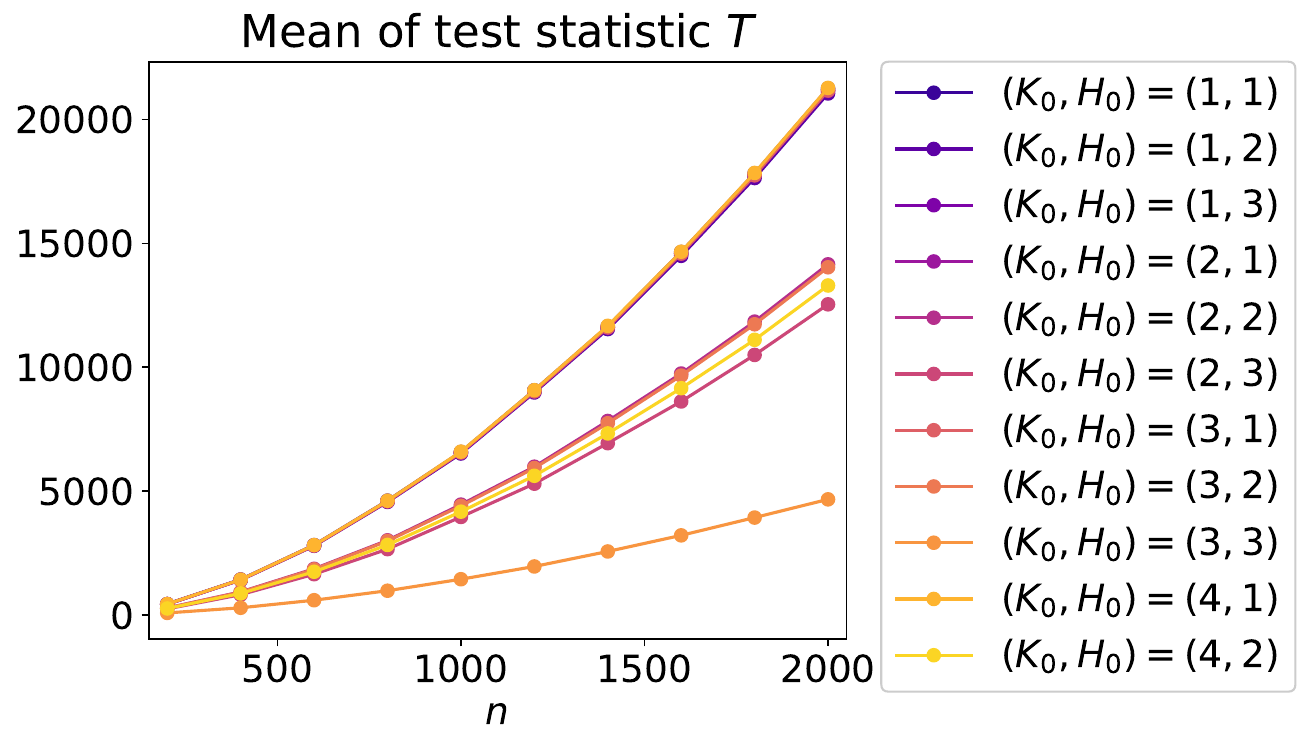}
  \includegraphics[width=0.32\hsize]{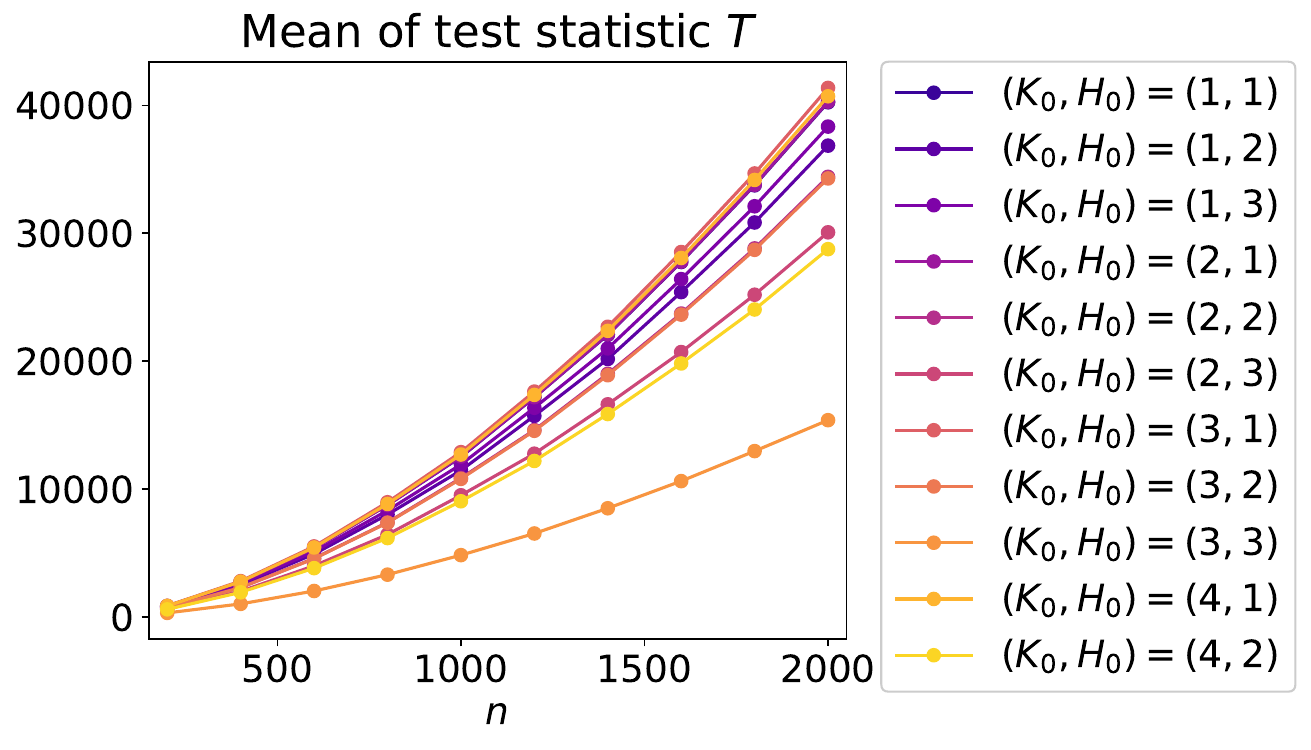}
  \caption{Mean test statistic $T$ in the unrealizable case for $100$ trials. The null row and column cluster numbers are $4$ and $3$, respectively. The left, center, and right figures, respectively, show the results for the settings of Gaussian, Bernoulli, and Poisson distributions. The horizontal line shows the number of rows $n$ in the observed matrix. }\vspace{5mm}
  \label{fig:unrealizableT}
  \includegraphics[width=0.32\hsize]{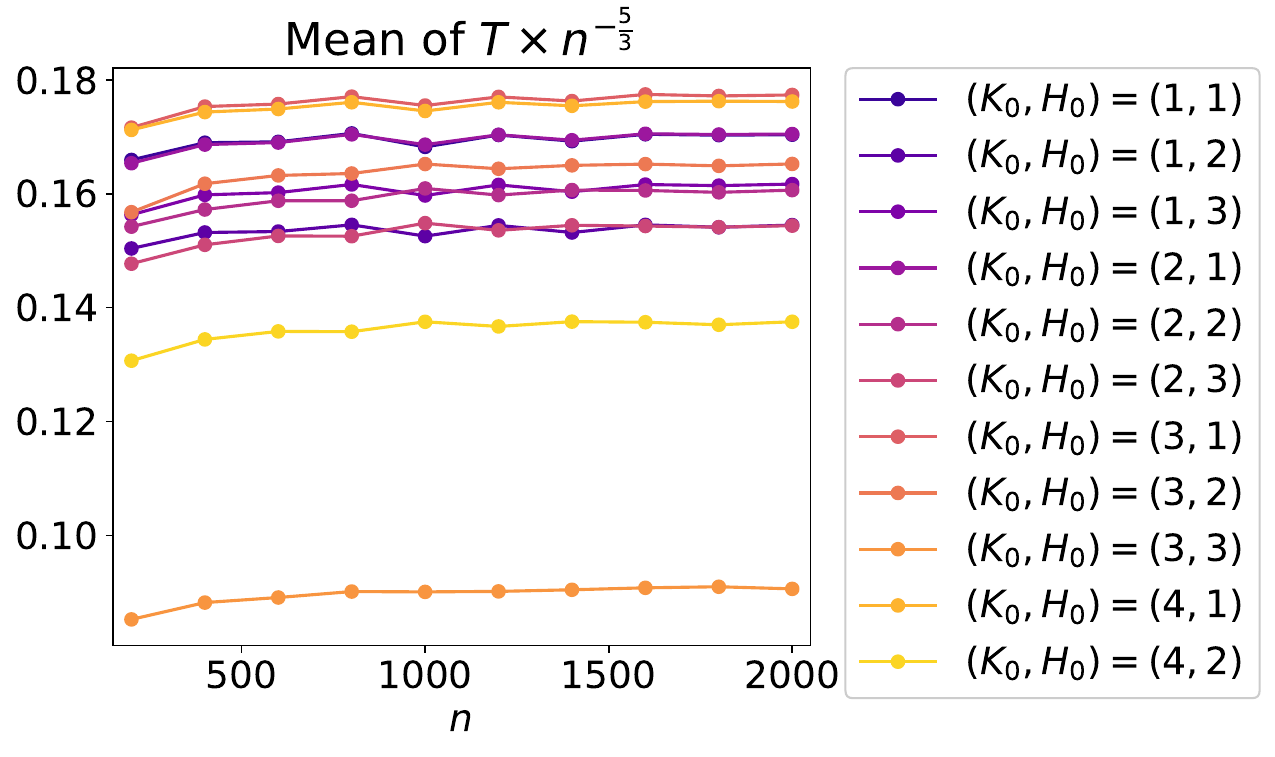}
  \includegraphics[width=0.32\hsize]{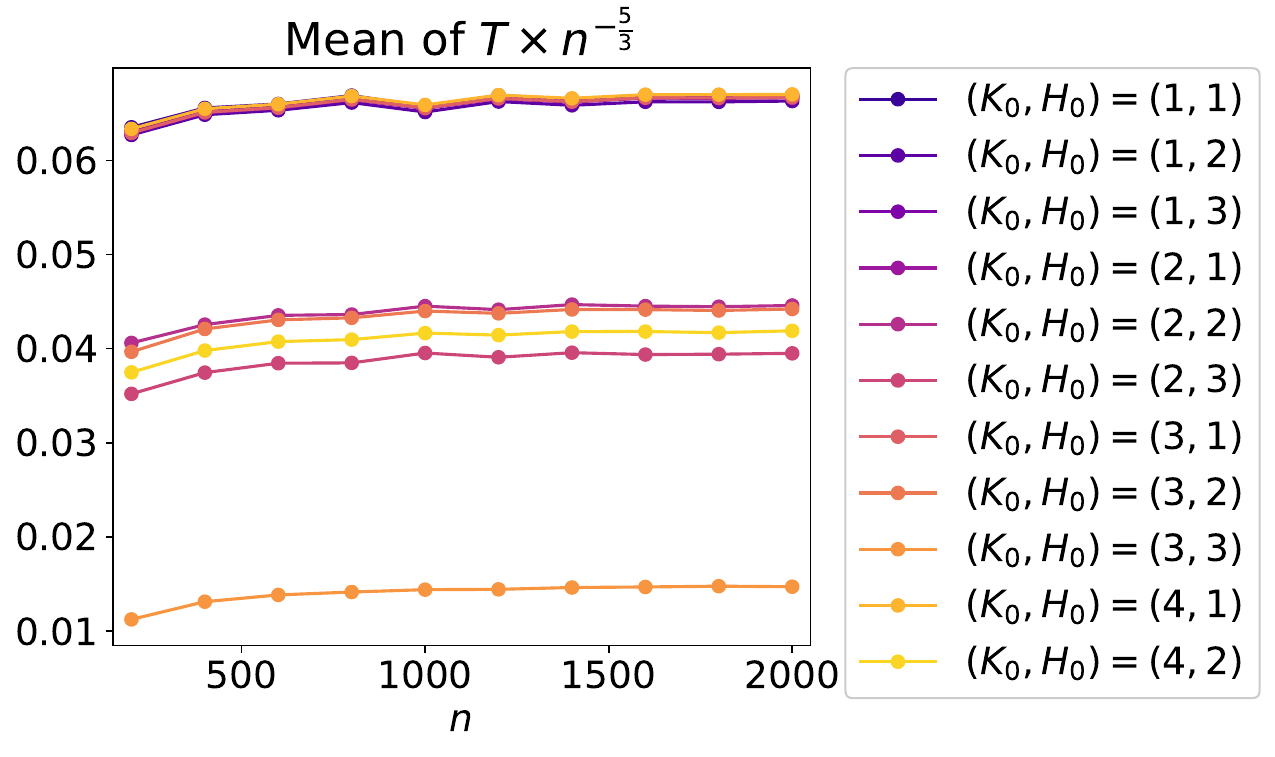}
  \includegraphics[width=0.32\hsize]{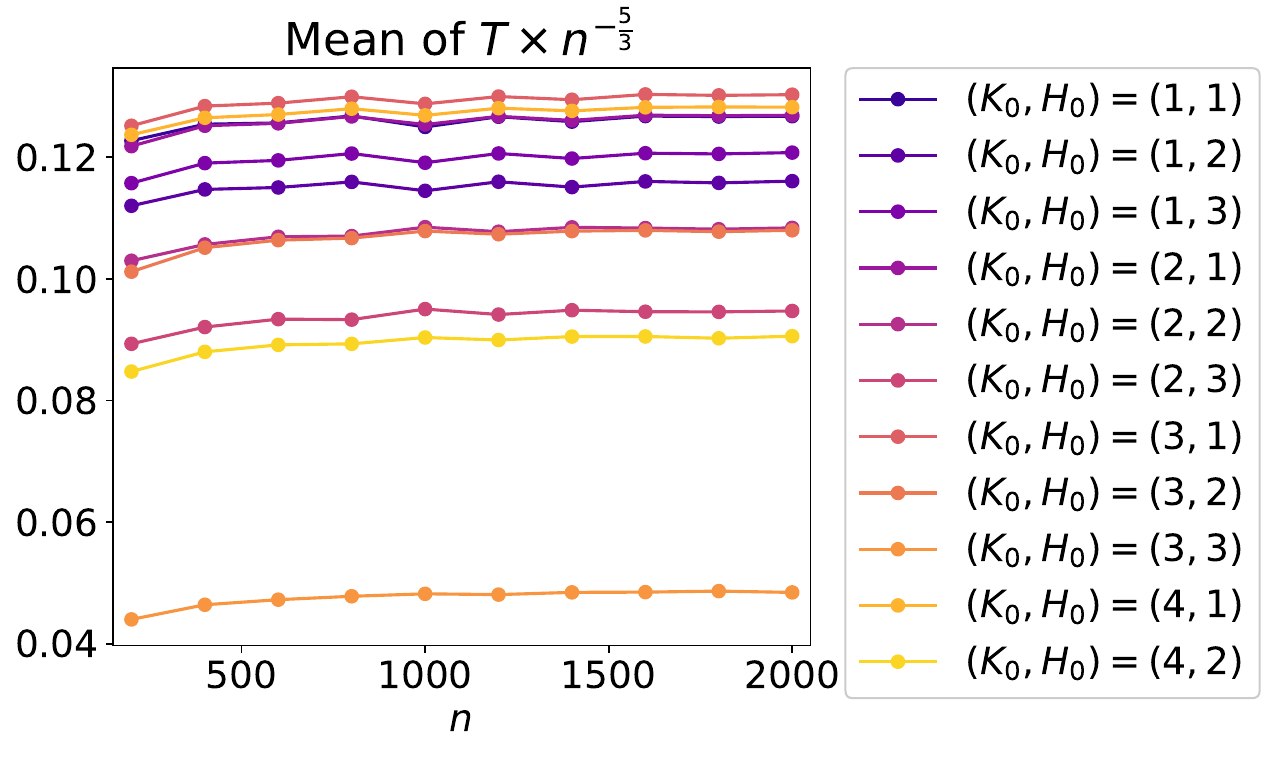}
  \caption{Mean test statistic $T$ divided by $n^{\frac{5}{3}}$ in the unrealizable case for $100$ trials. The left, center, and right figures, respectively, show the results for the settings of Gaussian, Bernoulli, and Poisson distributions. }
  \label{fig:unrealizableT2}
\end{figure}

\subsection{Accuracy of the proposed goodness-of-fit test}
\label{sec:accuracy}

Finally, we evaluated the proposed goodness-of-fit test in terms of its accuracy. By using synthetic data that were generated based on the same three types of distributions as in Section \ref{sec:exp_realizable}, we checked the ratio of trials where the selected set of cluster numbers $(K_0, H_0)$ is equal to the null one $(K, H)$. Here, we set the null set of cluster numbers at $(K, H) = (4, 3)$. For each distributional setting (i.e., Gaussian, Bernoulli, and Poisson LBMs), we tried $10$ settings with respect to the block-wise mean $B$. The concrete settings were as follows: 

\begin{itemize}
\item \textbf{Gaussian Latent Block Model}: We used the following parameters: 
\begin{align}
&B' = 
\begin{pmatrix}
0.9 & 0.1 & 0.4  \\
0.2 & 0.7 & 0.3  \\
0.3 & 0.2 & 0.8  \\
0.6 & 0.9 & 0.1  \\
\end{pmatrix}, \ \ \ 
S = 
\begin{pmatrix}
0.08 & 0.06 & 0.15  \\
0.14 & 0.12 & 0.07  \\
0.09 & 0.1 & 0.11  \\
0.16 & 0.13 & 0.05  \\
\end{pmatrix}, \nonumber \\
&\forall k, h,\ B_{kh} = \left( 1 - \frac{t}{10} \right) (B'_{kh} - 0.5) + 0.5, \ \ \ \mathrm{for}\ t = 0, \dots, 9, 
\end{align}
\item \textbf{Bernoulli Latent Block Model} We used the following parameters: 
\begin{align}
&B' = 
\begin{pmatrix}
0.9 & 0.1 & 0.4  \\
0.2 & 0.7 & 0.3  \\
0.3 & 0.2 & 0.8  \\
0.6 & 0.9 & 0.1  \\
\end{pmatrix}, \nonumber \\
&\forall k, h,\ B_{kh} = \left( 1 - \frac{t}{10} \right) (B'_{kh} - 0.5) + 0.5, \ \ \ \mathrm{for}\ t = 0, \dots, 9. 
\end{align}
\item \textbf{Poisson Latent Block Model} We used the following parameters: 
\begin{align}
&B' = 
\begin{pmatrix}
9.0 & 1.0 & 4.0  \\
2.0 & 7.0 & 3.0  \\
3.0 & 2.0 & 8.0  \\
6.0 & 9.0 & 1.0  \\
\end{pmatrix}, \nonumber \\
&\forall k, h,\ B_{kh} = \left( 1 - \frac{t}{10} \right) (B'_{kh} - 5) + 5, \ \ \ \mathrm{for}\ t = 0, \dots, 9. 
\end{align}
\end{itemize}

With respect to the matrix size, we tried the following $10$ settings for each distributional setting and for each setting of $B$: $(n, p) = (40 \times i, 30 \times i)$, $i = 1, \dots, 10$. When generating an observed matrix, the null cluster of each row was randomly chosen from the discrete uniform distribution on $\{1, 2, 3, 4\}$. Similarly, the null cluster of each column was randomly chosen from the discrete uniform distribution on $\{1, 2, 3\}$. 
In each of $3$ (Gaussian, Bernoulli, or Poisson LBM) $\times 10$ (for the setting of $B$) $\times 10$ (for the setting of matrix size) settings, we generated $1000$ observed matrices and applied the proposed sequential goodness-of-fit test, until the null hypothesis $(K, H) = (K_0, H_0)$ was accepted. 
For each observed matrix, we estimated their block structures based on the Ward's hierarchical clustering algorithm \cite{Ward1963} under each setting of a hypothetical set of cluster numbers $(K_0, H_0)$, computed the test statistic $T$, and performed the proposed test for the given cluster numbers $(K_0, H_0)$ using a significance level of $\alpha = 0.01$. 
Figures \ref{fig:As_normal}, \ref{fig:As_ber}, \ref{fig:As_pois}, respectively, show the examples of generated observed matrices of Gaussian, Bernoulli, and Poisson LBMs. 

\begin{figure}[t]
  \centering
  \includegraphics[height=0.19\hsize]{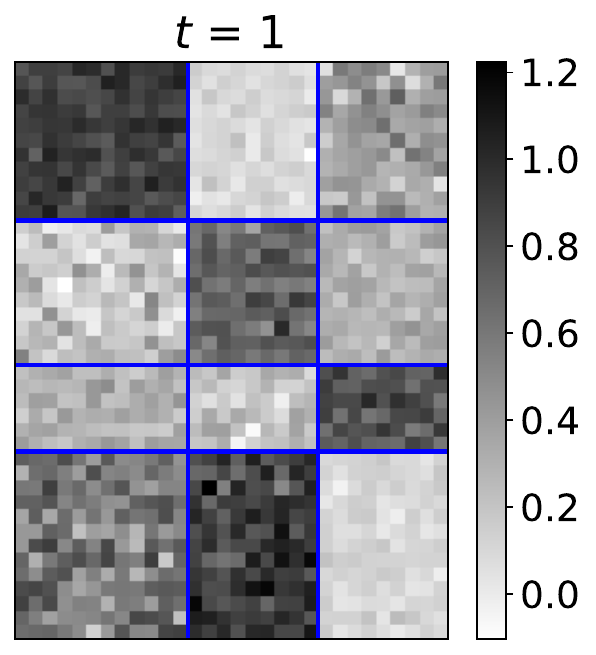}
  \includegraphics[height=0.19\hsize]{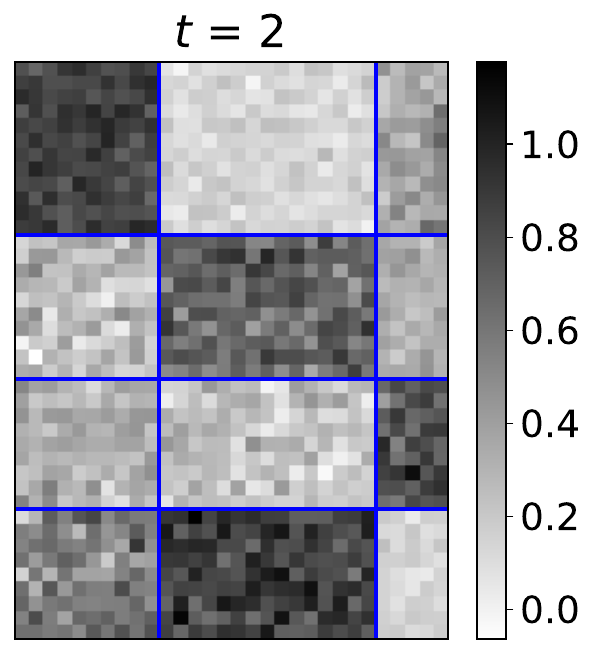}
  \includegraphics[height=0.19\hsize]{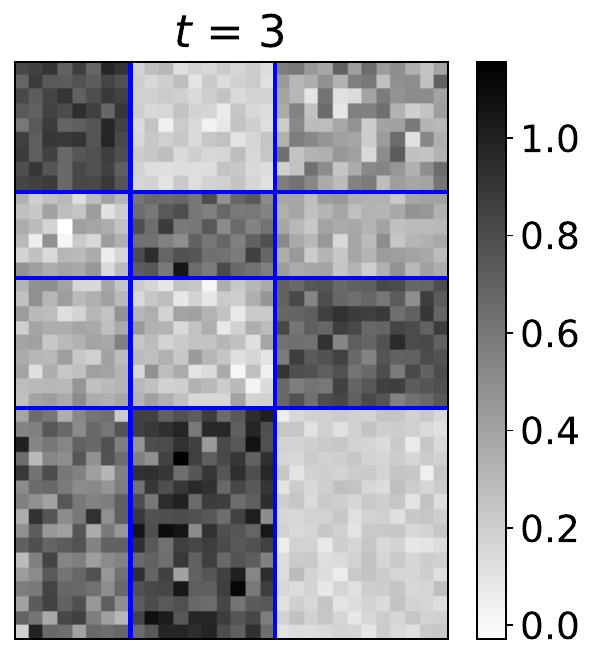}
  \includegraphics[height=0.19\hsize]{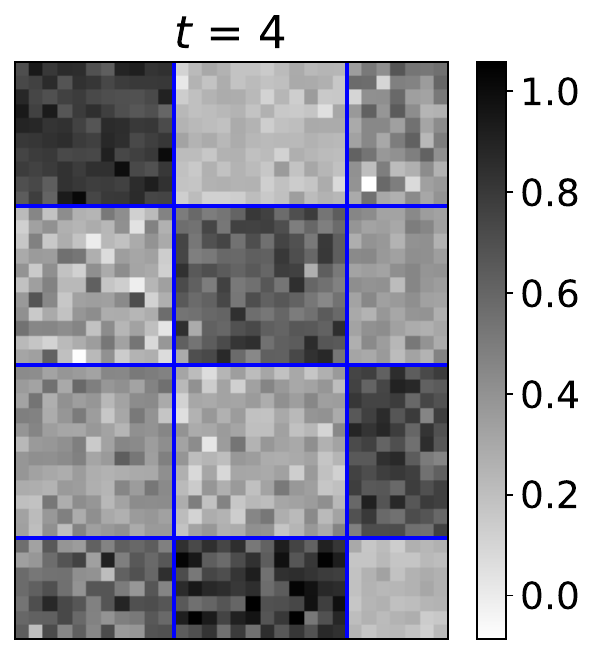}
  \includegraphics[height=0.19\hsize]{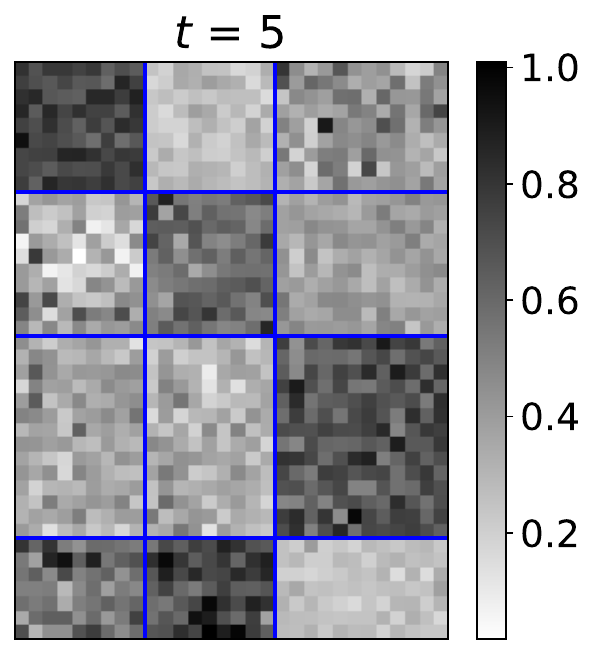}\\
  \includegraphics[height=0.19\hsize]{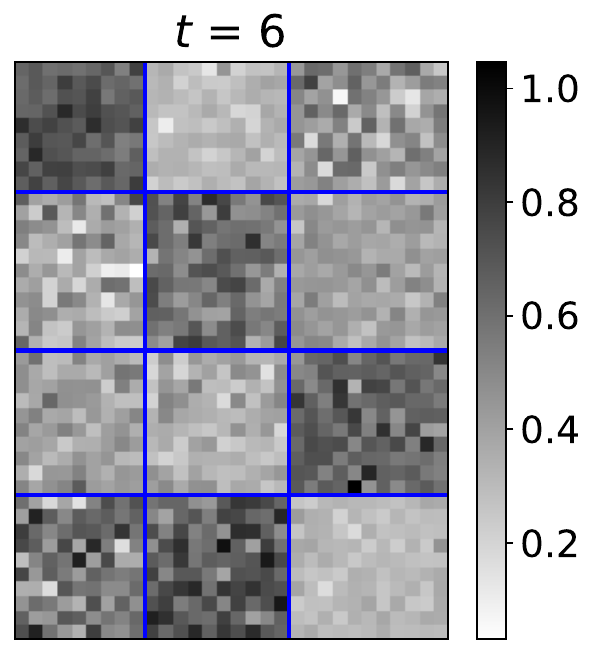}
  \includegraphics[height=0.19\hsize]{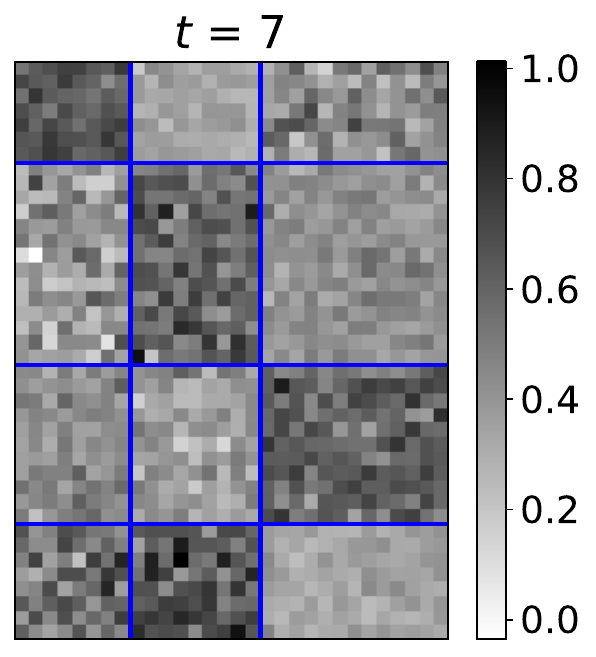}
  \includegraphics[height=0.19\hsize]{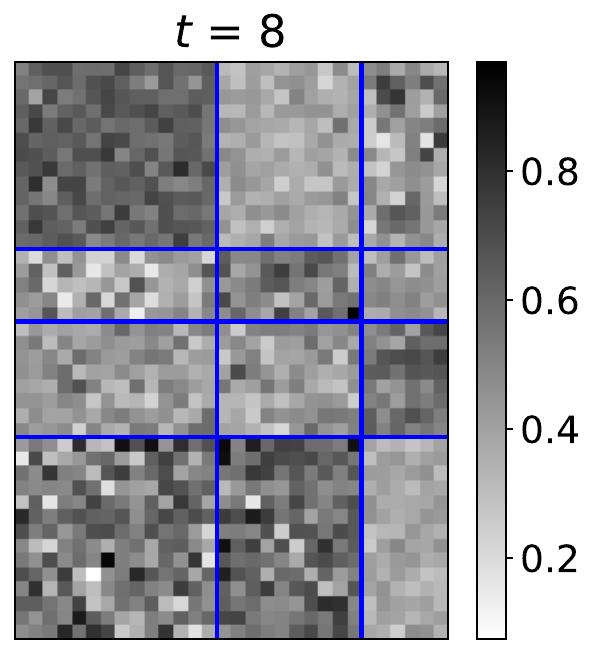}
  \includegraphics[height=0.19\hsize]{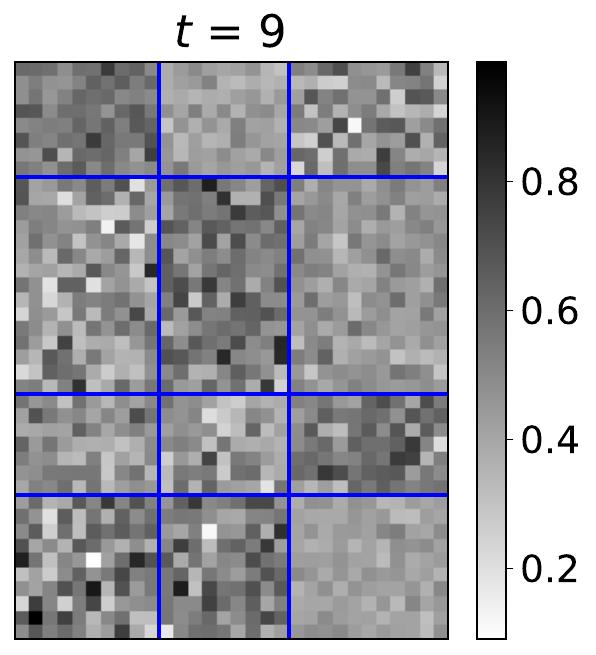}
  \includegraphics[height=0.19\hsize]{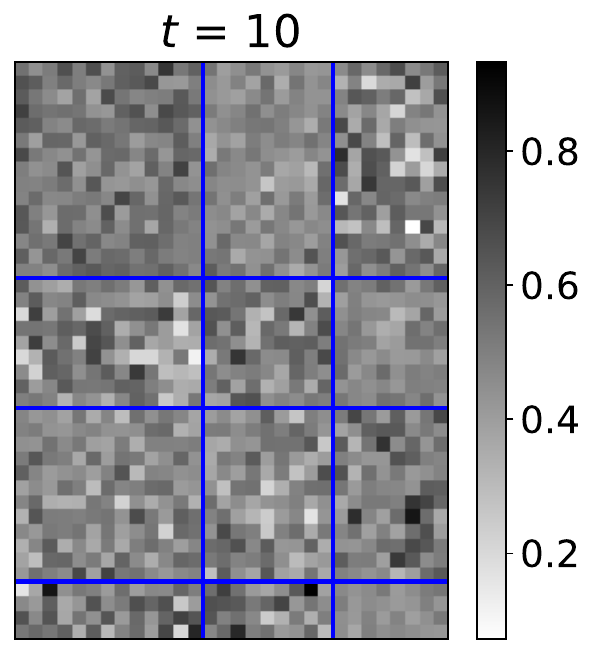}
  \caption{Examples of null block structures of the \textbf{Gaussian} LBM. $40 \times 30$ observed matrices are plotted for $10$ different settings of $B$ ($t = 1, \dots, 10$). The rows and columns of matrix $A$ were sorted according to the null clusters. }\vspace{3mm}
  \label{fig:As_normal}
  \includegraphics[height=0.19\hsize]{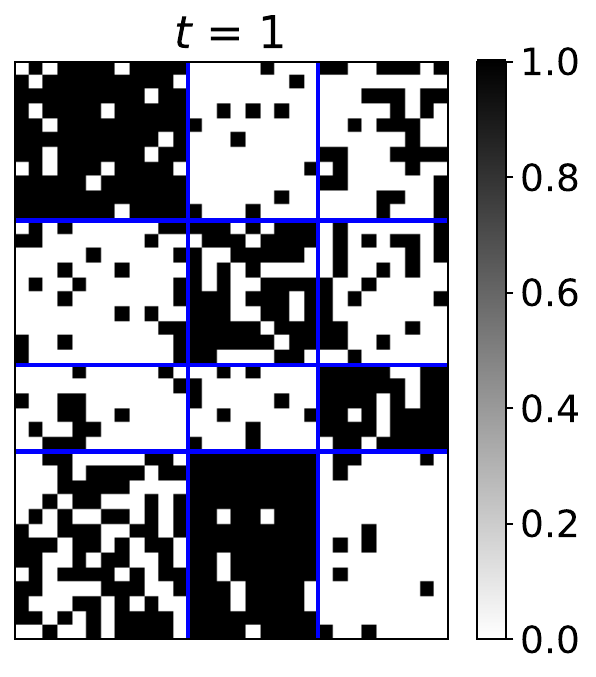}
  \includegraphics[height=0.19\hsize]{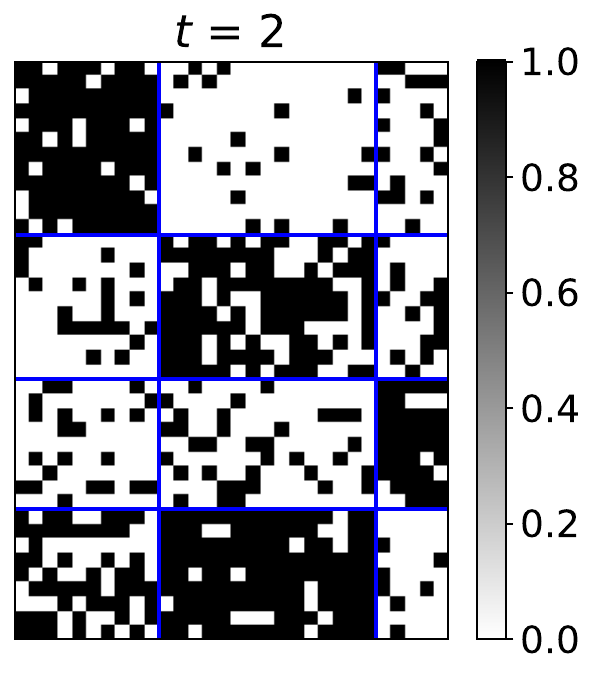}
  \includegraphics[height=0.19\hsize]{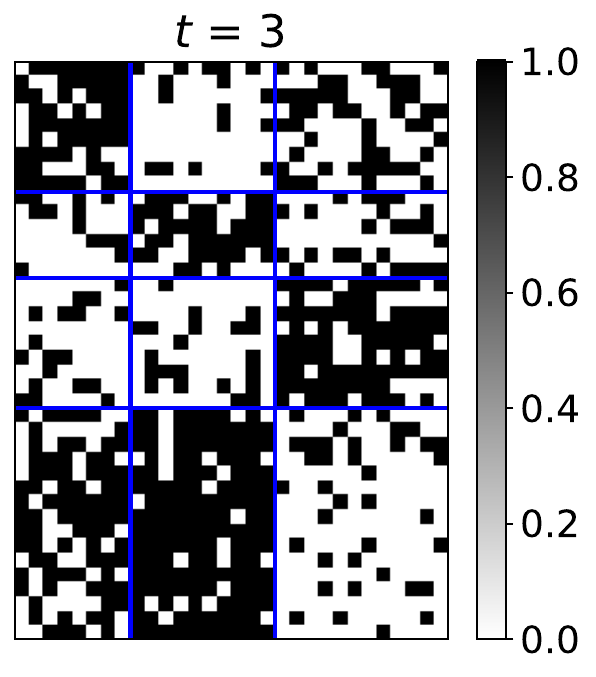}
  \includegraphics[height=0.19\hsize]{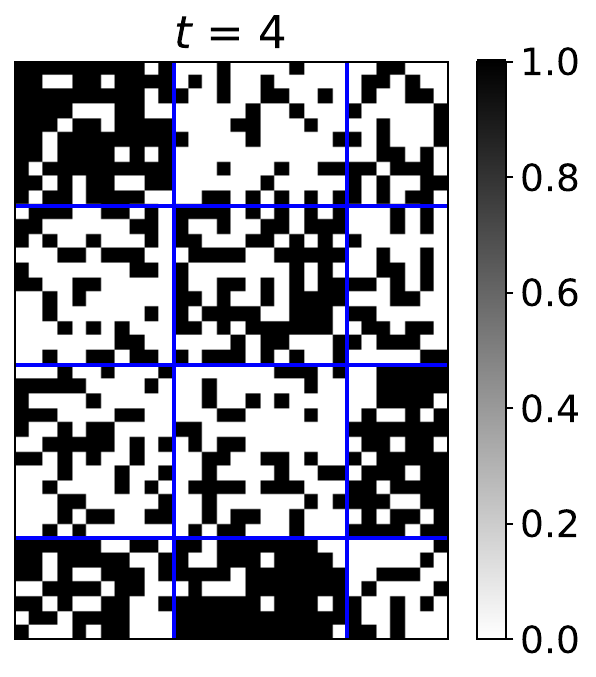}
  \includegraphics[height=0.19\hsize]{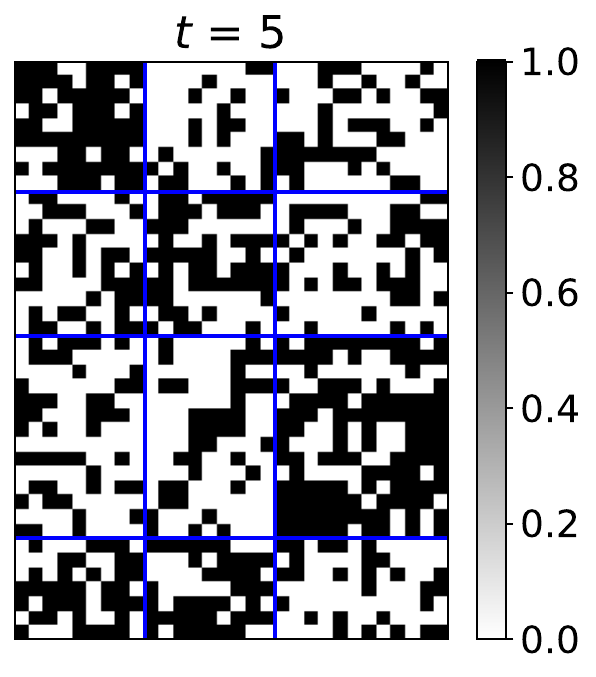}\\
  \includegraphics[height=0.19\hsize]{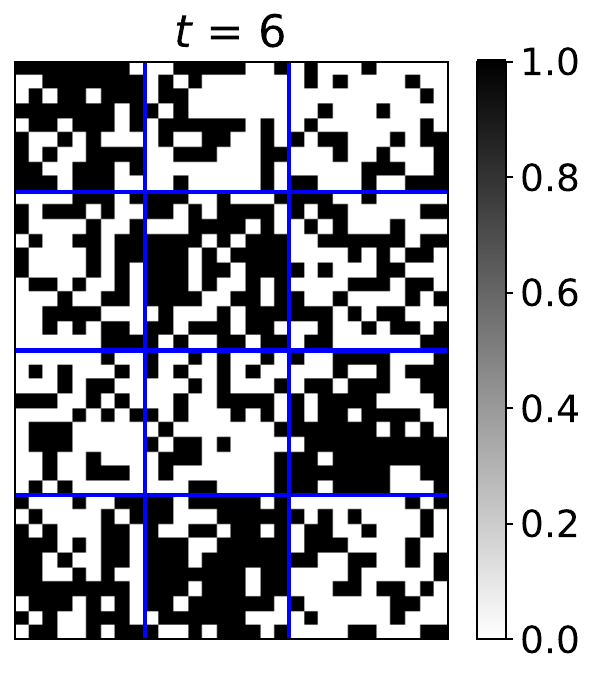}
  \includegraphics[height=0.19\hsize]{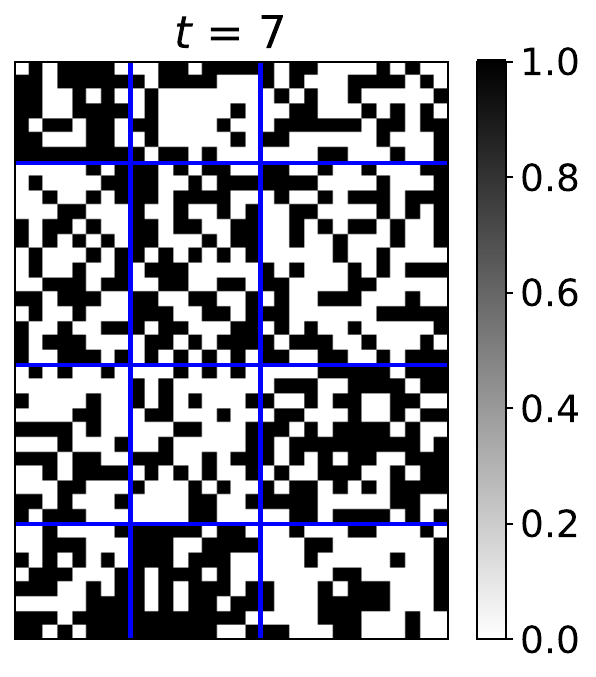}
  \includegraphics[height=0.19\hsize]{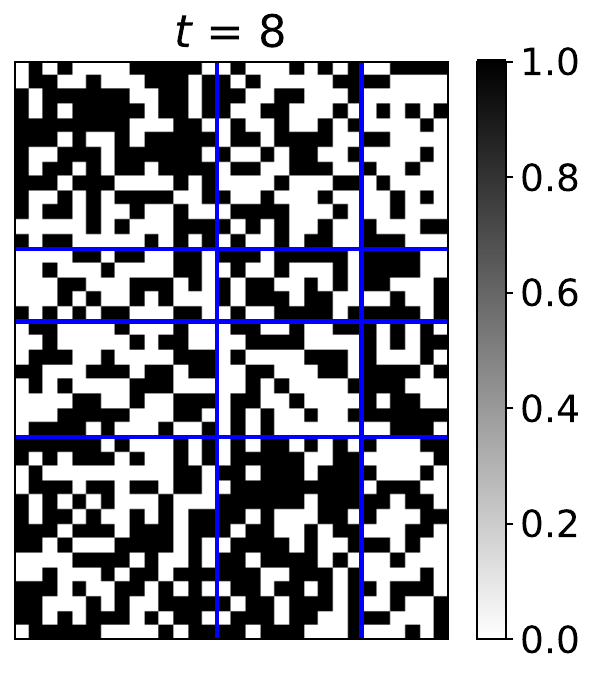}
  \includegraphics[height=0.19\hsize]{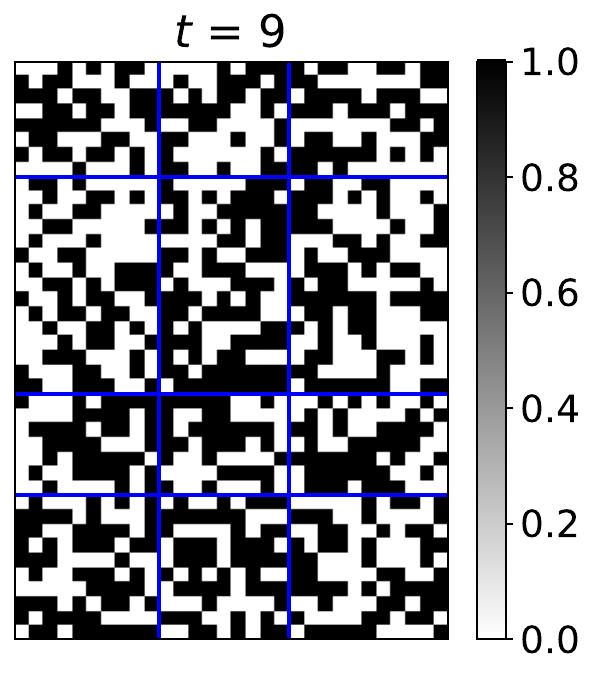}
  \includegraphics[height=0.19\hsize]{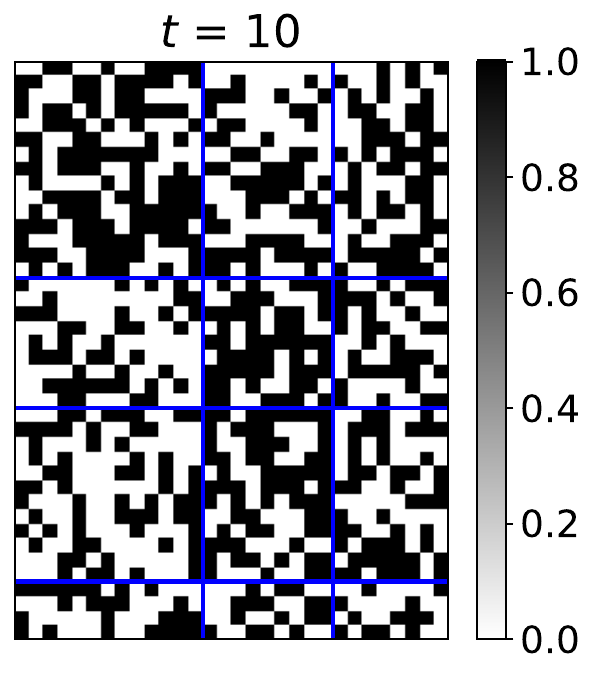}
  \caption{Examples of null block structures of the \textbf{Bernoulli} LBM. }\vspace{3mm}
  \label{fig:As_ber}
  \includegraphics[height=0.19\hsize]{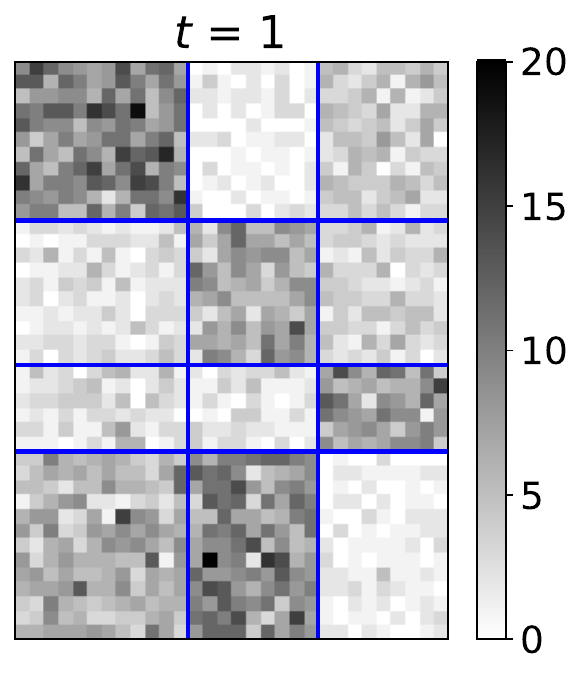}
  \includegraphics[height=0.19\hsize]{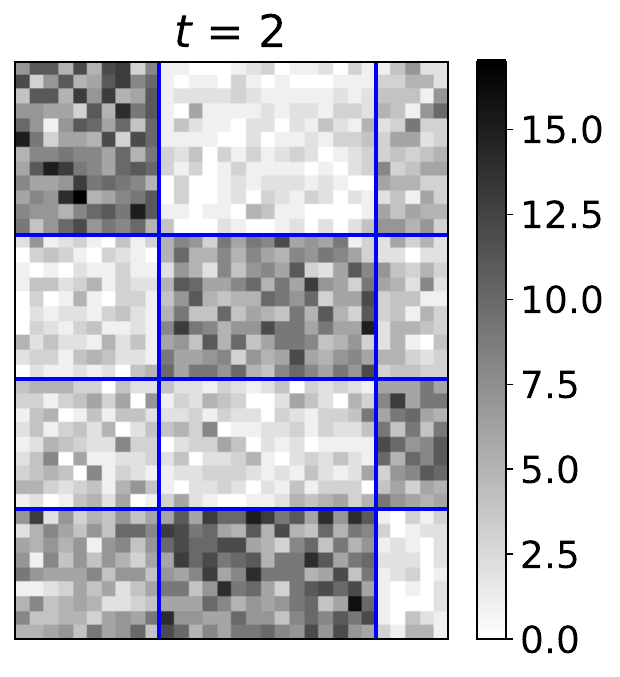}
  \includegraphics[height=0.19\hsize]{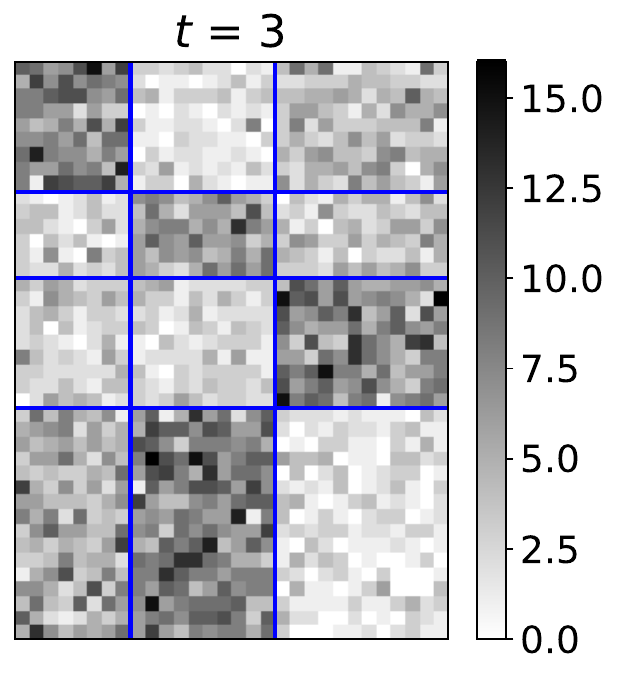}
  \includegraphics[height=0.19\hsize]{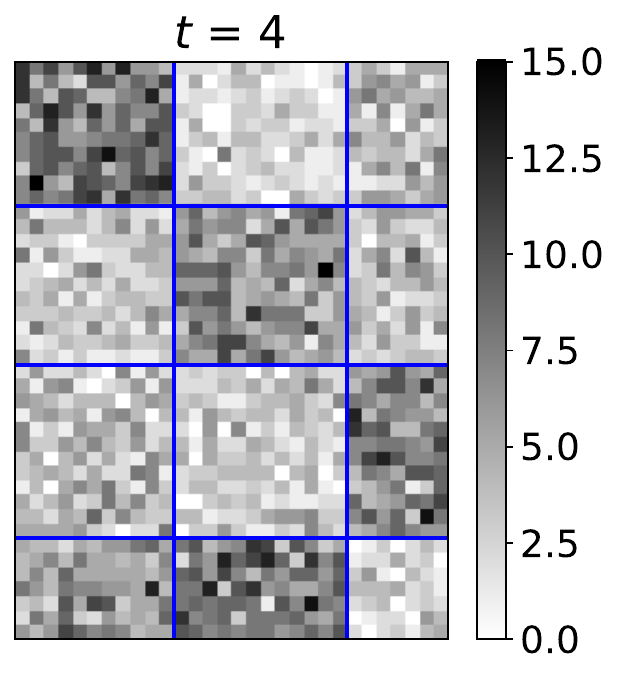}
  \includegraphics[height=0.19\hsize]{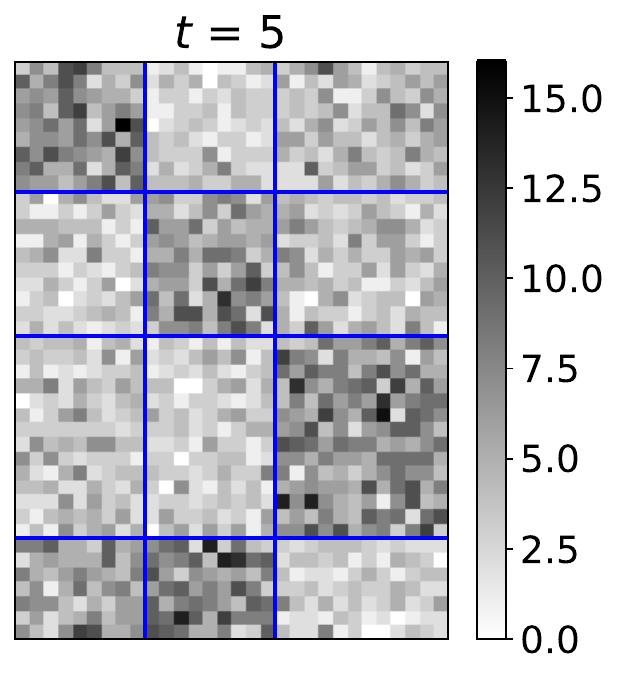}\\
  \includegraphics[height=0.19\hsize]{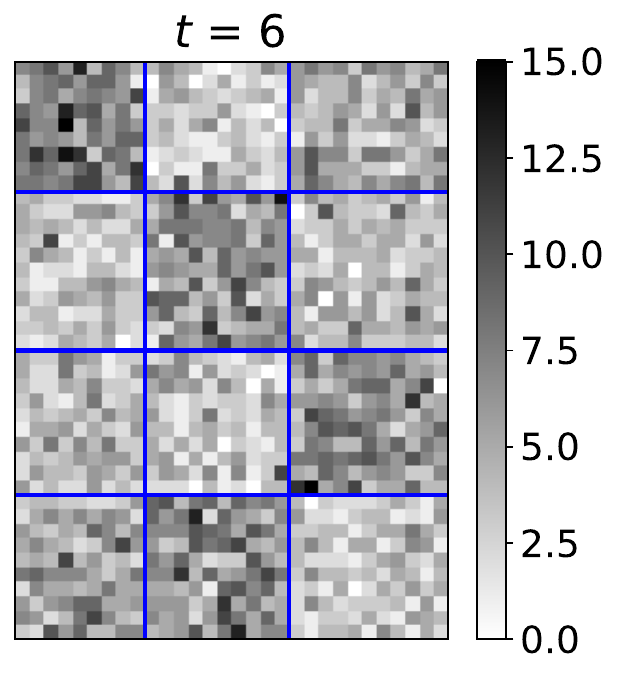}
  \includegraphics[height=0.19\hsize]{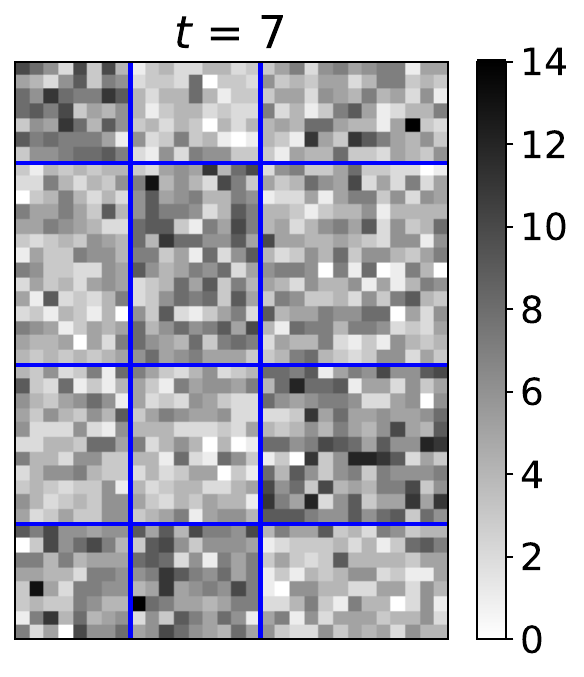}
  \includegraphics[height=0.19\hsize]{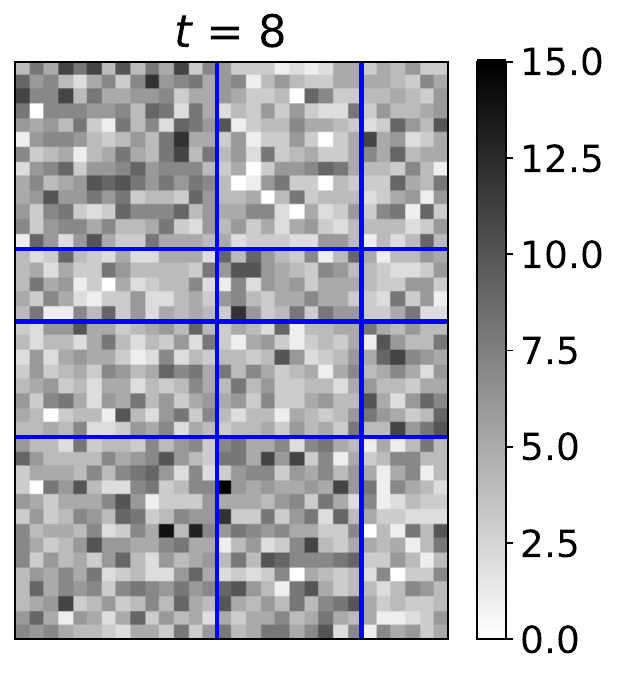}
  \includegraphics[height=0.19\hsize]{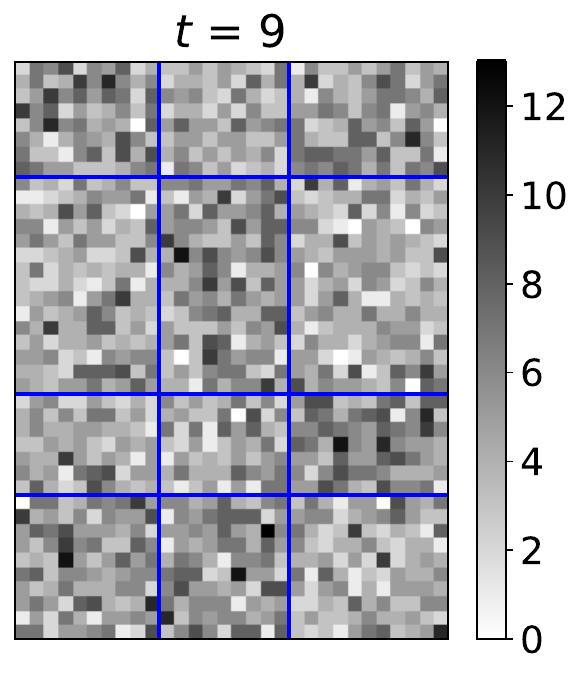}
  \includegraphics[height=0.19\hsize]{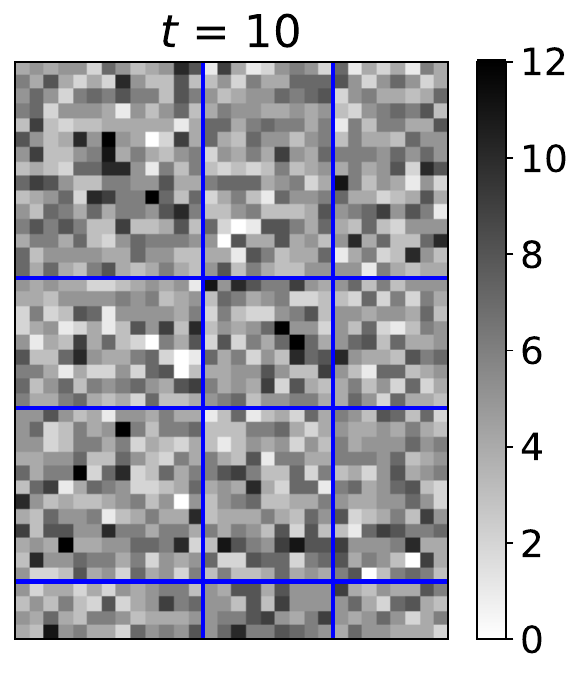}
  \caption{Examples of null block structures of the \textbf{Poisson} LBM. }
  \label{fig:As_pois}
\end{figure}

Figure \ref{fig:accuracy} shows the accuracy of the proposed test under $10$ different settings of block-wise mean $B$. From Figure \ref{fig:accuracy}, we see that the test accuracy increases with matrix size $n$ for a fixed block-wise mean $B$, and that it decreases with the smaller differences between the block-wise means for a fixed matrix size $n$. 
\begin{figure}[t]
  \centering
  \includegraphics[width=0.32\hsize]{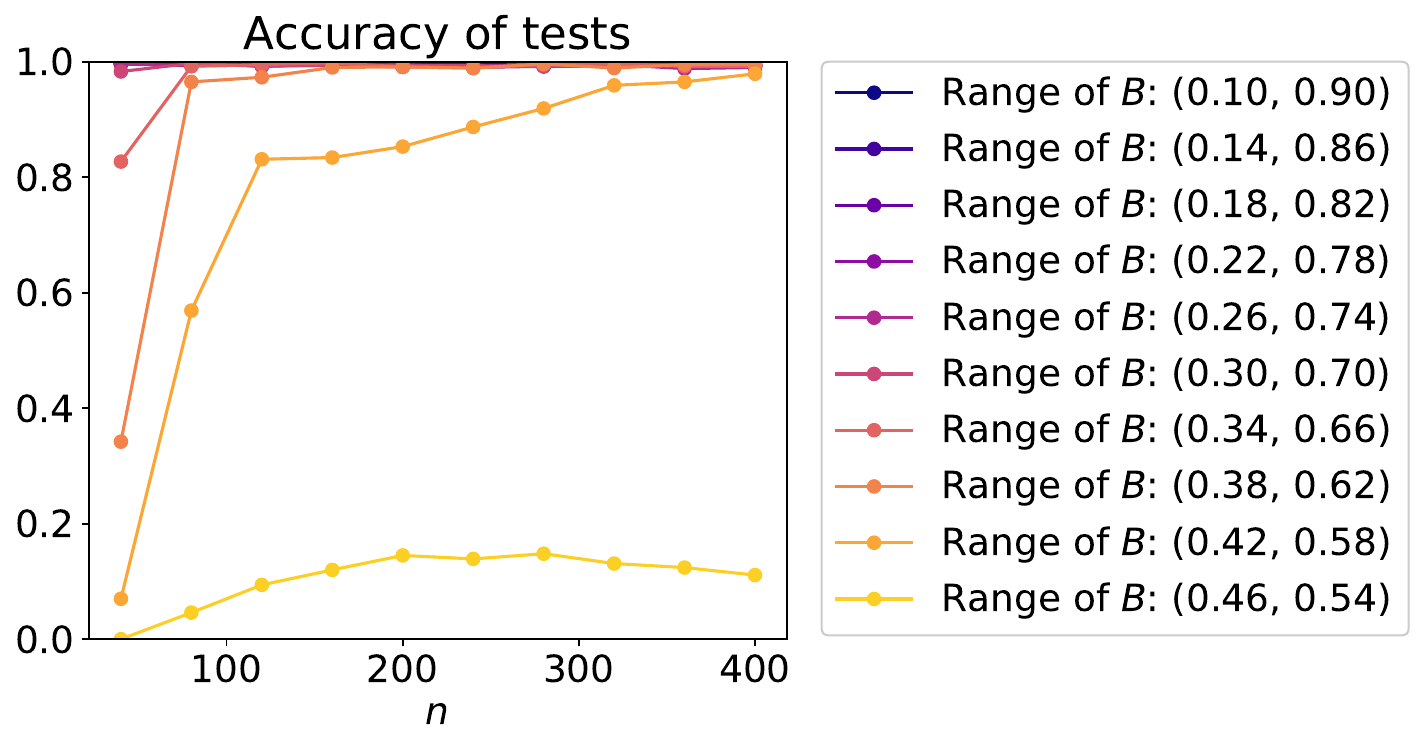}
  \includegraphics[width=0.32\hsize]{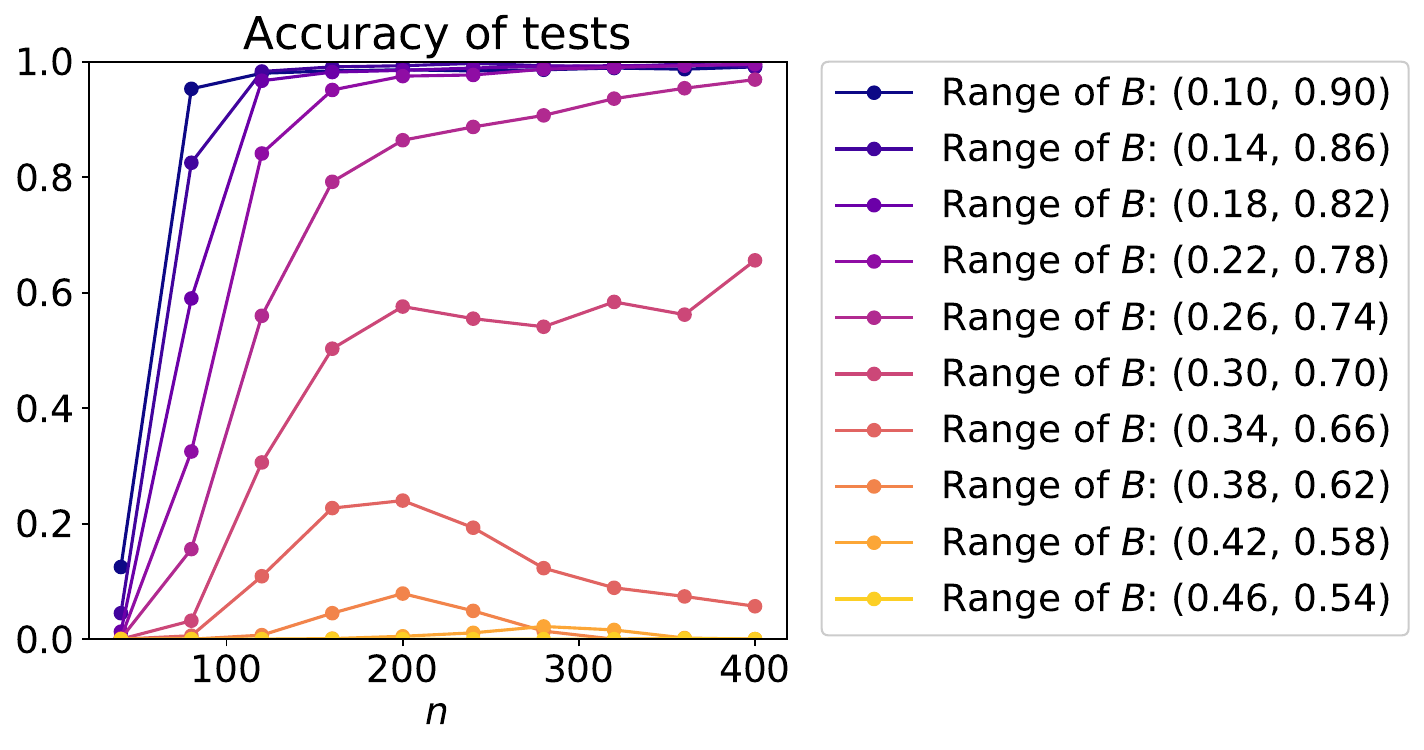}
  \includegraphics[width=0.32\hsize]{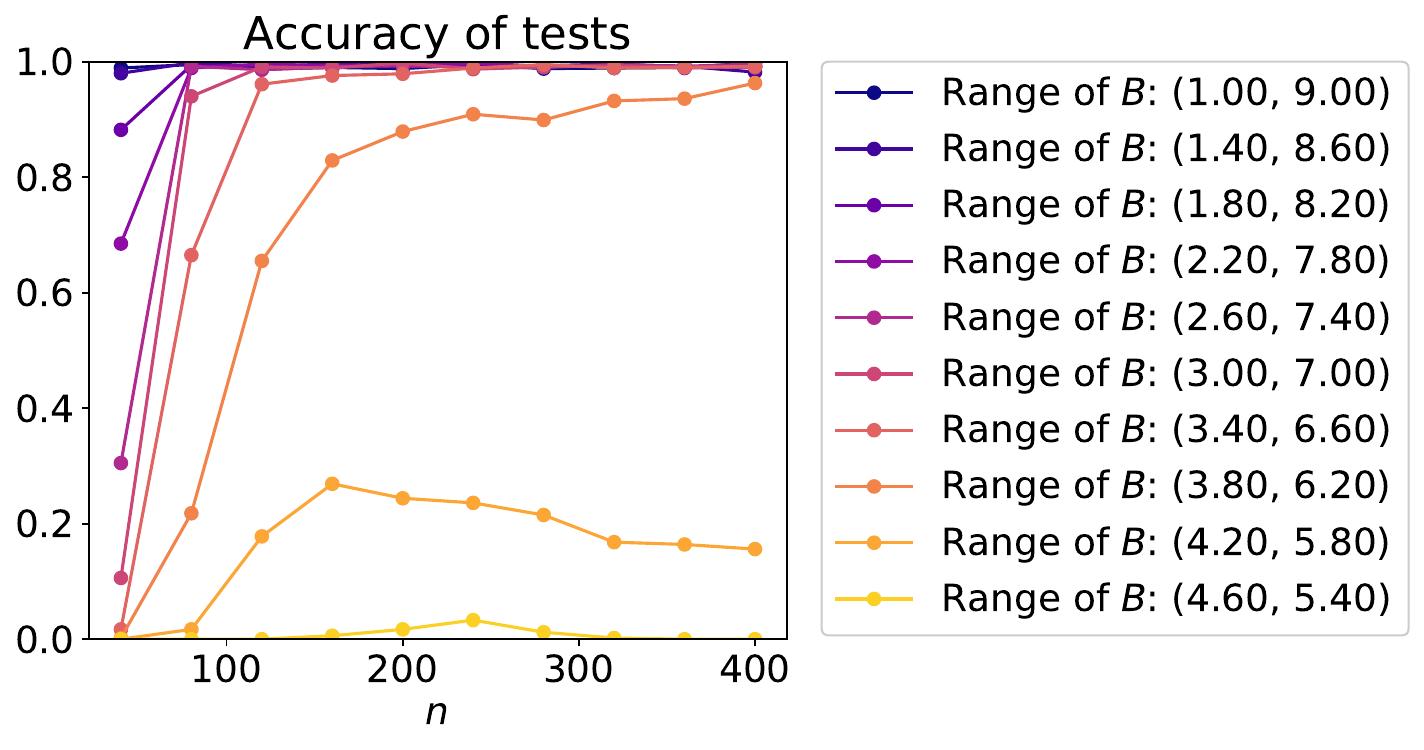}
  \caption{Accuracy of the proposed goodness-of-fit test under $10$ different settings of block-wise mean $B$. The left, center, and right figures, respectively, show the results for the settings of Gaussian, Bernoulli, and Poisson distributions. }
  \label{fig:accuracy}
\end{figure}

\paragraph{Comparison to the integrated completed likelihood (ICL)}

We also checked the difference in the behavior of the proposed test and the ICL. For the Bernoulli LBM, we can compute the asymptotic ICL \cite{Keribin2012} by assuming the following model: 
\begin{align}
\label{eq:icl_lh}
&p(A, g^{(1)}, g^{(2)} | \pi, \rho, B) \nonumber \\
&= \left( \prod_k \pi_k^{|I_k|} \right) \left( \prod_h \rho_h^{|J_h|} \right) \left[ \prod_{i, j} B_{g^{(1)}_i g^{(2)}_j}^{A_{ij}} \left( 1 - B_{g^{(1)}_i g^{(2)}_j} \right)^{1 - A_{ij}} \right], \nonumber \\
&p(\pi) = \frac{\Gamma (a_{\mathrm{D}} K_0)}{\Gamma (a_{\mathrm{D}})^{K_0}} \prod_{k=1}^{K_0} \pi_k^{a_{\mathrm{D}}-1}, \ \ \ 
p(\rho) = \frac{\Gamma (a_{\mathrm{D}} H_0)}{\Gamma (a_{\mathrm{D}})^{H_0}} \prod_{h=1}^{H_0} \rho_h^{a_{\mathrm{D}}-1}, \nonumber \\
&p(B) = \prod_{k, h} \frac{B_{kh}^{b_{\mathrm{B}}-1}(1 - B_{kh})^{b_{\mathrm{B}}-1}}{B(b_{\mathrm{B}}, b_{\mathrm{B}})}, 
\end{align}
where $p(\cdot)$ represents a probability density, and $a_{\mathrm{D}}$ and $b_{\mathrm{B}}$ are the hyperparameters. 

From Lemma 4.2 in \cite{Keribin2012}, for an estimated block structure $(\hat{g}^{(1)}, \hat{g}^{(2)})$, the resulting asymptotic ICL is given by
\begin{align}
\label{eq:icl_K0H0}
\mathrm{ICL} (K_0, H_0) &= \sum_k |I_k| \log \left( \frac{|I_k|}{n} \right) + \sum_h |J_h| \log \left( \frac{|J_h|}{p} \right) \nonumber \\
&+ \sum_{k, h} |I_{k}| |J_{h}| \left[ \hat{B}_{k h} \log \hat{B}_{k h} + \left( 1 - \hat{B}_{k h} \right) \log \left( 1 - \hat{B}_{k h} \right) \right] \nonumber \\
&- \frac{K_0 - 1}{2} \log n - \frac{H_0 - 1}{2} \log p - \frac{K_0 H_0}{2} \log (np). 
\end{align}
The proof of (\ref{eq:icl_K0H0}) is given in Appendix \ref{ap_icl}. 

To check the accuracy of the proposed test and the ICL, we generated synthetic binary data matrices based on the Bernoulli distribution as in Section \ref{sec:accuracy}, and checked the ratio of trials where the selected set of cluster numbers $(K_0, H_0)$ is equal to the null one $(K, H)$. We set the null set of cluster numbers at $(K, H) = (4, 3)$, and tried the following five settings with respect to the block-wise mean $B$. 
\begin{align}
&B' = 
\begin{pmatrix}
0.9 & 0.1 & 0.4  \\
0.2 & 0.7 & 0.3  \\
0.3 & 0.2 & 0.8  \\
0.6 & 0.9 & 0.1  \\
\end{pmatrix}, \nonumber \\
&\forall k, h,\ B_{kh} = \left( 1 - \frac{t}{5} \right) (B'_{kh} - 0.5) + 0.5, \ \ \ \mathrm{for}\ t = 0, \dots, 4. 
\end{align}
With respect to the matrix size, we tried the following five settings for each setting of $B$: $(n, p) = (40 \times i, 30 \times i)$, $i = 1, \dots, 5$. The null block of each element was chosen in the same way as in Section \ref{sec:accuracy}. In each of $5$ (for the setting of $B$) $\times 5$ (for the setting of matrix size) settings, we generated $100$ observed matrices, and applied the proposed test using a significance level of $\alpha = 0.01$ and the model selection based on the ICL. Unlike the proposed sequential test, which stopped if the null hypothesis was accepted, the ICL was computed for all the sets of cluster numbers from $(1, 1)$ to $(n, p)$ and then the optimal setting was selected that achieved the maximum ICL. For each setting, we estimated the block structure of an observed matrix based on the Ward's hierarchical clustering algorithm \cite{Ward1963}. 

Figure \ref{fig:accuracy_compare} shows the accuracy of the proposed test and the model selection based on the ICL. Although the purpose of the proposed test is not to achieve high accuracy in model selection, in some cases with small differences between the block-wise means $\{ B_{kh} \}$, it achieved better performance than the ICL. With larger difference between $\{ B_{kh} \}$, the ICL performed better than the proposed test in terms of model selection. Figures \ref{fig:ratio_compare_test} and \ref{fig:ratio_compare_ICL}, respectively, show the ratios of the trials where each set of cluster numbers was selected by the proposed test and the ICL. From Figures \ref{fig:ratio_compare_test} and \ref{fig:ratio_compare_ICL}, we see that in most cases (e.g., $B_{kh} \in [0.26, 0.74]$ for all $(k, h)$), the ICL tended to select smaller sets of cluster numbers than the proposed test. 

\begin{figure}[t]
  \centering
  \includegraphics[width=0.4\hsize]{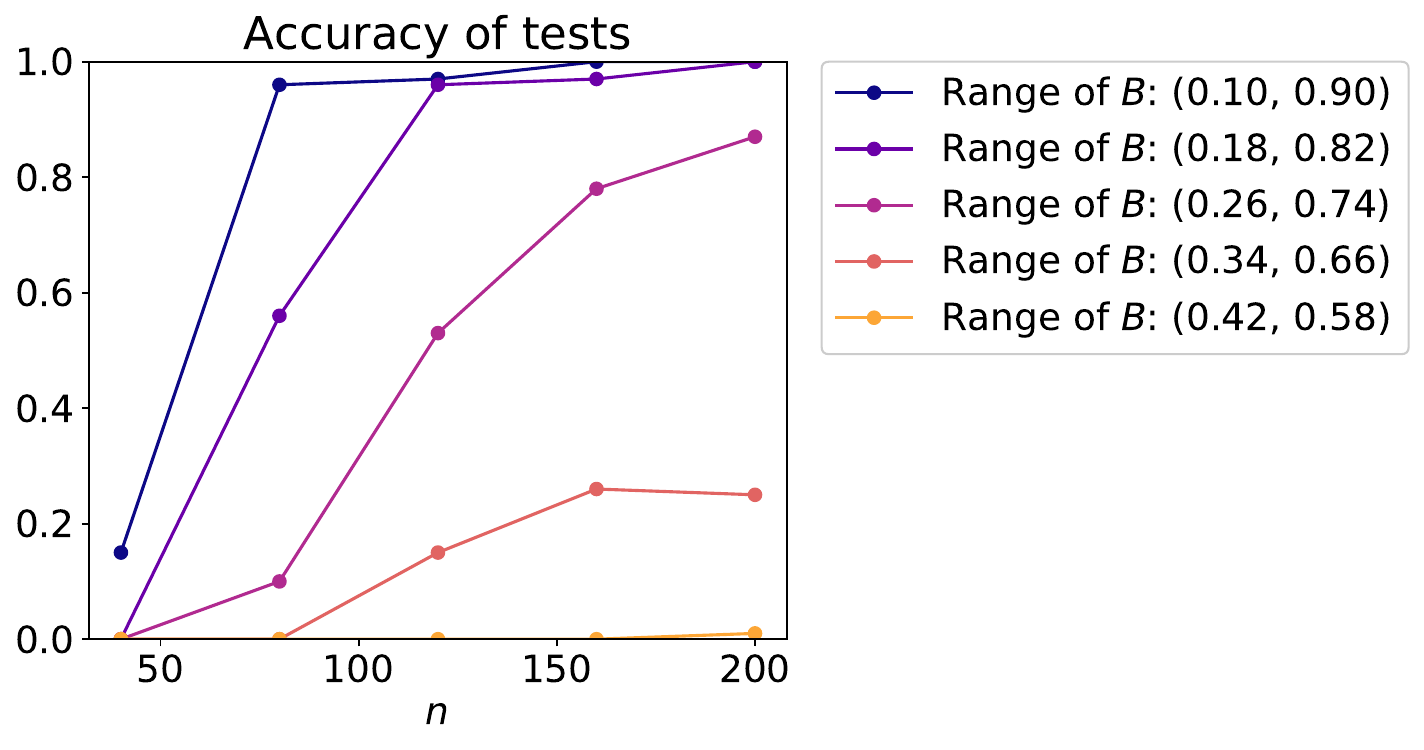}
  \includegraphics[width=0.4\hsize]{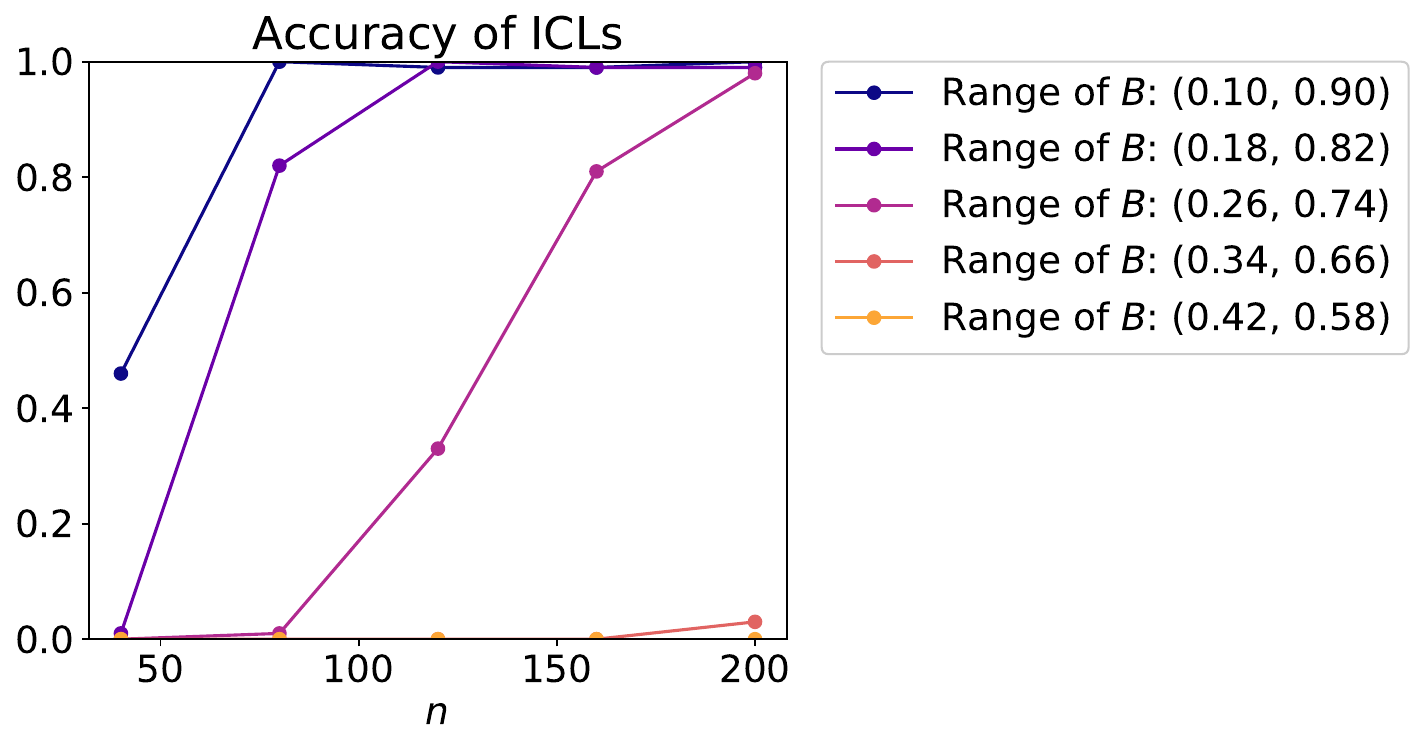}
  \caption{Accuracy of the proposed test and the model selection based on the ICL under five different settings of block-wise mean $B$. }
  \label{fig:accuracy_compare}
\end{figure}
\begin{figure}[t]
  \centering
  \includegraphics[width=0.99\hsize]{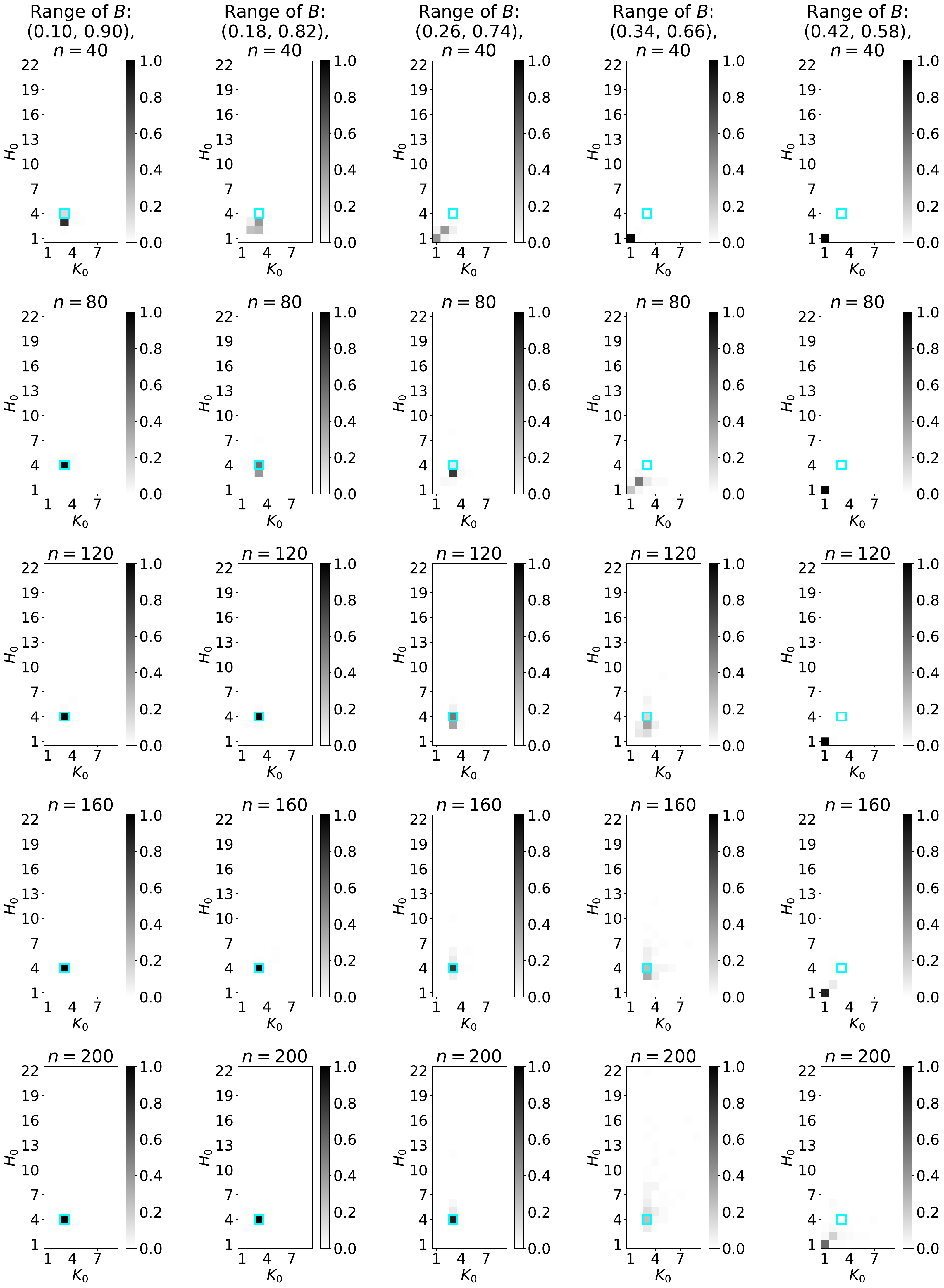}
  \caption{Ratios of the trials where each set of cluster numbers $(K_0, H_0)$ was selected by the proposed test. The cyan rectangles show the null set of cluster numbers $(K, H)$. }
  \label{fig:ratio_compare_test}
\end{figure}
\begin{figure}[t]
  \centering
  \includegraphics[width=0.99\hsize]{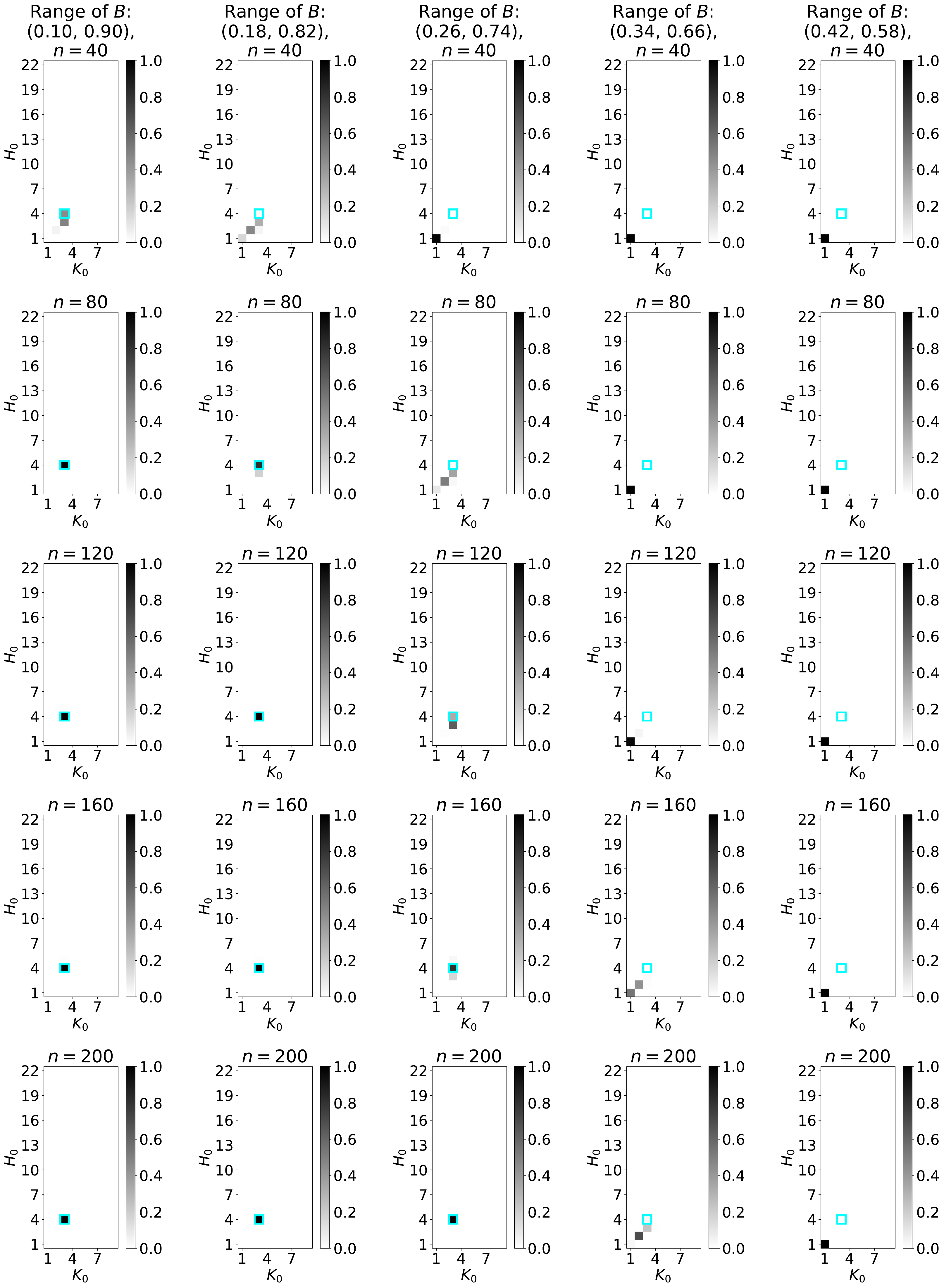}
  \caption{Ratios of the trials where each set of cluster numbers $(K_0, H_0)$ was selected by the model selection based on the ICL. The cyan rectangles show the null set of cluster numbers $(K, H)$. }
  \label{fig:ratio_compare_ICL}
\end{figure}

\subsection{Real data analysis: Congressional Voting Records Data Set}

We also checked the result when we applied the proposed test to 1984 United States Congressional Voting Records Database from UCI Machine Learning Repository \cite{Dua2017}. The original data set contains three types of votes (``yea,'' ``nay,'' and unknown) for the pairs of a congressman and an attribute. We treated unknown as ``nay,'' as in \cite{Wyse2017}. The number of instances or congressmen and that of attributes are $435$ and $16$, respectively. Based on this data set, we defined a binary matrix $A \in \mathbb{R}^{435 \times 16}$, where the elements of one and zero, respectively, correspond to ``yea'' and ``nay.'' 

As in Section \ref{sec:accuracy}, we applied the proposed sequential tests using a significance level of $\alpha = 0.01$, until the null hypothesis was accepted. We also computed the ICL for each setting of a hypothetical set of cluster numbers $(K_0, H_0)$, and selected one with the largest ICL. For each setting of a hypothetical set of cluster numbers $(K_0, H_0)$, we estimated the block structure based on the Ward's hierarchical clustering algorithm \cite{Ward1963}. 

As a result, the sets of cluster numbers $(9, 14)$ and $(3, 13)$ were selected by the proposed test and the ICL, respectively. Figure \ref{fig:congress_A} shows the observed data matrix and its estimated block structures with the selected sets of cluster numbers. From Figure \ref{fig:congress_A}, we see that a finer block structure was accepted by the proposed test than the ICL, particularly for the row (i.e., congressman) cluster assignments. As for the column (i.e., attribute) cluster assignments, ``anti-satellite-test-ban,'' ``aid-to-nicaraguan-contras,'' and ``mx-missile'' were assigned into the same cluster in the selected block structure of the ICL, whereas the proposed test distinguished the first two attributes from the last one. Figure \ref{fig:congress_result} shows the $p$-value of the proposed test and the ICL for each setting of a hypothetical set of cluster numbers $(K_0, H_0)$ until the null hypothesis was accepted. 

\begin{figure}[t]
  \centering
  \includegraphics[width=0.265\hsize]{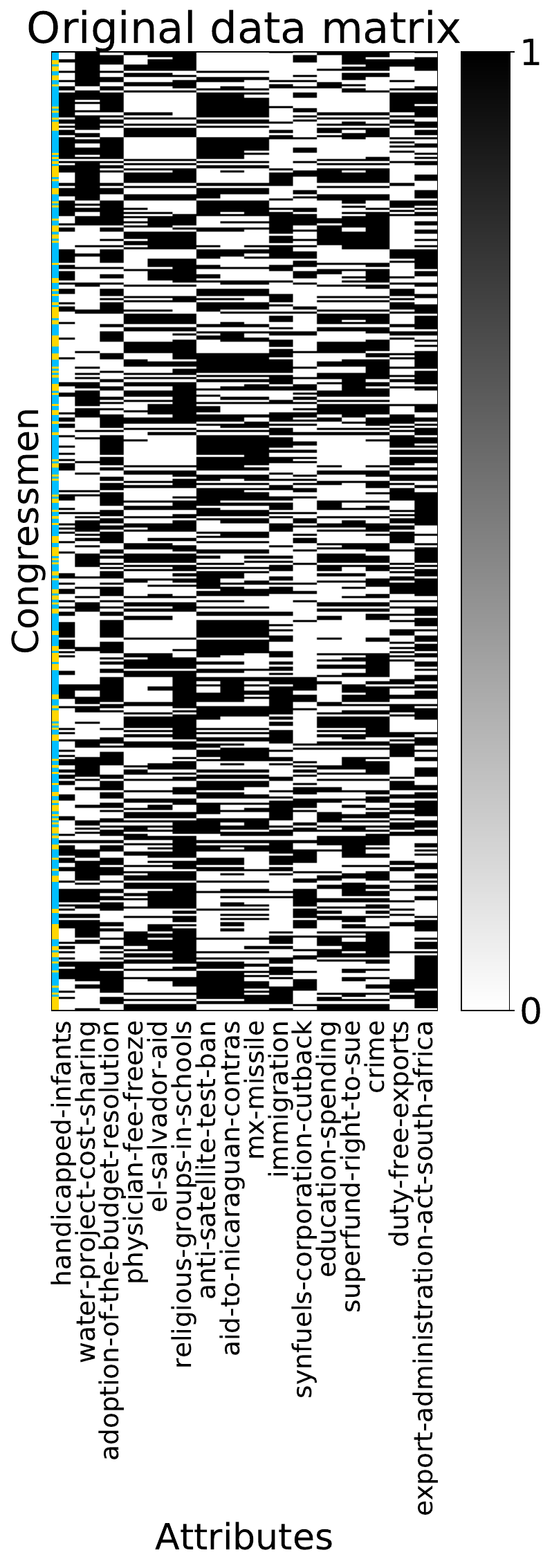}
  \includegraphics[width=0.265\hsize]{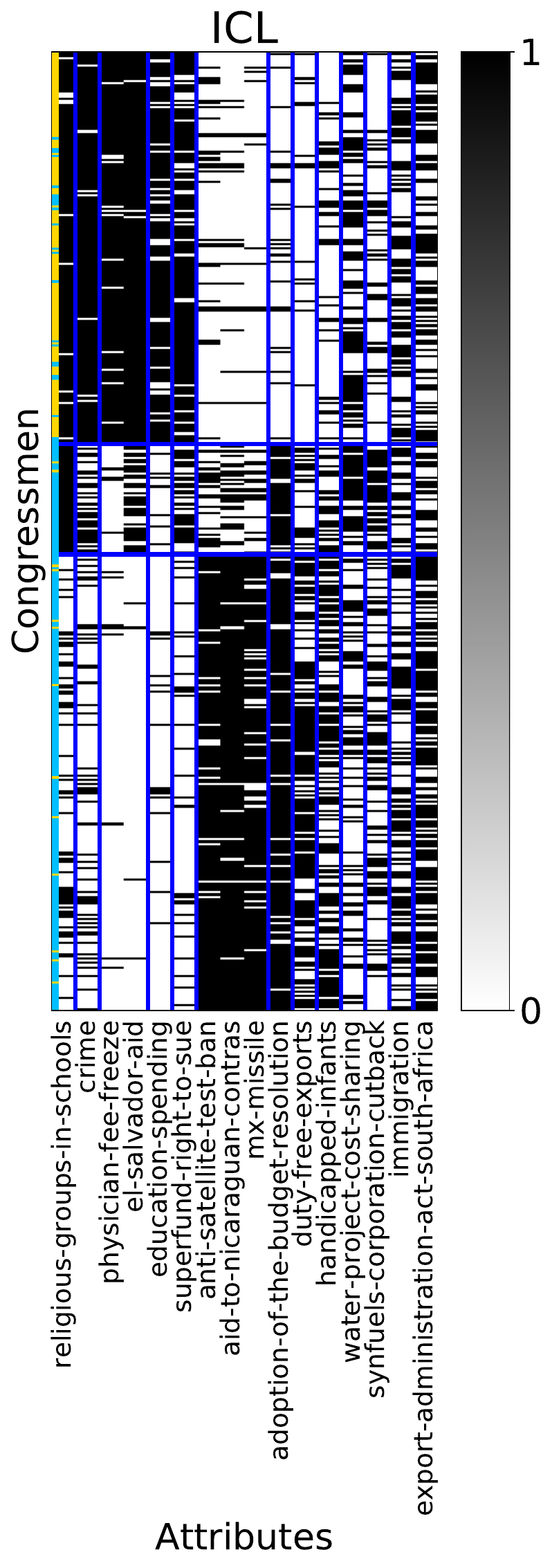}
  \includegraphics[width=0.265\hsize]{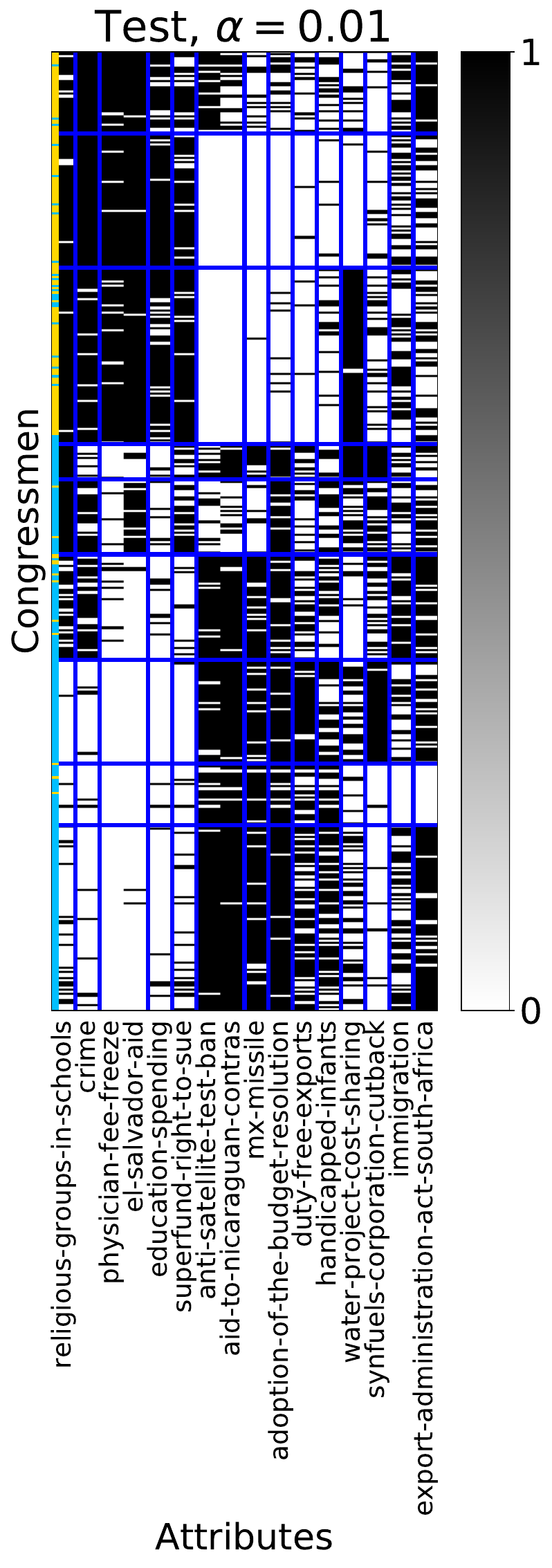}
  \includegraphics[width=0.17\hsize]{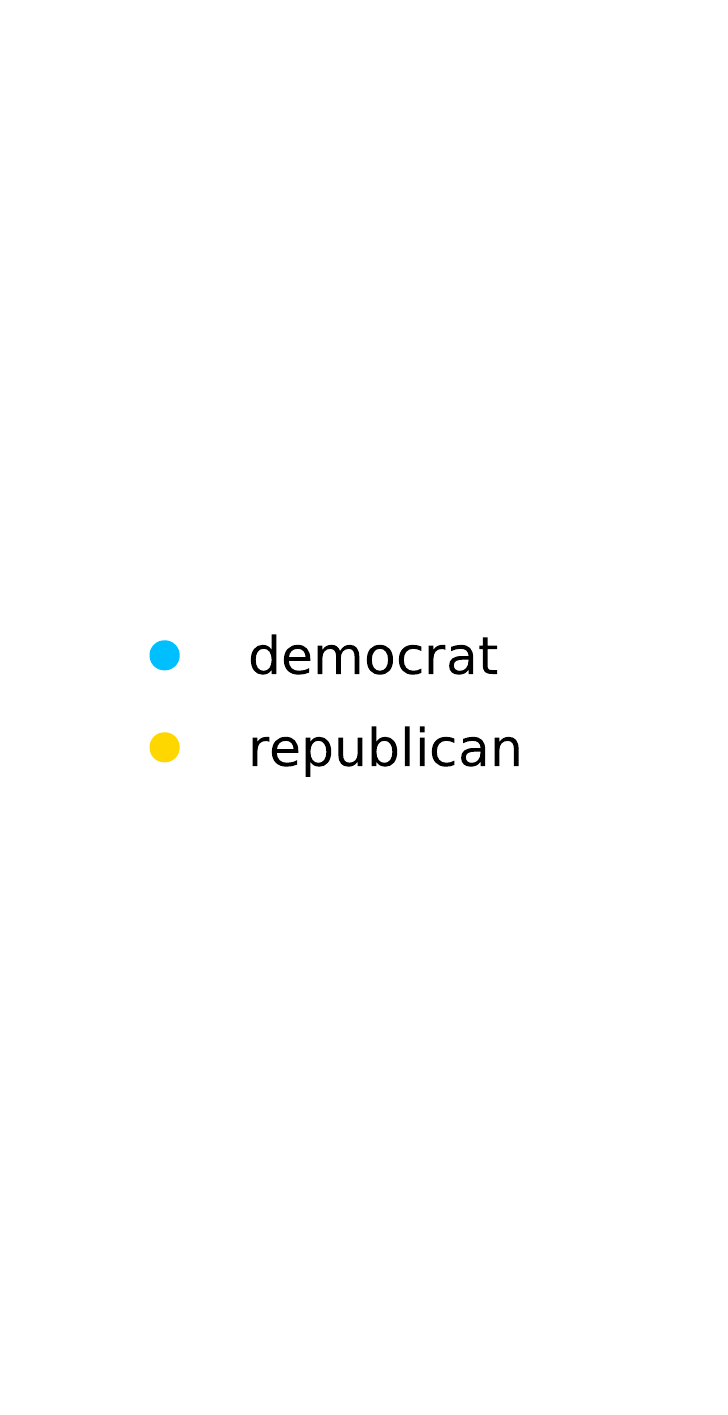}
  \caption{The observed data matrix of the Congressional Voting Records Data Set \cite{Dua2017} and its estimated block structures. The black and white elements, respectively, show ``yea'' and ``nay.'' }
  \label{fig:congress_A}
\end{figure}
\begin{figure}[t]
  \centering
  \includegraphics[width=0.3\hsize]{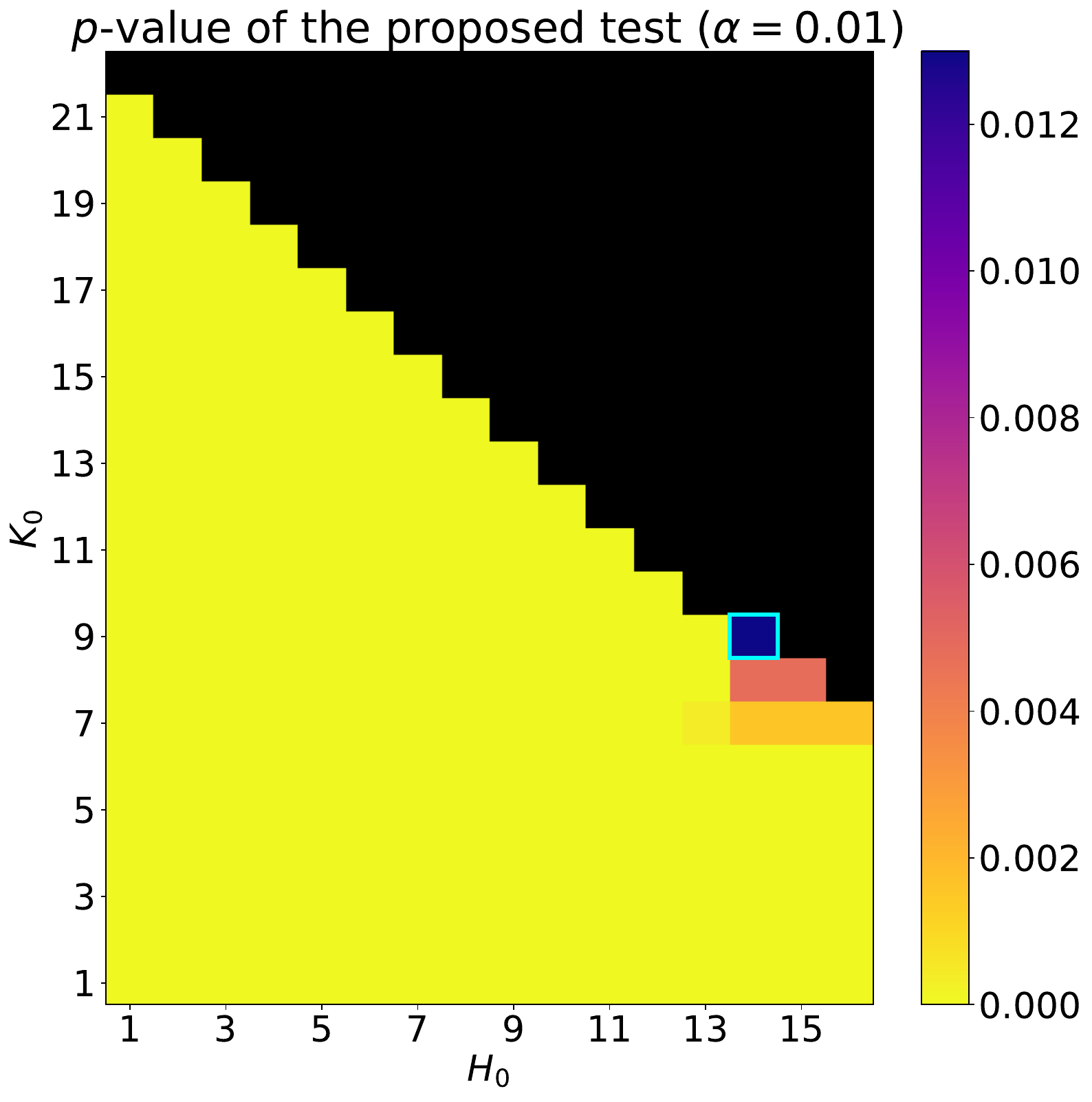}
  \includegraphics[width=0.3\hsize]{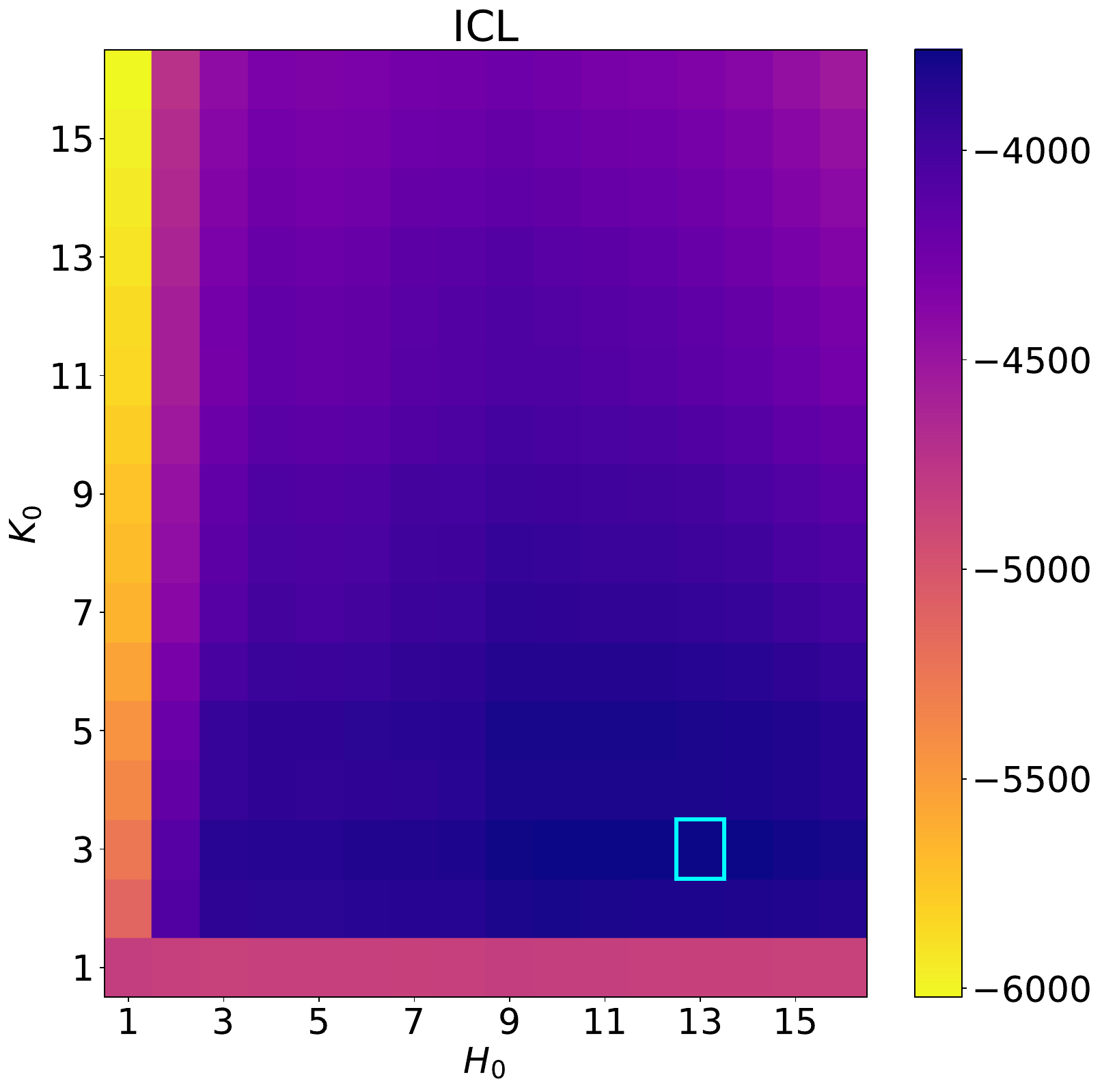}
  \caption{The $p$-value of the proposed test and the ICL for each setting of a hypothetical set of cluster numbers $(K_0, H_0)$. In the left figure, we plotted the $p$-values only for the tested settings (i.e., until the null hypothesis was accepted). As for the ICL, we plotted a part of results ($K_0 \leq 16$, $H_0 \leq 16$) for visibility. The cyan rectangles show the selected sets of cluster numbers. }
  \label{fig:congress_result}
\end{figure}


\section{Discussion}
\label{sec:discussion}

In this section, we discuss the proposed test method in terms of the test statistic and the conditions for the generative model.

With respect to the asymptotic behavior, the proposed test has a favorable property in terms of the power. From Theorem \ref{th:unrealizable_two_sided}, under the alternative hypothesis, the test statistic $T$ increases in proportion to $m^{\frac{5}{3}}$ with high probability, where $n, p \propto m$. In other words, the probability that the test makes a type II error (i.e., $T < t(\alpha)$) converges to zero in the limit of $p \to \infty$. Based on this fact, in the asymptotic sense, we do not need to consider the correction for the multiple comparison when applying the proposed sequential testing. However, it has not been shown what occurs in the non-asymptotic setting. In general, practical data matrices have finite sizes, where there has been shown no theoretical guarantee like Theorems \ref{th:realizable}, \ref{th:unrealizable_lower}, and \ref{th:unrealizable_upper}. 
On the other hand, for a Gaussian case (i.e., each entry of a matrix independently follows $\mathcal{N} (0, 1)$), the following statement holds \cite{Ma2012}: Suppose $n = n(p) > p$ and $n/p \to \gamma \in [1, \infty )$ in the limit of $p \to \infty$. Then, for any $s_0$, there exists $N_0 \in \mathbb{N}$ such that when $\max (n, p) \geq N_0$ and $\max (n, p)$ is even, for all $s \geq s_0$, 
\begin{eqnarray}
\label{eq:finite_sample_bound}
|\mathrm{Pr} (T^* \leq s) - F_1 (s)| \leq C(s_0) [\max (n, p)]^{-2/3} \exp \left( -\frac{s}{2} \right), 
\end{eqnarray}
where $T^*$ is defined as in (\ref{eq:T_true}) and $C(\cdot )$ is a continuous and non-increasing function. From the above inequality (\ref{eq:finite_sample_bound}), \textit{if the clustering algorithm outputs the correct block assignments,} the convergence rate of the normalized maximum eigenvalue $T^*$ of matrix $\tilde{Z}^\top \tilde{Z}$ (where $\tilde{Z}$ is defined as in (\ref{eq:tilde_Z})) to the Tracy-Widom distribution with index $1$ is $O(m^{-2/3})$. However, since the distribution of $T$ is unknown in the case where the correct block assignment is \textit{not} obtained, the convergence rate of $T$ is also unknown. Deriving the convergence rate of $T$ by considering the above discussion is a future research topic. 

In regard to the conditions for using the proposed test method, our proposed test is applicable to a wide range of practical distributional settings (e.g., Bernoulli distribution for binary data matrices and Poisson distribution for sparse ones). Nevertheless, it still requires some assumptions for the latent block structure of an observed matrix. For instance, the row and column cluster numbers $(K, H)$ should be constants that do not increase with the matrix sizes $n$ and $p$. Also, there should be no too small block (i.e., $n_{\mathrm{min}} = \Omega_p (m)$ and $p_{\mathrm{min}} = \Omega_p (m)$). In some practical cases, where it is more appropriate to assume that the number of blocks increases with the matrix size, it will be useful to construct a test which does not require the above conditions. As for the sub-exponential condition, Ding and Yang \cite{Ding2018} have shown more relaxed sufficient condition for the scaled maximum eigenvalue $T^*$ to converge in law to $TW_1$ distribution. However, the \textit{delocalization property} of an eigenvector of matrix $Z^{\top} Z$ \cite{Bloemendal2016}, which we used in Appendix \ref{sec:ap_lambda_diff} to prove our main result, has not been derived in the form as in the sub-exponential case \cite{Bloemendal2016}. If (\ref{eq:delocalization}) in Appendix \ref{sec:ap_lambda_diff} is shown in the above more general case, it would also be possible to extend our proposed test to such a case. Furthermore, there are proposed variants of latent block models with which we assume different block structures from a regular grid \cite{Roy2008, Nakano2014}. To construct test methods for the above settings is an important topic for future research. 


\section{Conclusion}
\label{sec:conclusion}

Latent block models are effective tools for biclustering, where rows and columns of an observed matrix are simultaneously decomposed into clusters. Such a bicluster structure appears in various types of relational data, such as the customer-product transaction data or and the document-word relationship data. One open problem in using latent block models is that there has been no statistical test method for determining the number of blocks. In this study, we developed a goodness-of-fit test for latent block models based on a result from the random matrix theory. By defining the test statistic $T$ based on the estimators of the block-wise means and standard deviations, we have derived its asymptotic behavior in both realizable (i.e., $(K, H) = (K_0, H_0)$) and unrealizable (i.e., $K > K_0$ or $H > H_0$) cases. Particularly, it has been shown that the test statistic $T$ converges in law to Tracy-Widom distribution with index $1$ in the realizable case. Based on these results, it was made possible to test whether the given observed matrix had $K_0 \times H_0$ latent blocks or more ones. In the experiments, we showed the validity of the proposed test method in terms of both the asymptotic behavior of the test statistic and the test accuracy by using synthetic data matrices with ground truth block structures. 


\section*{Acknowledgments}

TS was partially supported by JSPS KAKENHI (18K19793, 18H03201, and 20H00576), Japan Digital Design, and JST CREST.


\clearpage
\begin{appendices}
\section{Proof of $\left| \tilde{S}_{kh} - S_{kh} \right| = O_p \left( \frac{1}{m} \right)$. }
\label{sec:ap_sigma_tilde}

Let $n_k$ and $p_h$, respectively, be the row and column sizes of the $(k, h)$th \textbf{null} block, and $A^{(k, h)}$, $P^{(k, h)}$ and $\tilde{P}^{(k, h)}$, respectively, be the $(k, h)$th \textbf{null} blocks of matrices $A$, $P$ and $\tilde{P}$. Here, we prove the following lemma: 
\setcounter{lemma}{0}
\renewcommand{\thelemma}{\Alph{section}\arabic{lemma}}
\begin{lemma}
\label{lm:sigma_tilde}
Under the assumption that the fourth moment of the noise $Z_{ij}$ is bounded ($\mathbb{E} [Z_{ij}^4] < \infty$), 
\begin{eqnarray}
\label{eq:sigma_tilde_op}
\left| \tilde{S}_{kh} - S_{kh} \right| = O_p \left( \frac{1}{m} \right), 
\end{eqnarray}
where $\tilde{S}_{kh} = \sqrt{\frac{1}{n_k p_h} \sum_{i=1}^{n_k} \sum_{j=1}^{p_h} \left( A^{(k, h)}_{ij} - \tilde{B}_{kh} \right)^2}$. 
\end{lemma}
\begin{proof}
From the above definition of $\tilde{S}_{kh}$, we have 
\begin{align}
\tilde{S}_{kh}^2 
=&\ \frac{1}{n_k p_h} \sum_{i=1}^{n_k} \sum_{j=1}^{p_h} \left( A^{(k, h)}_{ij} - \tilde{B}_{kh} \right)^2 \nonumber \\
=&\ \frac{1}{n_k p_h} \sum_{i=1}^{n_k} \sum_{j=1}^{p_h} \left( A^{(k, h)}_{ij} - B_{kh} \right)^2 - \left( B_{kh} - \tilde{B}_{kh} \right)^2. 
\end{align}
To derive the second equation, we used the fact that $\tilde{B}_{kh} = \frac{1}{n_k p_h} \sum_{i=1}^{n_k} \sum_{j=1}^{p_h} A^{(k, h)}_{ij}$. Therefore, the following inequality holds: 
\begin{eqnarray}
\label{eq:sigma_tilde_diff}
\left| \tilde{S}_{kh}^2 - S_{kh}^2 \right| \leq \left| \frac{1}{n_k p_h} \sum_{i=1}^{n_k} \sum_{j=1}^{p_h} \left( A^{(k, h)}_{ij} - B_{kh} \right)^2 - S_{kh}^2 \right| + \left( B_{kh} - \tilde{B}_{kh} \right)^2. 
\end{eqnarray}

The first term in (\ref{eq:sigma_tilde_diff}) is given by 
\begin{align}
\label{eq:sigma_tilde_y}
&\frac{1}{n_k p_h} \sum_{i=1}^{n_k} \sum_{j=1}^{p_h} \left( A^{(k, h)}_{ij} - B_{kh} \right)^2 - S_{kh}^2 \nonumber \\
=&\ \frac{1}{n_k p_h} \sum_{i=1}^{n_k} \sum_{j=1}^{p_h} \left[ \left( A^{(k, h)}_{ij} - B_{kh} \right)^2 - S_{kh}^2 \right] 
= \frac{1}{n_k p_h} \sum_{i=1}^{n_k} \sum_{j=1}^{p_h} Y^{(k, h)}_{ij}, 
\end{align}
where we defined that $Y^{(k, h)}_{ij} \equiv \left( A^{(k, h)}_{ij} - B_{kh} \right)^2 - S_{kh}^2$. Note that $(Y^{(k, h)}_{ij})_{1 \leq i \leq n_k, 1 \leq j \leq p_h}$ is independent. The expectation and the variance of $Y^{(k, h)}_{ij}$ are given by
\begin{align}
\label{eq:sigma_tilde_ev}
\mathbb{E} \left[ Y^{(k, h)}_{ij} \right] =&\ \mathbb{E} \left[ \left( A^{(k, h)}_{ij} - B_{kh} \right)^2 \right] - S_{kh}^2 = 0, \nonumber \\
\mathbb{V} \left[ Y^{(k, h)}_{ij} \right] =&\ \mathbb{E} \left[ \left( Y^{(k, h)}_{ij} \right)^2 \right] 
= \mathbb{E} \left[ \left\{ \left( A^{(k, h)}_{ij} - B_{kh} \right)^2 - S_{kh}^2 \right\}^2 \right] \nonumber \\
=&\ S_{kh}^4 \left( \mathbb{E} \left[ \left( Z^{(k, h)}_{ij} \right)^4  \right] - 1 \right). 
\end{align}
From (\ref{eq:sigma_tilde_ev}), we have 
\begin{align}
\label{eq:sigma_tilde_ev_mean}
\mathbb{E} \left[ \frac{1}{n_k p_h} \sum_{i=1}^{n_k} \sum_{j=1}^{p_h} Y^{(k, h)}_{ij} \right] =&\ 0, \nonumber \\
\mathbb{V} \left[ \frac{1}{n_k p_h} \sum_{i=1}^{n_k} \sum_{j=1}^{p_h} Y^{(k, h)}_{ij} \right] =&\ \frac{1}{n_k p_h} S_{kh}^4 \left( \mathbb{E} \left[ \left( Z^{(k, h)}_{ij} \right)^4  \right] - 1 \right). 
\end{align}
From (\ref{eq:sigma_tilde_y}), (\ref{eq:sigma_tilde_ev_mean}), and Chebyshev's inequality, for all $t>0$, 
\begin{align}
\label{eq:chebyshev_abs}
\mathrm{Pr} &\left( \left| \frac{1}{n_k p_h} \sum_{i=1}^{n_k} \sum_{j=1}^{p_h} \left( A^{(k, h)}_{ij} - B_{kh} \right)^2 - S_{kh}^2 \right| \geq \right. \nonumber \\
&\left. t \frac{1}{\sqrt{n_k p_h}} \sqrt{ S_{kh}^4 \left( \mathbb{E} \left[ \left( Z^{(k, h)}_{ij} \right)^4  \right] - 1 \right) } \right) \leq \frac{1}{t^2}. 
\end{align}
Therefore, we have 
\begin{eqnarray}
\label{eq:sigma_tilde_term1}
\left| \frac{1}{n_k p_h} \sum_{i=1}^{n_k} \sum_{j=1}^{p_h} \left( A^{(k, h)}_{ij} - B_{kh} \right)^2 - S_{kh}^2 \right| = O_p \left( \frac{1}{m} \right). 
\end{eqnarray}

On the other hand, the second term in (\ref{eq:sigma_tilde_diff}) is given by 
\begin{align}
\label{eq:sigma_tilde_pp}
&\left( B_{kh} - \tilde{B}_{kh} \right)^2 \nonumber \\
=&\ \left[ \frac{1}{n_k p_h} \sum_{i=1}^{n_k} \sum_{j=1}^{p_h} \left( P^{(k, h)}_{ij} - A^{(k, h)}_{ij} \right) \right]^2  \ \ \ \left(\because \tilde{B}_{kh} = \frac{1}{n_k p_h} \sum_{i=1}^{n_k} \sum_{j=1}^{p_h} A^{(k, h)}_{ij}. \right) \nonumber \\
=&\ \frac{S_{kh}^2}{n_k^2 p_h^2} \left( \sum_{i=1}^{n_k} \sum_{j=1}^{p_h} Z^{(k, h)}_{ij} \right)^2. 
\end{align}

From (\ref{eq:sigma_tilde_pp}), we have 
\begin{align}
\label{eq:sigma_tilde_e_pp}
\mathbb{E} \left[ \left( B_{kh} - \tilde{B}_{kh} \right)^2 \right] 
&= \frac{S_{kh}^2}{n_k^2 p_h^2} \mathbb{E} \left[ \left( \sum_{i=1}^{n_k} \sum_{j=1}^{p_h} Z^{(k, h)}_{ij} \right)^2 \right] 
= \frac{S_{kh}^2}{n_k^2 p_h^2} \mathbb{E} \left[ X_{kh}^2 \right], 
\end{align}
where $X_{kh} \equiv \sum_{i=1}^{n_k} \sum_{j=1}^{p_h} Z^{(k, h)}_{ij}$. Here, $\mathbb{E} [X_{kh}] = 0$, and $\mathbb{E} \left[ X_{kh}^2 \right] = \mathbb{V} [X_{kh}] = n_k p_h$. By substituting this into (\ref{eq:sigma_tilde_e_pp}), we obtain 
\begin{eqnarray}
\label{eq:sigma_tilde_e_pp2}
\mathbb{E} \left[ \left( B_{kh} - \tilde{B}_{kh} \right)^2 \right] = \frac{S_{kh}^2}{n_k p_h}. 
\end{eqnarray}

From Markov's inequality and (\ref{eq:sigma_tilde_e_pp2}), 
\begin{align}
\label{eq:sigma_tilde_markov}
&\forall t>0, \ \mathrm{Pr} \left[ \left( B_{kh} - \tilde{B}_{kh} \right)^2 \geq t \right] \leq \frac{S_{kh}^2}{n_k p_h} \frac{1}{t} \nonumber \\
\iff \ &\forall t'>0, \ \mathrm{Pr} \left[ \left( B_{kh} - \tilde{B}_{kh} \right)^2 \geq \frac{S_{kh}^2}{n_k p_h} t' \right] \leq \frac{1}{t'}. 
\end{align}
Therefore, we have 
\begin{eqnarray}
\label{eq:sigma_tilde_term2}
\left( B_{kh} - \tilde{B}_{kh} \right)^2 = O_p \left( \frac{1}{m^2} \right). 
\end{eqnarray}

By combining (\ref{eq:sigma_tilde_diff}), (\ref{eq:sigma_tilde_term1}), and (\ref{eq:sigma_tilde_term2}), 
\begin{eqnarray}
\label{eq:sigma_tilde_diff2}
\left| \tilde{S}_{kh}^2 - S_{kh}^2 \right| = O_p \left( \frac{1}{m} \right). 
\end{eqnarray}
The difference between $\tilde{S}_{kh}$ and $S_{kh}$ is given by 
\begin{eqnarray}
\label{eq:sigma_tilde_diff3}
\left| \tilde{S}_{kh} - S_{kh} \right| = \frac{\left| \tilde{S}_{kh}^2 - S_{kh}^2 \right|}{\tilde{S}_{kh} + S_{kh}}. 
\end{eqnarray}
Here, from (\ref{eq:sigma_tilde_diff2}), $m \left| \tilde{S}_{kh}^2 - S_{kh}^2 \right|$ is bounded in probability. Therefore, $\tilde{S}_{kh}$ converges in probability to $S_{kh}$. By combining this fact with (\ref{eq:sigma_tilde_diff2}) and (\ref{eq:sigma_tilde_diff3}), we finally obtain 
\begin{eqnarray}
\left| \tilde{S}_{kh} - S_{kh} \right| = O_p \left( \frac{1}{m} \right), 
\end{eqnarray}
which concludes the proof. 
\end{proof}

\newpage
\section{Proof of $\| \hat{Z} \|_{\mathrm{op}}^2 = \| Z \|_{\mathrm{op}}^2 + O_p \left( m^{\frac{2}{7}} \right)$ in realizable case}
\label{sec:ap_lambda_diff}

We first derive the relationship between the maximum eigenvalues of matrices $Z^{\top} Z$ and $\tilde{Z}^{\top} \tilde{Z}$ in Lemma \ref{lm:zop2_eq1} and \ref{lm:zop2_eq2}. 


\begin{lemma}
\label{lm:zop2_eq1}

Let $\lambda_1$ and $\tilde{\lambda}_1$, respectively, be the maximum eigenvalues of matrices $Z^{\top} Z$ and $\tilde{Z}^{\top} \tilde{Z}$ (i.e., $\| Z \|_{\mathrm{op}}^2$ and $\| \tilde{Z} \|_{\mathrm{op}}^2$, respectively). Then, for all $\epsilon \in \left( 0, \frac{1}{2} \right)$, the following equation holds: 
\begin{align}
\lambda_1 \leq \tilde{\lambda}_1 + O_p \left( m^{\epsilon} \right). 
\end{align}
\end{lemma}

\begin{proof}
Let $\bm{v}$ and $\tilde{\bm{v}}$, respectively, be the normalized eigenvectors of $Z^{\top} Z$ and $\tilde{Z}^{\top} \tilde{Z}$, corresponding to the maximum eigenvalues $\lambda_1$ and $\tilde{\lambda}_1$: 
\begin{align}
Z^{\top} Z \bm{v} = \lambda_1 \bm{v}, \ \ \ \| \bm{v} \| = 1, \nonumber \\
\tilde{Z}^{\top} \tilde{Z} \tilde{\bm{v}} = \tilde{\lambda}_1 \tilde{\bm{v}}, \ \ \ \| \tilde{\bm{v}} \| = 1. 
\end{align}

Since $\sqrt{\tilde{\lambda}_1}$ is the largest singular value of matrix $\tilde{Z}$, we have
\begin{align}
\label{eq:tilde_sqrt_lmd_entire}
&\sqrt{\tilde{\lambda}_1} = \sup_{\bm{u}} \frac{\| \tilde{Z} \bm{u} \|}{\| \bm{u} \|} \geq \frac{\| \tilde{Z} \bm{v} \|}{\| \bm{v} \|} = \| \tilde{Z} \bm{v} \| 
\iff \tilde{\lambda}_1 \geq \| \tilde{Z} \bm{v} \|^2. 
\end{align}

We also define the following matrix $Q^{(k, h)}$ for each $(k, h)$th block: 
\begin{align}
\label{eq:q_kh}
Q^{(k, h)} &\equiv Z^{(k, h)} - \frac{\tilde{S}_{kh}}{S_{kh}} \tilde{Z}^{(k, h)} = \frac{1}{S_{kh}} \left( \tilde{P}^{(k, h)} - P^{(k, h)} \right) \nonumber \\
&= \frac{1}{n_k p_h} \left[ \sum_{i \in I_k, j \in J_h} \frac{A_{ij} - P_{ij}}{S_{kh}} \right] \begin{bmatrix}
1 & \cdots & 1 \\
\vdots & & \vdots \\
1 & \cdots & 1
\end{bmatrix} \nonumber \\
&= \left( \frac{1}{n_k p_h} \sum_{i = 1}^{n_k} \sum_{j = 1}^{p_h} Z^{(k, h)}_{ij} \right) \begin{bmatrix}
1 & \cdots & 1 \\
\vdots & & \vdots \\
1 & \cdots & 1
\end{bmatrix}, 
\end{align}
where $n_k$ and $p_h$, respectively, are the row and column sizes of the $(k, h)$th \textbf{null} block. 

Let $\underline{Z}^{(k, h)}$, $\underline{\tilde{Z}}^{(k, h)}$, and $\underline{Q}^{(k, h)}$, respectively be $n \times p$ matrices whose $(k, h)$th null blocks are $Z^{(k, h)}$, $\tilde{Z}^{(k, h)}$ and $Q^{(k, h)}$ and whose all the other entries are zero. As shown in Figure \ref{fig:q_block}, we define matrix $Q$ as $Q \equiv \sum_{k=1}^K \sum_{h=1}^H \underline{Q}^{(k, h)}$. 

From (\ref{eq:tilde_sqrt_lmd_entire}), we have
\begin{align}
\label{eq:tilde_lambda_entire0}
\tilde{\lambda}_1 &\geq \| \tilde{Z} \bm{v} \|^2 = \| \sum_{k=1}^K \sum_{h=1}^H \underline{\tilde{Z}}^{(k, h)} \bm{v} \|^2 
= \| \sum_{k=1}^K \sum_{h=1}^H \frac{S_{kh}}{\tilde{S}_{kh}} (\underline{Z}^{(k, h)} - \underline{Q}^{(k, h)}) \bm{v} \|^2 \nonumber \\
&\geq \left[ \| \sum_{k=1}^K \sum_{h=1}^H (\underline{Z}^{(k, h)} - \underline{Q}^{(k, h)}) \bm{v} \| - \| \sum_{k=1}^K \sum_{h=1}^H \left( 1 - \frac{S_{kh}}{\tilde{S}_{kh}} \right) (\underline{Z}^{(k, h)} - \underline{Q}^{(k, h)}) \bm{v} \| \right]^2 \nonumber \\
&= \left[ \| (Z - Q) \bm{v} \| - \| \sum_{k=1}^K \sum_{h=1}^H \left( 1 - \frac{S_{kh}}{\tilde{S}_{kh}} \right) (\underline{Z}^{(k, h)} - \underline{Q}^{(k, h)}) \bm{v} \| \right]^2 \nonumber \\
&\geq \| (Z - Q) \bm{v} \|^2 - 2 \| (Z - Q) \bm{v} \| \| \sum_{k=1}^K \sum_{h=1}^H \left( 1 - \frac{S_{kh}}{\tilde{S}_{kh}} \right) (\underline{Z}^{(k, h)} - \underline{Q}^{(k, h)}) \bm{v} \| \nonumber \\
&\geq \| (Z - Q) \bm{v} \|^2 - 2 \| (Z - Q) \bm{v} \| \left[ \sum_{k=1}^K \sum_{h=1}^H \left| 1 - \frac{S_{kh}}{\tilde{S}_{kh}} \right| \| (\underline{Z}^{(k, h)} - \underline{Q}^{(k, h)}) \bm{v} \| \right] \nonumber \\
&\geq \| (Z - Q) \bm{v} \|^2 - 2 \| (Z - Q) \bm{v} \| \left[ \sum_{k=1}^K \sum_{h=1}^H \left| 1 - \frac{S_{kh}}{\tilde{S}_{kh}} \right| \left( \| \underline{Z}^{(k, h)} \bm{v} \| + \| \underline{Q}^{(k, h)} \bm{v} \| \right) \right] \nonumber \\
&\geq \| (Z - Q) \bm{v} \|^2 - 2 \| (Z - Q) \bm{v} \| \left[ \sum_{k=1}^K \sum_{h=1}^H \left| 1 - \frac{S_{kh}}{\tilde{S}_{kh}} \right| \left( \sqrt{\lambda_1^{(k, h)}} + \| \underline{Q}^{(k, h)} \bm{v} \| \right) \right] \nonumber \\
&\geq \lambda_1 - 2 \sqrt{\lambda_1} \| Q \bm{v} \| - 2 (\sqrt{\lambda_1} + \| Q \bm{v} \|) \left[ \sum_{k=1}^K \sum_{h=1}^H \left| 1 - \frac{S_{kh}}{\tilde{S}_{kh}} \right| \left( \sqrt{\lambda_1^{(k, h)}} + \| \underline{Q}^{(k, h)} \bm{v} \| \right) \right], 
\end{align}
where $\lambda_1^{(k, h)}$ is the maximum eigenvalue of matrix $\left( Z^{(k, h)} \right)^{\top} Z^{(k, h)}$. 

\begin{figure}[t]
  \centering
  \includegraphics[width=0.8\hsize]{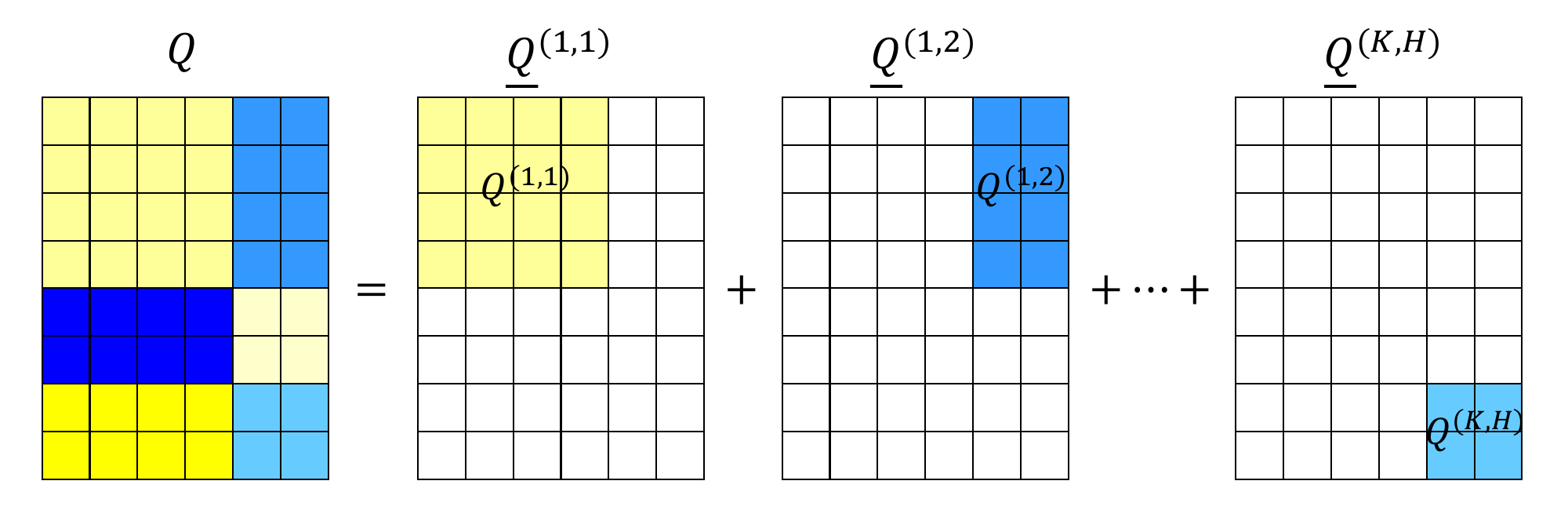}
  \caption{Definition of matrix $Q$. } 
  \label{fig:q_block}
\end{figure}

From now on, we prove that $\| Q \bm{v} \| = O_p \left( \frac{1}{\sqrt{m}} \right)$ and $\| \underline{Q}^{(k, h)} \bm{v} \| = O_p \left( \frac{1}{\sqrt{m}} \right)$. We use the following notations: 
\begin{align}
\nu_{kh} &\equiv \frac{1}{n_k p_h} \sum_{i=1}^{n_k} \sum_{j=1}^{p_h} Z^{(k, h)}_{ij}, \ \ \ 
\omega_{hh'} \equiv \sum_{k=1}^K n_k \nu_{kh} \nu_{kh'}, \ \ \ 
\zeta_h \equiv \sum_{h'=1}^H \omega_{hh'} \sum_{j \in J_{h'}} v_j, 
\end{align}
where $v_j$ is the $j$th entry of vector $\bm{v}$. 
Note that the $(k, h)$th block of matrix $Q$, the $(h, h')$th block of matrix $Q^{\top} Q$, and the $h$th block of vector $Q^{\top} Q \bm{v}$ are given by $\nu_{kh} \begin{bmatrix}
1 & \cdots & 1 \\
\vdots & & \vdots \\
1 & \cdots & 1
\end{bmatrix}$, 
$\omega_{hh'} \begin{bmatrix}
1 & \cdots & 1 \\
\vdots & & \vdots \\
1 & \cdots & 1
\end{bmatrix}$, and $\zeta_h \begin{bmatrix}
1\\
\vdots \\
1
\end{bmatrix}$, respectively. 
Let $\bm{u}^{(h)} \in \mathbb{R}^p$ be a vector whose entries in the $h$th column cluster are $\frac{1}{\sqrt{p_h}}$ and whose all the other entries are zero. 
Here, from Theorem 2.17 in \cite{Bloemendal2016}, each $j$th eigenvector $\bm{v}_j$ of matrix $Z^{\top} Z$ has a \textit{delocalization property}, that is, for any constant vector $\bm{w}$ that satisfies $\| \bm{w} \| = 1$, 
\begin{align}
\label{eq:delocalization}
| \bm{v}_j^{\top} \bm{w} | = O_p \left( m^{-\frac{1}{2} + \epsilon} \right), \ \ \ \mathrm{for\ all}\ \epsilon > 0. 
\end{align}
From Theorem 2.20 in \cite{Bloemendal2016}, (\ref{eq:delocalization}) holds uniformly in $j$ and $\bm{w}$. 

Note that this theorem holds in our case, where we assume that $n, p \propto m$ and that each entry of $Z$ is independently generated from a distribution with zero mean and unit variance that satisfies the sub-exponential condition. 
Therefore, we have $| \bm{v}^{\top} \bm{u}^{(h)} | = O_p \left( m^{-\frac{1}{2} + \epsilon} \right)$ for all $\epsilon > 0$. Since $Q^{\top} Q \bm{v} = \sum_{h=1}^H \zeta_h \sqrt{p_h} \bm{u}^{(h)}$ and $\nu_{kh} = O_p \left( \frac{1}{m} \right)$, $\omega_{hh'} = O_p \left( \frac{1}{m} \right)$, $\zeta_h = \sum_{h'=1}^H \omega_{hh'} \sqrt{p_{h'}} \bm{v}^{\top} \bm{u}^{(h')} = O_p \left( m^{-1+\epsilon} \right)$ for all $\epsilon > 0$ by definition, the following equation holds: 
\begin{align}
\label{eq:qv_norm_entire}
&\| Q \bm{v} \|^2 = \sum_{h=1}^H \zeta_h \sqrt{p_h} \bm{v}^{\top} \bm{u}^{(h)} = O_p \left( m^{-1+2\epsilon} \right) \nonumber \\
&\iff \| Q \bm{v} \| = O_p \left( m^{-\frac{1}{2} + \epsilon} \right), \ \ \ \mathrm{for\ all}\ \epsilon > 0. 
\end{align}

Similarly, $(h, h)$th block of matrix $(\underline{Q}^{(k, h)})^{\top} \underline{Q}^{(k, h)}$ is $n_k \nu_{kh}^2 \begin{bmatrix}
1 & \cdots & 1 \\
\vdots & & \vdots \\
1 & \cdots & 1
\end{bmatrix}$, and its all the other entries are zero, which results in that $(\underline{Q}^{(k, h)})^{\top} \underline{Q}^{(k, h)} \bm{v} = n_k \nu_{kh}^2 \left( \sum_{j \in J_h} v_j \right) \sqrt{p_h} \bm{u}^{(h)}$. Therefore, we have
\begin{align}
\label{eq:qkhv_norm_entire}
\| \underline{Q}^{(k, h)} \bm{v} \|^2 &= n_k \nu_{kh}^2 \left( \sum_{j \in J_h} v_j \right) \sqrt{p_h} \bm{v}^{\top} \bm{u}^{(h)} = n_k \nu_{kh}^2 \left( \sqrt{p_h} \bm{v}^{\top} \bm{u}^{(h)} \right)^2 \nonumber \\
&= O_p \left( m^{-1+2\epsilon} \right) \nonumber \\
\iff& \| \underline{Q}^{(k, h)} \bm{v} \| = O_p \left( m^{-\frac{1}{2} + \epsilon} \right), \ \ \ \mathrm{for\ all}\ \epsilon > 0. 
\end{align}

Moreover, from Lemma \ref{lm:sigma_tilde}, we have
\begin{align}
\label{eq:1_s_tildes}
\left| 1 - \frac{S_{kh}}{\tilde{S}_{kh}} \right| = O_p \left( \frac{1}{m} \right). 
\end{align}

By substituting (\ref{eq:qv_norm_entire}), (\ref{eq:qkhv_norm_entire}), and (\ref{eq:1_s_tildes}) into (\ref{eq:tilde_lambda_entire0}) and by setting $\epsilon < \frac{1}{2}$, we have 
\begin{align}
\tilde{\lambda}_1 &\geq \lambda_1 - O_p \left( m^{\epsilon} \right) - O_p \left( m^{\frac{1}{2}} \right) \left[ \sum_{k=1}^K \sum_{h=1}^H O_p \left( m^{-1} \right) O_p \left( m^{\frac{1}{2}} \right) \right] \nonumber \\
&= \lambda_1 - O_p \left( m^{\epsilon} \right), 
\end{align}
which concludes the proof. 
\end{proof}


\begin{lemma}
\label{lm:zop2_eq2}
Let $\lambda_1$ and $\tilde{\lambda}_1$, respectively, be the maximum eigenvalues of matrices $Z^{\top} Z$ and $\tilde{Z}^{\top} \tilde{Z}$. Then, the following equation holds: 
\begin{align}
\label{eq:lemma_zop2_1_4}
\tilde{\lambda}_1 \leq \lambda_1 + O_p \left( m^{\frac{2}{7} + \epsilon} \right),\ \ \ \mathrm{for\ all}\ \epsilon \in \left( 0, \frac{2}{7} \right). 
\end{align}
\end{lemma}

\begin{proof}
We use the same notations as in Lemmas \ref{lm:zop2_eq1}. Let $\{ \lambda_j \}$ and $\{ \bm{v}_j \}$, respectively, be the sets of the eigenvalues and the corresponding normalized eigenvectors (i.e., $\| \bm{v}_j \| = 1$ for all $j$) of matrix $Z^{\top} Z$, where $j = 1, \dots, p$ and $\lambda_1 \geq \lambda_2 \geq \dots \geq \lambda_p$. We also define that $\tau_{kh} \equiv \frac{S_{kh}}{\tilde{S}_{kh}}$. Note that $|\tau_{kh} - 1| = O_p \left( \frac{1}{m} \right)$ from (\ref{eq:1_s_tildes}). Let $\tilde{\bm{v}}^{(h)} \in \mathbb{R}^{p_h}$ be a subvector of $\tilde{\bm{v}}$ in the $h$th column cluster. 

Since $Z^{\top} Z$ is a symmetric matrix, its eigenvectors $\{ \bm{v}_j \}$ form an orthonormal system, and thus there exists a unique set of coefficients $\{ c_j \}$ that satisfies 
\begin{align}
\tilde{\bm{v}} = \sum_{j = 1}^p c_j \bm{v}_j = \tilde{\bm{v}}_1 + \tilde{\bm{v}}_2, 
\end{align}
where
\begin{align}
\label{eq:tilde_v1v2}
&\tilde{\bm{v}}_1 \equiv \sum_{j = 1}^t c_j \bm{v}_j, \ \ \ 
\tilde{\bm{v}}_2 \equiv \sum_{j = t + 1}^p c_j \bm{v}_j, \nonumber \\
&\lambda_t \geq \lambda_1 - n^d, \ \ \ 
\lambda_{t + 1} < \lambda_1 - n^d, \ \ \ 
d = \frac{5}{7}. 
\end{align}

Therefore, the following equation holds: 
\begin{align}
\label{eq:tilde_lambda_u1u2}
&\tilde{\lambda}_1 = \tilde{\bm{v}}^{\top} \tilde{Z}^{\top} \tilde{Z} \tilde{\bm{v}} 
= \| \sum_{k=1}^K \sum_{h=1}^H \left[ \underline{Z}^{(k, h)} + (\tau_{kh} - 1) \underline{Z}^{(k, h)} - \tau_{kh} \underline{Q}^{(k, h)} \right] \tilde{\bm{v}} \|^2 \nonumber \\
&= \| \left\{ Z + \sum_{k=1}^K \sum_{h=1}^H \left[ (\tau_{kh} - 1) \underline{Z}^{(k, h)} - \tau_{kh} \underline{Q}^{(k, h)} \right] \right\} \tilde{\bm{v}} \|^2 \nonumber \\
&= \| Z \tilde{\bm{v}} \|^2 + 2 \tilde{\bm{v}}^{\top} Z^{\top} \sum_{k=1}^K \sum_{h=1}^H \left[ (\tau_{kh} - 1) \underline{Z}^{(k, h)} - \tau_{kh} \underline{Q}^{(k, h)} \right] \tilde{\bm{v}} \nonumber \\
&+ \| \sum_{k=1}^K \sum_{h=1}^H \left[ (\tau_{kh} - 1) \underline{Z}^{(k, h)} - \tau_{kh} \underline{Q}^{(k, h)} \right] \tilde{\bm{v}} \|^2 \nonumber \\
&\leq \| Z \tilde{\bm{v}} \|^2 + 2 \sqrt{\lambda_1} \sum_{k=1}^K \sum_{h=1}^H |\tau_{kh} - 1| \| \underline{Z}^{(k, h)} \tilde{\bm{v}} \| - 2 \tilde{\bm{v}}^{\top} Z^{\top} \sum_{k=1}^K \sum_{h=1}^H \tau_{kh} \underline{Q}^{(k, h)} \tilde{\bm{v}} \nonumber \\
&+ \| \sum_{k=1}^K \sum_{h=1}^H \left[ (\tau_{kh} - 1) \underline{Z}^{(k, h)} - \tau_{kh} \underline{Q}^{(k, h)} \right] \tilde{\bm{v}} \|^2 \nonumber \\
&= \| Z \tilde{\bm{v}} \|^2 + 2 \sqrt{\lambda_1} \sum_{k=1}^K \sum_{h=1}^H |\tau_{kh} - 1| \| Z^{(k, h)} \tilde{\bm{v}}^{(h)} \| - 2 \tilde{\bm{v}}^{\top} Z^{\top} \sum_{k=1}^K \sum_{h=1}^H \tau_{kh} \underline{Q}^{(k, h)} \tilde{\bm{v}} \nonumber \\
&+ \| \sum_{k=1}^K \sum_{h=1}^H \left[ (\tau_{kh} - 1) \underline{Z}^{(k, h)} - \tau_{kh} \underline{Q}^{(k, h)} \right] \tilde{\bm{v}} \|^2 \nonumber \\
&= \| Z \tilde{\bm{v}} \|^2 + O_p (\sqrt{m}) \sum_{k=1}^K \sum_{h=1}^H O_p \left( \frac{1}{m} \right) O_p (\sqrt{m}) - 2 \tilde{\bm{v}}^{\top} Z^{\top} \sum_{k=1}^K \sum_{h=1}^H \tau_{kh} \underline{Q}^{(k, h)} \tilde{\bm{v}} \nonumber \\
&+ \| \sum_{k=1}^K \sum_{h=1}^H \left[ (\tau_{kh} - 1) \underline{Z}^{(k, h)} - \tau_{kh} \underline{Q}^{(k, h)} \right] \tilde{\bm{v}} \|^2 \nonumber \\
&= \| Z \tilde{\bm{v}} \|^2 + O_p (1) - 2 \tilde{\bm{v}}^{\top} Z^{\top} \sum_{k=1}^K \sum_{h=1}^H \tau_{kh} \underline{Q}^{(k, h)} \tilde{\bm{v}} \nonumber \\
&+ \| \sum_{k=1}^K \sum_{h=1}^H \left[ (\tau_{kh} - 1) \underline{Z}^{(k, h)} - \tau_{kh} \underline{Q}^{(k, h)} \right] \tilde{\bm{v}} \|^2\ \ \ (\because K\ \mathrm{and}\ H\ \mathrm{are\ fixed\ constants}) \nonumber \\
&\leq \| Z \tilde{\bm{v}} \|^2 + O_p (1) - 2 \tilde{\bm{v}}^{\top} Z^{\top} \sum_{k=1}^K \sum_{h=1}^H \tau_{kh} \underline{Q}^{(k, h)} \tilde{\bm{v}} \nonumber \\
&+ \left( \| \sum_{k=1}^K \sum_{h=1}^H (\tau_{kh} - 1) \underline{Z}^{(k, h)} \tilde{\bm{v}} \| + \| \sum_{k=1}^K \sum_{h=1}^H \tau_{kh} \underline{Q}^{(k, h)} \tilde{\bm{v}} \| \right)^2 \nonumber \\
&\leq \| Z \tilde{\bm{v}} \|^2 + O_p (1) - 2 \tilde{\bm{v}}^{\top} Z^{\top} \sum_{k=1}^K \sum_{h=1}^H \tau_{kh} \underline{Q}^{(k, h)} \tilde{\bm{v}} \nonumber \\
&+ \left( \sum_{k=1}^K \sum_{h=1}^H |\tau_{kh} - 1| \| \underline{Z}^{(k, h)} \tilde{\bm{v}} \| + \sum_{k=1}^K \sum_{h=1}^H \tau_{kh} \| \underline{Q}^{(k, h)} \tilde{\bm{v}} \| \right)^2 \nonumber \\
&= \| Z \tilde{\bm{v}} \|^2 + O_p (1) - 2 \sum_{k=1}^K \sum_{h=1}^H \tau_{kh} \tilde{\bm{v}}^{\top} Z^{\top} \underline{Q}^{(k, h)} \tilde{\bm{v}} \nonumber \\
&+ \left[ O_p \left( \frac{1}{\sqrt{m}} \right) + \sum_{k=1}^K \sum_{h=1}^H \tau_{kh} \| \underline{Q}^{(k, h)}\tilde{\bm{v}} \| \right]^2\ \ \ (\because K\ \mathrm{and}\ H\ \mathrm{are\ fixed\ constants}). 
\end{align}

As for the last term in (\ref{eq:tilde_lambda_u1u2}), the following equation holds: 
\begin{align}
\label{eq:Qv_norm2}
\| \underline{Q}^{(k, h)} \tilde{\bm{v}} \|^2 &= \tilde{\bm{v}}^{\top} (\underline{Q}^{(k, h)})^{\top} \underline{Q}^{(k, h)} \tilde{\bm{v}} 
= \sum_{j=1}^p \sum_{j'=1}^p c_j c_{j'} \bm{v}_j^{\top} (\underline{Q}^{(k, h)})^{\top} \underline{Q}^{(k, h)} \bm{v}_{j'} \nonumber \\
&= O_p \left( \frac{1}{m} \right) \sum_{j=1}^p \sum_{j'=1}^p c_j c_{j'} \bm{v}_j^{\top} p_h (\bm{v}_{j'}^{\top} \bm{u}^{(h)}) \bm{u}^{(h)} \nonumber \\
&= O_p (1) \left[ \sum_{j=1}^p c_j (\bm{v}_j^{\top} \bm{u}^{(h)}) \right]^2 
\leq O_p (1) \left[ \sqrt{\sum_{j=1}^p c_j^2} \sqrt{\sum_{j=1}^p (\bm{v}_j^{\top} \bm{u}^{(h)})^2} \right]^2 \nonumber \\
&= O_p (1) \left( \sum_{j=1}^p c_j^2 \right) \left[ \sum_{j=1}^p (\bm{v}_j^{\top} \bm{u}^{(h)})^2 \right] 
= O_p (1) \| \tilde{\bm{v}} \|^2 \left[ \sum_{j=1}^p (\bm{v}_j^{\top} \bm{u}^{(h)})^2 \right] \nonumber \\
&= O_p (1) \left[ \sum_{j=1}^p (\bm{v}_j^{\top} \bm{u}^{(h)})^2 \right] \nonumber \\
&= O_p \left( m^{2\epsilon} \right)\ \ \ (\because (\ref{eq:delocalization})\ \mathrm{holds\ uniformly\ in}\ j\ \mathrm{and}\ \bm{w}) \\
\label{eq:Qv_norm}
\iff& \| \underline{Q}^{(k, h)} \tilde{\bm{v}} \| = O_p \left( m^{\epsilon} \right),\ \ \ \mathrm{for\ all}\ \epsilon > 0. 
\end{align}
where $\bm{u}^{(h)} \in \mathbb{R}^p$ is a vector whose elements in the $h$th column cluster is $\frac{1}{\sqrt{p_h}}$ and whose all the other elements are zero. 
In the last equation in (\ref{eq:Qv_norm2}), we used the delocalization property of $\{ \bm{v}_j \}$, which are eigenvectors of matrix $Z^{\top} Z$. 
By substituting (\ref{eq:Qv_norm}) into (\ref{eq:tilde_lambda_u1u2}) and using the fact that $\tau_{kh} = 1 + O_p \left( \frac{1}{m} \right)$ and the assumption that $K$ and $H$ are fixed constants, we have 
\begin{align}
\label{eq:tilde_lambda_u1u2_2}
&\tilde{\lambda}_1 \leq \| Z \tilde{\bm{v}} \|^2 - 2 \sum_{k=1}^K \sum_{h=1}^H \tau_{kh} \tilde{\bm{v}}^{\top} Z^{\top} \underline{Q}^{(k, h)} \tilde{\bm{v}} + O_p \left( m^{2\epsilon} \right),\ \ \ \mathrm{for\ all}\ \epsilon > 0. 
\end{align}

Here, by definition in (\ref{eq:tilde_v1v2}), the following equation holds: 
\begin{align}
\label{eq:Zv_vZQv}
&\| Z \tilde{\bm{v}} \|^2 - 2 \sum_{k=1}^K \sum_{h=1}^H \tau_{kh} \tilde{\bm{v}}^{\top} Z^{\top} \underline{Q}^{(k, h)} \tilde{\bm{v}} \nonumber \\
&= (\tilde{\bm{v}}_1 + \tilde{\bm{v}}_2)^{\top} Z^{\top} Z (\tilde{\bm{v}}_1 + \tilde{\bm{v}}_2) - 2 \sum_{k=1}^K \sum_{h=1}^H \tau_{kh} \tilde{\bm{v}}^{\top} Z^{\top} \underline{Q}^{(k, h)} \tilde{\bm{v}} \nonumber \\
&= \tilde{\bm{v}}_1^{\top} Z^{\top} Z \tilde{\bm{v}}_1 + \tilde{\bm{v}}_2^{\top} Z^{\top} Z \tilde{\bm{v}}_2 - 2 \sum_{k=1}^K \sum_{h=1}^H \tau_{kh} \tilde{\bm{v}}^{\top} Z^{\top} \underline{Q}^{(k, h)} \tilde{\bm{v}} \nonumber \\
&= \left( \sum_{j = 1}^t c_j \bm{v}_j \right)^{\top} \sum_{j = 1}^t c_j \lambda_j \bm{v}_j + \left( \sum_{j = t + 1}^p c_j \bm{v}_j \right)^{\top} \sum_{j = t + 1}^p c_j \lambda_j \bm{v}_j - 2 \sum_{k=1}^K \sum_{h=1}^H \tau_{kh} \tilde{\bm{v}}^{\top} Z^{\top} \underline{Q}^{(k, h)} \tilde{\bm{v}} \nonumber \\
&= \sum_{j = 1}^t c_j^2 \lambda_j + \sum_{j = t + 1}^p c_j^2 \lambda_j - 2 \sum_{k=1}^K \sum_{h=1}^H \tau_{kh} \tilde{\bm{v}}^{\top} Z^{\top} \underline{Q}^{(k, h)} \tilde{\bm{v}}\ \ \ (\because \| \bm{v}_j \| = 1\ \mathrm{for\ all}\ j) \nonumber \\
&\leq \lambda_1 \sum_{j = 1}^t c_j^2 + (\lambda_1 - n^d) \sum_{j = t + 1}^p c_j^2 - 2 \sum_{k=1}^K \sum_{h=1}^H \tau_{kh} \tilde{\bm{v}}^{\top} Z^{\top} \underline{Q}^{(k, h)} \tilde{\bm{v}} \nonumber \\
&= \lambda_1 \| \tilde{\bm{v}} \|^2 - n^d \| \tilde{\bm{v}}_2 \|^2 - 2 \sum_{k=1}^K \sum_{h=1}^H \tau_{kh} \tilde{\bm{v}}^{\top} Z^{\top} \underline{Q}^{(k, h)} (\tilde{\bm{v}}_1 + \tilde{\bm{v}}_2) \nonumber \\
&= \lambda_1 - n^d \| \tilde{\bm{v}}_2 \|^2 - 2 \sum_{k=1}^K \sum_{h=1}^H \tau_{kh} \tilde{\bm{v}}^{\top} Z^{\top} \underline{Q}^{(k, h)} \tilde{\bm{v}}_1 - 2 \sum_{k=1}^K \sum_{h=1}^H \tau_{kh} \tilde{\bm{v}}^{\top} Z^{\top} \underline{Q}^{(k, h)} \tilde{\bm{v}}_2. 
\end{align}

The third term in (\ref{eq:Zv_vZQv}) can be upper bounded as follows: 
\begin{align}
\label{vZQv1_upper}
&- \tilde{\bm{v}}^{\top} Z^{\top} \underline{Q}^{(k, h)} \tilde{\bm{v}}_1 \leq | \tilde{\bm{v}}^{\top} Z^{\top} \underline{Q}^{(k, h)} \tilde{\bm{v}}_1 | 
= \left| \tilde{\bm{v}}^{\top} Z^{\top} \underline{Q}^{(k, h)} \left( \sum_{j = 1}^t c_j \bm{v}_j \right) \right| \nonumber \\
&= | \sum_{j = 1}^t c_j \tilde{\bm{v}}^{\top} Z^{\top} \underline{Q}^{(k, h)} \bm{v}_j | \nonumber \\
&= | \sum_{j = 1}^t c_j O_p \left( \frac{1}{m} \right) \left[ \sqrt{n_k} \left( \bm{\tilde{w}}^{(k)} \right)^{\top} Z \tilde{\bm{v}} \right]^{\top} \sqrt{p_h} \left( \bm{\tilde{u}}^{(h)} \right)^{\top} \bm{v}_j | \nonumber \\
&= | \sum_{j = 1}^t c_j O_p (1) \left( \tilde{\bm{v}}^{\top} Z^{\top} \bm{\tilde{w}}^{(k)} \right) \left( \bm{\tilde{u}}^{(h)} \right)^{\top} \bm{v}_j | \nonumber \\
&= | \sum_{j = 1}^t c_j O_p \left( m^{-\frac{1}{2} + \epsilon} \right) (\tilde{\bm{v}}^{\top} Z^{\top} \bm{\tilde{w}}^{(k)}) |
= | \sum_{j = 1}^t c_j O_p \left( m^{-\frac{1}{2} + \epsilon} \right) O_p (\sqrt{m}) | \nonumber \\
&= | \sum_{j = 1}^t c_j |\ O_p \left( m^{\epsilon} \right)\ \ \ (\because (\ref{eq:delocalization})\ \mathrm{holds\ uniformly\ in}\ j\ \mathrm{and}\ \bm{w}) \nonumber \\
&\leq \sqrt{\sum_{j = 1}^t c_j^2} \sqrt{\sum_{j = 1}^t 1^2}\ O_p \left( m^{\epsilon} \right) 
\leq \| \tilde{\bm{v}} \| \sqrt{t}\ O_p \left( m^{\epsilon} \right) 
= \sqrt{t}\ O_p \left( m^{\epsilon} \right),\ \ \ \mathrm{for\ all}\ \epsilon > 0, 
\end{align}
where $\bm{\tilde{w}}^{(k)} \in \mathbb{R}^n$ is a vector whose elements in the $k$th row cluster is $\frac{1}{\sqrt{n_k}}$ and whose all the other elements are zero. Here we used the delocalization property of $\{ \bm{v}_j \}$, which are eigenvectors of matrix $Z^{\top} Z$. 

The fourth term in (\ref{eq:Zv_vZQv}) can also be upper bounded as follows: 
\begin{align}
\label{vZQv2_upper}
- \tilde{\bm{v}}^{\top} Z^{\top} \underline{Q}^{(k, h)} \tilde{\bm{v}}_2 &\leq | \tilde{\bm{v}}^{\top} Z^{\top} \underline{Q}^{(k, h)} \tilde{\bm{v}}_2 | 
\leq \| \tilde{\bm{v}} \| \| Z^{\top} \underline{Q}^{(k, h)} \tilde{\bm{v}}_2 \| 
= \| Z^{\top} \underline{Q}^{(k, h)} \tilde{\bm{v}}_2 \| \nonumber \\
&\leq \| Z^{\top} \underline{Q}^{(k, h)} \|_{\mathrm{op}} \| \tilde{\bm{v}}_2 \| 
\leq \| Z \|_{\mathrm{op}} \| \underline{Q}^{(k, h)} \|_{\mathrm{F}} \| \tilde{\bm{v}}_2 \| \nonumber \\
&= O_p (\sqrt{m}) O_p \left( \frac{1}{m} \right) \sqrt{n_k p_h} \| \tilde{\bm{v}}_2 \| 
= \| \tilde{\bm{v}}_2 \| O_p (\sqrt{m}). 
\end{align}

By substituting (\ref{vZQv1_upper}) and (\ref{vZQv2_upper}) into (\ref{eq:Zv_vZQv}), we have 
\begin{align}
\label{eq:Zv_vZQv2}
&\| Z \tilde{\bm{v}} \|^2 - 2 \sum_{k=1}^K \sum_{h=1}^H \tau_{kh} \tilde{\bm{v}}^{\top} Z^{\top} \underline{Q}^{(k, h)} \tilde{\bm{v}} \nonumber \\
&\leq \lambda_1 - n^d \| \tilde{\bm{v}}_2 \|^2 + 2 \sum_{k=1}^K \sum_{h=1}^H \tau_{kh} \sqrt{t}\ O_p \left( m^{\epsilon} \right) + 2 \sum_{k=1}^K \sum_{h=1}^H \tau_{kh} \| \tilde{\bm{v}}_2 \| O_p (\sqrt{m}) \nonumber \\
&= \lambda_1 - n^d \| \tilde{\bm{v}}_2 \|^2 + 2 \sum_{k=1}^K \sum_{h=1}^H \sqrt{t}\ O_p \left( m^{\epsilon} \right) + 2 \sum_{k=1}^K \sum_{h=1}^H \| \tilde{\bm{v}}_2 \| O_p (\sqrt{m}) \ \ \ (\because \tau_{kh} = 1 + O_p \left( \frac{1}{m} \right)) \nonumber \\
&= \lambda_1 - n^d \| \tilde{\bm{v}}_2 \|^2 + \sqrt{t}\ O_p \left( m^{\epsilon} \right) + \| \tilde{\bm{v}}_2 \| O_p (\sqrt{m}). 
\end{align}
In the last equation of (\ref{eq:Zv_vZQv2}), we used the assumption that $K$ and $H$ are fixed constants. 

Let $\nu_j \equiv \frac{1}{n} \lambda_j$ be a normalized eigenvalue of matrix $Z^{\top} Z$. Note that $t$ in (\ref{eq:tilde_v1v2}) is the number of normalized eigenvalues $\{ \nu_j \}$ that satisfy $\nu_j \geq \nu_1 - n^{d-1}$. We also define the following variables: 
\begin{align}
\nu_+ \equiv \left( 1 + \sqrt{\frac{p}{n}} \right)^2, \ \ \ 
\nu_- \equiv \left( 1 - \sqrt{\frac{p}{n}} \right)^2, \ \ \ 
\epsilon_1 \equiv \nu_+ - \nu_1. 
\end{align}
From (4.1) of \cite{Pillai2014}, $|\epsilon_1| = O_p \left( \phi^C m^{-\frac{2}{3}} \right)$ holds for some constant $C>0$, where $\phi \equiv (\log p)^{\log \log p}$. Since $\phi = o(m^{\tilde{\epsilon}_0})$ holds for any $\tilde{\epsilon}_0 > 0$, we have $|\epsilon_1| = O_p \left( m^{-\frac{2}{3} + \epsilon_0} \right)$ for any $\epsilon_0 > 0$. 

Since $\nu_j$ follows the Marcenko–Pastur distribution, whose probability density function is given by 
\begin{align}
q(x) = \frac{1}{2\pi} \frac{n}{p} \frac{\sqrt{\max \{ (\nu_+ - x)(x - \nu_-), 0 \}}}{x}, 
\end{align}
by setting $\epsilon_0 < d - \frac{1}{3}$, we have 
\begin{align}
\label{eq:q_nu_nd}
&q(\nu_1 - n^{d-1}) = q(\nu_+ - n^{d-1} - \epsilon_1) \nonumber \\
&= \frac{\sqrt{\nu_+ - \nu_-}}{\nu_+} \left[ n^{\frac{d-1}{2}} + O_p \left( m^{-\frac{1}{3} + \frac{\epsilon_0}{2}} \right) \right] \left[ 1 + O \left( m^{\frac{d-1}{2}} \right) + O_p \left( m^{-\frac{1}{3} + \frac{\epsilon_0}{2}} \right) \right] \nonumber \\
&= \frac{\sqrt{\nu_+ - \nu_-}}{\nu_+} n^{\frac{d-1}{2}} + O_p \left( m^{\max \{ \frac{d-1}{2}, -\frac{1}{3} + \frac{\epsilon_0}{2} \}} \right) \nonumber \\
&= \frac{\sqrt{\nu_+ - \nu_-}}{\nu_+} n^{\frac{d-1}{2}} + O_p \left( m^{\frac{d-1}{2}} \right) \ \ \ \left( \because \epsilon_0 < d - \frac{1}{3} \right). 
\end{align}

From (\ref{eq:q_nu_nd}) and the fact that $|\epsilon_1| = O_p \left( m^{-\frac{2}{3} + \epsilon_0} \right)$ for any $\epsilon_0 > 0$, by setting $\epsilon_0 < d - \frac{1}{3}$, the following equation holds: 
\begin{align}
\bar{n} &\equiv \int_{\nu_1 - n^{d-1}}^{\infty} q(x) \mathrm{d}x 
\leq \left| \int_{\nu_1 - n^{d-1}}^{\nu_+} q(x) \mathrm{d}x \right| + \left| \int_{\nu_+}^{\infty} q(x) \mathrm{d}x \right| \nonumber \\
&= \left| \int_{\nu_1 - n^{d-1}}^{\nu_+} q(x) \mathrm{d}x \right| 
\leq |\epsilon_1 + n^{d-1}|\ q(\nu_1 - n^{d-1}) \nonumber \\
&= O_p \left( m^{d-1} \right) O_p \left( m^{\frac{d-1}{2}} \right) 
= O_p \left( m^{\frac{3(d-1)}{2}} \right). 
\end{align}

From (3.7) of \cite{Pillai2014}, the difference between $\bar{n}$ and $\frac{t}{p}$ is given by
\begin{align}
\left| \bar{n} - \frac{t}{p} \right| = O_p \left( m^{-1 + \epsilon_2} \right), \ \ \ \forall \epsilon_2 > 0. 
\end{align}
Therefore, by setting $\epsilon_2 < \frac{3}{2} d - \frac{1}{2}$, we have
\begin{align}
\label{eq:t_op_3_2}
t = O_p \left( m^{\frac{3}{2} d - \frac{1}{2}} \right). 
\end{align}

By assumption in (\ref{eq:tilde_v1v2}) that $d = \frac{5}{7}$, the following equation holds for all $\epsilon > 0$: 
\begin{align}
\label{eq:Zv_vZQv2_2}
&\| Z \tilde{\bm{v}} \|^2 - 2 \sum_{k=1}^K \sum_{h=1}^H \tau_{kh} \tilde{\bm{v}}^{\top} Z^{\top} \underline{Q}^{(k, h)} \tilde{\bm{v}} 
\leq \lambda_1 + \| \tilde{\bm{v}}_2 \| \left[ n^{\frac{1}{2}} \varpi - n^d \| \tilde{\bm{v}}_2 \|\right] + O_p \left( m^{\frac{2}{7} + \epsilon} \right), \nonumber \\
&\varpi \equiv n^{-\frac{1}{2}} \| Z \|_{\mathrm{op}} \| \underline{Q}^{(k, h)} \|_{\mathrm{F}} = O_p (1)\ \ \ (\because (\ref{vZQv2_upper})). 
\end{align}
Here, we consider the following two patterns: (a) If $n^{\frac{1}{2}} \varpi - n^d \| \tilde{\bm{v}}_2 \| \leq 0$, from (\ref{eq:Zv_vZQv2_2}), we have 
\begin{align}
\label{eq:Zv_op1_3}
\| Z \tilde{\bm{v}} \|^2 - 2 \sum_{k=1}^K \sum_{h=1}^H \tau_{kh} \tilde{\bm{v}}^{\top} Z^{\top} \underline{Q}^{(k, h)} \tilde{\bm{v}} 
= \lambda_1 + O_p \left( m^{\frac{2}{7} + \epsilon} \right). 
\end{align}
(b) If $n^{\frac{1}{2}} \varpi - n^d \| \tilde{\bm{v}}_2 \| > 0$, we have $\| \tilde{\bm{v}}_2 \| < n^{\frac{1}{2} -d} \varpi$ and thus 
\begin{align}
\label{eq:v_np_nv}
\| \tilde{\bm{v}}_2 \| \left[ n^{\frac{1}{2}} \varpi - n^d \| \tilde{\bm{v}}_2 \|\right] \leq n^{1-d} \varpi^2. 
\end{align}
By assumption in (\ref{eq:tilde_v1v2}) that $d = \frac{5}{7}$, we have $n^{1-d} \varpi^2 = O_p \left( m^{\frac{2}{7}} \right)$ and thus (\ref{eq:Zv_op1_3}) holds. 

In summary, (\ref{eq:Zv_op1_3}) always holds. By combining this fact and (\ref{eq:tilde_lambda_u1u2_2}), we have
\begin{align}
&\tilde{\lambda}_1 \leq \lambda_1 + O_p \left( m^{\frac{2}{7} + \epsilon} \right) + O_p \left( m^{2\epsilon} \right),\ \ \ \mathrm{for\ all}\ \epsilon > 0. 
\end{align}
By setting $\epsilon < \frac{2}{7}$, we finally obtain
\begin{align}
&\tilde{\lambda}_1 \leq \lambda_1 + O_p \left( m^{\frac{2}{7} + \epsilon} \right),\ \ \ \mathrm{for\ all}\ \epsilon \in \left( 0, \frac{2}{7} \right), 
\end{align}
which concludes the proof. 
\end{proof}


\begin{lemma}
\label{lm:zop2_hat}

Let $\lambda_1$ and $\hat{\lambda}_1$, respectively, be the maximum eigenvalues of matrices $Z^{\top} Z$ and $\hat{Z}^{\top} \hat{Z}$ (i.e., $\| Z \|_{\mathrm{op}}^2$ and $\| \hat{Z} \|_{\mathrm{op}}^2$, respectively). Then, for all $\epsilon \in \left( 0, \frac{2}{7} \right)$, 
\begin{align}
\label{eq:lmd_hatlmd_diff}
\frac{| \lambda_1 - \hat{\lambda}_1 |}{b} = O_p \left( m^{-\frac{1}{21} + \epsilon} \right), 
\end{align}
where $b$ is defined as in (\ref{eq:ab}). 
\end{lemma}

\begin{proof}
From Lemma \ref{lm:zop2_eq1} and \ref{lm:zop2_eq2}, we have already shown that the following equation holds for all $\epsilon \in \left( 0, \frac{2}{7} \right)$: 
\begin{align}
\label{eq:lmd_tldlmd_diff}
| \lambda_1 - \tilde{\lambda}_1 | = O_p \left( m^{\frac{2}{7} + \epsilon} \right) 
\iff \frac{| \lambda_1 - \tilde{\lambda}_1 |}{b} = O_p \left( m^{-\frac{1}{21} + \epsilon} \right). 
\end{align}

We consider the joint probability of the event $\mathcal{F}_m$ that the clustering algorithm outputs the correct block structure (i.e., $\tilde{Z} = \hat{Z}$) and the event $\mathcal{G}_{m, C}$ that $\frac{| \lambda_1 - \tilde{\lambda}_1 |}{b} \leq C m^{-\frac{1}{21} + \epsilon}$ holds. Such a joint probability satisfies the following inequality: 
\begin{eqnarray}
\mathrm{Pr} \left( \mathcal{F}_m \cap \mathcal{G}_{m, C} \right) \geq 1 - \mathrm{Pr} \left( \mathcal{F}^{\mathrm{C}}_m \right) - \mathrm{Pr} \left( \mathcal{G}^{\mathrm{C}}_{m, C} \right), 
\end{eqnarray}
where $\mathcal{A}^{\mathrm{C}}$ is the complement of event $\mathcal{A}$. 
The consistency assumption \ref{asmp:consistency} guarantees that if $(K_0, H_0) = (K, H)$, $\mathrm{Pr} \left( \mathcal{F}^{\mathrm{C}}_m \right)$ converges to $0$ in the limit of $m \to \infty$. 
By combining this fact with (\ref{eq:lmd_tldlmd_diff}), we obtain 
\begin{eqnarray}
\forall \tilde{\epsilon}>0, \ \exists C>0, M>0, \ \forall m \geq M, \ 
\mathrm{Pr} \left( \mathcal{F}_m \cap \mathcal{G}_{m, C} \right) \geq 1 - \tilde{\epsilon}, 
\end{eqnarray}
which results in (\ref{eq:lmd_hatlmd_diff}). 
\end{proof}

\newpage
\section{Proof of $\hat{\sigma}^*  = O_p (KH)$ in unrealizable case}
\label{ap_sigma_star}

\begin{proof}
Throughout the proof, we use the following notations: 
\begin{itemize}
\item $A^{(k, h)}$, $P^{(k, h)}$, and $Z^{(k, h)}$, respectively, are the $(k, h)$th \textbf{null} blocks of matrices $A$, $P$, and $Z$. 
\item $\underline{A}^{(k, h)}$, $\underline{P}^{(k, h)}$, and $\underline{\hat{P}}^{(k, h)}$, respectively, are the $(k, h)$th \textbf{estimated} blocks of matrices $A$, $P$, and $\hat{P}$. 
\item We denote the row and column sizes of the $(k, h)$th \textbf{estimated} block as $\underline{n}_k$ and $\underline{p}_h$, respectively. 
\item $(k_1, h_1)$ is the set of row and column cluster indices of submatrix $\bar{X}$ in the \textbf{estimated} block structure. 
\end{itemize}

As for the order of the estimated standard deviation $\hat{\sigma}^*$, we have $\hat{\sigma}^* = \hat{S}_{k_1 h_1}$. Note that the block size $(\bar{n}_1, \bar{p}_1)$ of submatrix $\bar{X}$ is at least $(n_{\mathrm{min}}/K_0) \times (p_{\mathrm{min}}/H_0)$. Therefore, we have 
\begin{align}
\label{eq:sigma_hat}
\hat{\sigma}^* &= \hat{S}_{k_1 h_1} 
= \frac{1}{\sqrt{\underline{n}_{k_1} \underline{p}_{h_1}}} \| \underline{A}^{(k_1, h_1)} - \underline{\hat{P}}^{(k_1, h_1)} \|_{\mathrm{F}} \nonumber \\
&\leq \frac{1}{\sqrt{\bar{n}_1 \bar{p}_1}} \| \underline{A}^{(k_1, h_1)} - \underline{\hat{P}}^{(k_1, h_1)} \|_{\mathrm{F}} 
\leq \sqrt{\frac{K_0 H_0}{n_{\mathrm{min}} p_{\mathrm{min}}}} \| \underline{A}^{(k_1, h_1)} - \underline{\hat{P}}^{(k_1, h_1)} \|_{\mathrm{F}} \nonumber \\
&\leq \sqrt{\frac{K_0 H_0}{n_{\mathrm{min}} p_{\mathrm{min}}}} \| A - \hat{P} \|_{\mathrm{F}} 
= \sqrt{\frac{K_0 H_0}{n_{\mathrm{min}} p_{\mathrm{min}}}} \| A - P + P - \hat{P} \|_{\mathrm{F}} \nonumber \\
&\leq \sqrt{\frac{K_0 H_0}{n_{\mathrm{min}} p_{\mathrm{min}}}} \left( \| A - P \|_{\mathrm{F}} + \| P - \hat{P} \|_{\mathrm{F}} \right) \nonumber \\
&= \sqrt{\frac{K_0 H_0}{n_{\mathrm{min}} p_{\mathrm{min}}}} \left( \sqrt{\sum_{k=1}^K \sum_{h=1}^H \| A^{(k, h)} - P^{(k, h)} \|_{\mathrm{F}}^2} + \| P - \hat{P} \|_{\mathrm{F}} \right) \nonumber \\
&= \sqrt{\frac{K_0 H_0}{n_{\mathrm{min}} p_{\mathrm{min}}}} \left( \sqrt{\sum_{k=1}^K \sum_{h=1}^H S_{kh}^2 \| Z^{(k, h)} \|_{\mathrm{F}}^2} + \| P - \hat{P} \|_{\mathrm{F}} \right) \nonumber \\
&\leq \sqrt{\frac{K_0 H_0}{n_{\mathrm{min}} p_{\mathrm{min}}}} \left[ \sqrt{KH} \left( \max_{k = 1, \dots, K, h = 1, \dots, H} S_{kh} \right) \| Z \|_{\mathrm{F}}+ \| P - \hat{P} \|_{\mathrm{F}} \right]. 
\end{align}

Here, for all $(i, j)$, $\left( Z^{(k, h)}_{ij} \right)^2$ independently follows the same distribution, and $\mathbb{E} \left[ \left( Z^{(k, h)}_{ij} \right)^2 \right] = 1$. 
We also have $\mathbb{V} \left[ \left( Z^{(k, h)}_{ij} \right)^2 \right] = \mathbb{E} \left[ \left( Z^{(k, h)}_{ij} \right)^4 \right] - 1 < \infty$, since we have assumed that $\mathbb{E} \left[ \left( Z^{(k, h)}_{ij} \right)^4 \right] < \infty$ from the sub-exponential assumption. Therefore, from the central limit theorem and Prokhorov's theorem \cite{Vaart1998}, we have $\frac{1}{\sqrt{n_k p_h}} \sum_{i=1}^{n_k} \sum_{j=1}^{p_k} \left[ \left( Z^{(k, h)}_{ij} \right)^2 - 1 \right] = O_p (1)$. 
In other words, the following equation holds: $\sum_{i=1}^{n_k} \sum_{j=1}^{p_k} \left( Z^{(k, h)}_{ij} \right)^2 = n_k p_h + O_p (m) = O_p (m^2)$.
Based on this result, we obtain 
\begin{align}
\label{eq:Z_F}
\| Z \|_{\mathrm{F}} 
&= \sqrt{\sum_{k=1}^K \sum_{h=1}^H \| Z^{(k, h)} \|_{\mathrm{F}}^2} 
= \sqrt{\sum_{k=1}^K \sum_{h=1}^H \sum_{i=1}^{n_k} \sum_{j=1}^{p_k} \left( Z^{(k, h)}_{ij} \right)^2} \nonumber \\
&= \sqrt{\sum_{k=1}^K \sum_{h=1}^H O_p (m^2)} 
= O_p (\sqrt{KH} m). 
\end{align}

Furthermore, we have
\begin{align}
\label{eq:pp_F}
&\| P - \hat{P} \|_{\mathrm{F}} = \sqrt{\sum_{i=1}^n \sum_{j=1}^p \left(P_{ij} - \hat{P}_{ij} \right)^2} 
= \sqrt{\sum_{i=1}^n \sum_{j=1}^p \left(P_{ij} - \bar{P}_{ij} + \bar{P}_{ij} - \hat{P}_{ij} \right)^2} \nonumber \\
&\leq \sqrt{\sum_{i=1}^n \sum_{j=1}^p \left( \left|P_{ij} - \bar{P}_{ij} \right| + \left|\bar{P}_{ij} - \hat{P}_{ij} \right| \right)^2} \nonumber \\
&\leq \sqrt{\sum_{i=1}^n \sum_{j=1}^p \left[ \left( \max_{i' = 1, \dots, n, j' = 1, \dots, p}  \left|P_{i' j'} - \bar{P}_{i' j'} \right| \right) +  \left|\bar{P}_{ij} - \hat{P}_{ij} \right| \right]^2} \nonumber \\
&\leq \sqrt{\sum_{i=1}^n \sum_{j=1}^p \left[ \left(\max_{\substack{k = 1, \dots, K, h = 1, \dots, H,\\k' = 1, \dots, K, h' = 1, \dots, H}} \left| B_{kh} -B_{k' h'} \right| \right) + \left| \bar{P}_{ij} - \hat{P}_{ij} \right| \right]^2} \nonumber \\
&\leq \sqrt{\sum_{i=1}^n \sum_{j=1}^p \left(\max_{\substack{k = 1, \dots, K, h = 1, \dots, H,\\k' = 1, \dots, K, h' = 1, \dots, H}} \left| B_{kh} -B_{k' h'} \right| + \max_{i = 1, \dots, n, j = 1, \dots, p} \left| \bar{P}_{ij} - \hat{P}_{ij} \right| \right)^2} \nonumber \\
&= \sqrt{np} \left(\max_{\substack{k = 1, \dots, K, h = 1, \dots, H,\\k' = 1, \dots, K, h' = 1, \dots, H}} \left| B_{kh} -B_{k' h'} \right| + \max_{i = 1, \dots, n, j = 1, \dots, p} \left| \bar{P}_{ij} - \hat{P}_{ij} \right| \right) \nonumber \\
&\leq \sqrt{np} \left[ \max_{\substack{k = 1, \dots, K, h = 1, \dots, H,\\k' = 1, \dots, K, h' = 1, \dots, H}} \left| B_{kh} -B_{k' h'} \right| + O_p \left( \frac{KH}{\sqrt{m}} \right) \right] \nonumber \\
&= O_p (m +KH\sqrt{m}). 
\end{align}
Here, to derive the last inequality in (\ref{eq:pp_F}), we used the assumption that (\ref{eq:o_block_un}) holds for the block with the maximum difference between $\bar{P}$ and $\hat{P}$. In the final equation, we used the fact that $\max_{\substack{k = 1, \dots, K, h = 1, \dots, H,\\k' = 1, \dots, K, h' = 1, \dots, H}} |B_{kh} -B_{k' h'}|$ is bounded by a finite constant. 

By combining (\ref{eq:sigma_hat}), (\ref{eq:Z_F}), and (\ref{eq:pp_F}), we obtain $\hat{\sigma}^*  = O_p (KH)$. 
\end{proof}

\newpage
\section{Proof of the asymptotic ICL in the Bernoulli case}
\label{ap_icl}

\begin{proof}
From Lemma 4.2 in \cite{Keribin2012}, the resulting asymptotic ICL is given by
\begin{align}
\label{eq:icl}
\mathrm{ICL} (K_0, H_0) &= \max_{\pi, \rho, B} \log p(A, \hat{g}^{(1)}, \hat{g}^{(2)} | \pi, \rho, B) \nonumber \\
&- \frac{K_0 - 1}{2} \log n - \frac{H_0 - 1}{2} \log p - \frac{K_0 H_0}{2} \log (np). 
\end{align}
In regard to the first term in (\ref{eq:icl}), we consider the following optimization problem: 
\begin{align}
\label{eq:max_logp}
&\max_{\pi, \rho, B} \log p(A, \hat{g}^{(1)}, \hat{g}^{(2)} | \pi, \rho, B), \nonumber \\
\mathrm{s.t.}\ &\sum_{k=1}^{K_0} \pi_k = 1,\ \pi_k \geq 0\ \mathrm{for\ all}\ k,\ \sum_{h=1}^{H_0} \rho_h = 1,\ \rho_h \geq 0\ \mathrm{for\ all}\ h, \nonumber \\
&0 \leq B_{kh} \leq 1\ \mathrm{for\ all}\ (k, h). 
\end{align}
The above problem is solved with the Lagrangian undetermined multiplier method, which employs
\begin{align}
\label{eq:lagrange1}
&f \equiv \log p(A, \hat{g}^{(1)}, \hat{g}^{(2)} | \pi, \rho, B) - \xi_1 \sum_{k=1}^{K_0} \pi_k - \xi_2 \sum_{h=1}^{H_0} \rho_h, \nonumber \\
&= \sum_{k=1}^{K_0} |I_k| \log \pi_k + \sum_{h=1}^{H_0} |J_h| \log \rho_h - \xi_1 \sum_{k=1}^{K_0} \pi_k - \xi_2 \sum_{h=1}^{H_0} \rho_h \nonumber \\
&+ \sum_{k=1}^{K_0} \sum_{h=1}^{H_0} \sum_{i \in I_k, j \in J_h} \left[ A_{ij} \log B_{kh} + (1 - A_{ij}) \log \left( 1 - B_{kh} \right) \right], \\
\label{eq:lagrange2}
&\frac{\partial f}{\partial \pi_k} = \frac{\partial f}{\partial \rho_h} = \frac{\partial f}{\partial B_{kh}} = 0\ \mathrm{for\ all}\ k,\ h. 
\end{align}
By substituting (\ref{eq:lagrange1}) into (\ref{eq:lagrange2}), we have
\begin{align}
\label{eq:solution_B}
&\frac{|I_k|}{\pi_k} = \xi_1, \ \ \ 
\frac{|J_h|}{\rho_h} = \xi_2, \ \ \ 
\sum_{i \in I_k, j \in J_h} \left[ \frac{A_{ij}}{B_{kh}} - \frac{1 - A_{ij}}{1 - B_{kh}} \right] = 0 \nonumber \\
\iff &\pi_k = \frac{|I_k|}{\xi_1}, \ \ \ 
\rho_h = \frac{|J_h|}{\xi_2}, \ \ \ 
B_{kh} = \frac{\sum_{i \in I_k, j \in J_h} A_{ij}}{|I_k| |J_h|}, 
\end{align}
for all $(k, h)$. In regard to $\{ \pi_k \}$ and $\{ \rho_h \}$, from the conditions in (\ref{eq:max_logp}), $\sum_k |I_k| = \xi_1$ and $\sum_h |J_h| = \xi_2$ hold and thus we finally have
\begin{align}
\label{eq:solution_pr}
\pi_k = \frac{|I_k|}{\sum_{k=1}^{K_0} |I_k|}, \ \ \ 
\rho_h = \frac{|J_h|}{\sum_{h=1}^{H_0} |J_h|}. 
\end{align}
We can easily check that the solutions of (\ref{eq:solution_B}) and (\ref{eq:solution_pr}) satisfy all the conditions in (\ref{eq:max_logp}). 

Finally, by substituting the above results into (\ref{eq:icl}), we have
\begin{align}
\mathrm{ICL} (K_0, H_0) &= \sum_{k=1}^{K_0} |I_k| \log \left( \frac{|I_k|}{\sum_{k=1}^{K_0} |I_k|} \right) + \sum_{h=1}^{H_0} |J_h| \log \left( \frac{|J_h|}{\sum_{h=1}^{H_0} |J_h|} \right) \nonumber \\
&+ \sum_{k=1}^{K_0} \sum_{h=1}^{H_0} \left( \sum_{i \in I_k, j \in J_h} A_{ij} \right) \log \left( \frac{\sum_{i \in I_k, j \in J_h} A_{ij}}{|I_k| |J_h|} \right) \nonumber \\
&+ \sum_{k=1}^{K_0} \sum_{h=1}^{H_0} \left( |I_k| |J_h| - \sum_{i \in I_k, j \in J_h} A_{ij} \right) \log \left( 1 - \frac{\sum_{i \in I_k, j \in J_h} A_{ij}}{|I_k| |J_h|} \right) \nonumber \\
&- \frac{K_0 - 1}{2} \log n - \frac{H_0 - 1}{2} \log p - \frac{K_0 H_0}{2} \log (np) \nonumber \\
&= \sum_{k=1}^{K_0} |I_k| \log \left( \frac{|I_k|}{n} \right) + \sum_{h=1}^{H_0} |J_h| \log \left( \frac{|J_h|}{p} \right) \nonumber \\
&+ \sum_{k=1}^{K_0} \sum_{h=1}^{H_0} |I_{k}| |J_{h}| \left[ \hat{B}_{k h} \log \hat{B}_{k h} + \left( 1 - \hat{B}_{k h} \right) \log \left( 1 - \hat{B}_{k h} \right) \right] \nonumber \\
&- \frac{K_0 - 1}{2} \log n - \frac{H_0 - 1}{2} \log p - \frac{K_0 H_0}{2} \log (np). 
\end{align}
Note that we have defined $\hat{B}_{k h}$ as in (\ref{eq:BPS_hat}). 
\end{proof}

\end{appendices}


\clearpage
\bibliographystyle{abbrv}
\bibliography{paper}

\end{document}